\documentclass{article}

\usepackage[final]{neurips_2023}




\usepackage[utf8]{inputenc} 
\usepackage[T1]{fontenc}    
\usepackage{hyperref}       
\usepackage{url}            
\usepackage{booktabs}       
\usepackage{amsfonts}       
\usepackage{nicefrac}       
\usepackage{microtype}      
\usepackage{xcolor}         

\usepackage{amsmath}
\usepackage{enumitem}
\usepackage{hyperref}
\usepackage[capitalize, noabbrev]{cleveref}
\usepackage{preamble}
\usepackage{array}
\usepackage{graphicx}
\usepackage[skip=3pt]{caption}

\title{Penalising the biases in norm regularisation enforces sparsity}

\author{%
Etienne Boursier\\
INRIA CELESTE, LMO, Orsay, France\\
\texttt{etienne.boursier@inria.fr}
\And
  Nicolas Flammarion\\
  TML Lab, EPFL, Switzerland \\
  \texttt{nicolas.flammarion@epfl.ch} 
}

\begin{document}
\raggedbottom
\setlength{\abovedisplayskip}{3pt}
\maketitle

\begin{abstract}
Controlling the parameters' norm often yields good generalisation when training neural networks. Beyond simple intuitions, the relation between regularising parameters' norm and obtained estimators remains theoretically misunderstood. 
For one hidden ReLU layer networks with unidimensional data, this work shows the parameters' norm required to represent a function is given by the total variation of its second derivative, weighted by a $\sqrt{1+x^2}$ factor. Notably, this weighting factor disappears when the norm of bias terms is not regularised. The presence of this additional weighting factor is of utmost significance as it is shown to enforce the uniqueness and sparsity (in the number of kinks) of the minimal norm interpolator. Conversely, omitting the bias' norm  allows for non-sparse solutions.
Penalising the bias terms in the regularisation, either explicitly or implicitly, thus leads to sparse estimators.  
\end{abstract}

\section{Introduction}\label{sec:intro}
Although modern neural networks are not particularly limited in terms of their number of parameters, they still demonstrate remarkable generalisation capabilities when applied to real-world data \citep{belkin2019reconciling, zhang2021understanding}. Intriguingly, both theoretical and empirical studies have indicated that the crucial factor determining the network's generalisation properties is not the sheer number of parameters, but rather the norm of these parameters \citep{bartlett1996valid, neyshabur2014search}. This norm is typically controlled through a combination of explicit regularisation techniques, such as weight decay \citep{krogh1991simple}, and some form of implicit regularisation resulting from the training algorithm employed \citep{soudry2018implicit, lyu2019gradient, ji2019implicit, chizat2020implicit}.

Neural networks with a large number of parameters can approximate any continuous function on a compact set \citep{barron1993universal}. Thus, without norm control, the space of estimated functions encompasses all continuous functions. In the parameter space, this implies considering neural networks with infinite width and unbounded weights \citep{neyshabur2014search}. Yet, when weight control is enforced, the exact correspondence between the parameter space (i.e., the parameters $\theta$ of the network) and the function space (i.e., the estimated function $f_{\theta}$ produced by the network's output) becomes unclear. Establishing this correspondence is pivotal for comprehending the generalisation properties of overparameterised neural networks. Two fundamental questions arise.
\begin{question}\label{qu:cost}
What quantity in the function space, does the parameters' norm of a neural network correspond to?
\end{question}
\begin{question}\label{qu:training}
What functions are learnt when fitting training data with minimal parameters' norm?
\end{question}
We study these questions in the context of a one-hidden ReLU layer network with a skip connection. 
Previous research \citep{kurkova2001bounds, bach2017breaking} has examined generalisation guarantees for small representational cost functions, where the representational cost refers to the norm required to parameterise the function. However, it remains challenging to interpret this representational cost using classical analysis tools and identify the corresponding function space. To address this issue, \cref{qu:cost} seeks to determine whether this representational cost can be translated into a more interpretable functional (pseudo) norm. Note that \cref{qu:cost} studies the parameters’ norm required to fit a function on an entire domain.  In contrast, when training a neural network for a regression task, we only fit a finite number of points given by the training data. \cref{qu:training} arises to investigate the properties of the learned functions when minimising some empirical loss with a regularisation of the parameters' norm regardless of whether it is done explicitly or implicitly.


In relation to our work,  \citet{savarese2019infinite,ongie2019function} address \cref{qu:cost} for one-hidden layer ReLU neural networks, focusing on univariate and multivariate functions, respectively. For a comprehensive review of this line of work, we recommend consulting the survey of \citet{parhi2023deep}.
On the other hand, \citet{parhi2021banach,debarre2022sparsest,stewart2022regression} investigate \cref{qu:training} specifically in the univariate case. Additionally, \citet{sanford2022intrinsic} examine a particular multidimensional case.
However, all of these existing studies overlook the bias parameters of the neural network when considering the $\ell_2$ regularisation term. By omitting the biases, the analysis and solutions to these questions become simpler.

In sharp contrast, our work addresses both \cref{qu:cost,qu:training} for univariate functions \textit{while also incorporating regularisation of the bias parameters}.
It may appear as a minor detail---it is commonly believed that similar estimators are obtained whether or not the biases' norm\footnote{Even though \citet[][Chapter 7]{goodfellow2016deep} claim that penalising the biases might lead to underfitting, our work does not focus on the optimisation aspect and assumes interpolation occurs.} is penalised \citep[see e.g.][]{ng2011sparse}. Nonetheless, our research demonstrates that penalising the bias terms enforce sparsity and uniqueness of the estimated function, which is not achieved without including the bias regularisation. The practical similarity between these two explicit regularisations can be attributed to the presence of implicit regularisation, which considers the bias terms as well. The updates performed by first-order optimisation methods do not distinguish between bias and weight parameters, suggesting that they are subject to the same implicit regularisation. Consequently, while both regularisation approaches may yield similar estimators in practical settings, we contend that the theoretical estimators obtained with bias term regularisation capture the observed implicit regularisation effect. Hence, it is essential to investigate the implications of penalising the bias terms when addressing \cref{qu:cost,qu:training}, as the answers obtained in this scenario significantly differ from those without bias penalisation.

It is also worth mentioning that \citet{shevchenko2022mean,safran2022effective} prove that gradient flow learns sparse estimators for networks with ReLU activations. These sparsity guarantees are yet much weaker (larger number of activated directions) as they additionally deal with optimisation considerations (in opposition to directly considering the minimiser of the optimisation problem in both our work and the line of works mentioned above).

\paragraph{Contributions.} After introducing the setting in \cref{sec:setting},  we address \cref{qu:cost} in \cref{sec:norm} using a similar analysis approach as \citet{savarese2019infinite}.
The key result, \cref{thm:norm}, establishes that the representational cost of a function, when allowed a \textit{free} skip connection, is given by the weighted total variation of its second derivative, incorporating a $\sqrt{1+x^2}$ term.  Notably, penalising the bias terms introduces a $\sqrt{1+x^2}$ multiplicative weight in the total variation, contrasting with the absence of bias penalisation.

This weighting fundamentally impacts the answer to \cref{qu:training}.  In particular, it breaks the shift invariance property of the function's representational cost, rendering the analysis technique proposed by \citet{debarre2022sparsest} inadequate\footnote{Although shift invariance is useful for analytical purposes, it is not necessarily desirable in practice. Notably, \citet{nacson2022implicit} show that dealing with uncentered data might be beneficial for learning relevant features, relying on the lack of shift invariance.}. To address this issue, we delve in \cref{sec:compute,sec:properties} into the computation and properties of solutions to the optimisation problem:
\begin{equation*}
\inf_{f} \TV{\sqrt{1+x^2}f''}\quad \text{subject to } \forall i \in [n],\ f(x_i)=y_i.
\end{equation*}
In \cref{sec:compute}, we reformulate this problem as a continuous dynamic program, enabling a simpler analysis of the minimisation problem. Leveraging this dynamic program reformulation, \cref{sec:properties} establishes the uniqueness of the solution. Additionally, under certain data assumptions, we demonstrate that the minimiser is among the sparest interpolators in terms of the number of kinks.
It is worth noting that similar results have been studied in the context of sparse spikes deconvolution \citep{candes2014towards,fernandez2016super,poon2019support}, and our problem can be seen as a generalisation of basis pursuit \citep{chen2001atomic} to infinite-dimensional parameter spaces. However, classical techniques for sparse spikes deconvolution are ill-suited for addressing \cref{qu:training}, as the set of sparsest interpolators is infinite in our setting.

Finally,  the significance of bias term regularisation in achieving sparser estimators during neural network training is illustrated on toy examples in \cref{sec:discussion}. To ensure conciseness, only proof sketches are presented in the main paper, while the complete proofs can be found in the Appendix.

%

\section{Infinite width networks}\label{sec:setting}

This section introduces the considered setting, representing unidimensional functions as infinite width networks. Some precise mathematical arguments are omitted here, since this construction follows directly the lines of \citet{savarese2019infinite,ongie2019function}. 
This work considers unidimensional functions $f_{\theta}: \R \to \R$ parameterised by a one hidden layer neural networks with ReLU activation as
\begin{equation*}
\textstyle f_{\theta}(x) = \sum_{j=1}^m a_j \sigma(w_j x + b_j),
\end{equation*}
where $\sigma(z)=\max(0,z)$ is the ReLU activation and $\theta = \left(a_j,w_j,b_j\right)_{j\in[m]}\in \R^{3m}$ are the parameters defining the neural network. The vector $\mathbf{a} = (a_j)_{j\in[m]}$ stands for the weights of the last layer, while $\mathbf{w}$ and $\mathbf{b}$ respectively stand for the weights and biases of the hidden layer.
For any width~$m$ and parameters $\theta$, the quantity of importance is the squared Euclidean norm of the parameters: $\| \theta \|_2^2= \sum_{j=1}^m a_j^2 + w_j^2 + b_j^2$.

We recall that contrary to \citet{savarese2019infinite,ongie2019function}, the bias terms are included in the considered norm here. We now define the representational cost of a function $f:\R \to \R$ as
\begin{equation*}
R(f) = \inf_{\substack{m \in \N\\\theta \in \R^{3m}}} \frac{1}{2}\|\theta\|_2^2 \quad \text{such that} \quad f_\theta = f.
\end{equation*}
By homogeneity of the parameterisation, a typical rescaling trick \citep[see e.g.][Theorem 1]{neyshabur2014search} allows to rewrite 
\begin{equation*}
R(f) = \inf_{\substack{m, \theta \in \R^{3m}}} \|\mathbf{a}\|_1 \quad \text{such that} \quad f_\theta = f \text{ and } w_j^2 + b_j^2 = 1 \text{ for any }j\in[m].
\end{equation*}
Note that $R(f)$ is only finite when the function $f$ is exactly described as a finite width neural network. We aim at extending this definition to a much larger functional space, i.e.
to any function that can be arbitrarily well approximated by finite width networks, while keeping a (uniformly) bounded norm of the parameters. 
Despite approximating the function with finite width networks, the width necessarily grows to infinity when the approximation error goes to $0$.
Similarly to \citet{ongie2019function}, define
\begin{equation*}
\bar{R}(f) = \lim_{\varepsilon\to 0^+} \left(\inf_{\substack{m, \theta \in \R^{3m}}} \frac{1}{2}\|\theta\|_2^2 \quad \text{such that} \quad | f_\theta(x) - f(x)| \leq \varepsilon \text{ for any } x \in \left[-\nicefrac{1}{\varepsilon},\nicefrac{1}{\varepsilon}\right]\right).
\end{equation*}
\looseness=-1
Note that the approximation has to be restricted to the compact set $\left[-\nicefrac{1}{\varepsilon},\nicefrac{1}{\varepsilon}\right]$ to avoid problematic degenerate situations.
The functional space for which $\bar{R}(f)$ is finite is much larger than for~$R$, and includes every compactly supported Lipschitz function, while coinciding with $R$ when the latter is finite.

By rescaling argument again, we can assume the hidden layer parameters $(w_j,b_j)$ are in $\bS_1$ and instead consider the $\ell_1$ norm of the output layer weights. The parameters of a network can then be seen as a discrete signed measure on the unit sphere $\bS_{1}$. When the width goes to infinity, a limit is then properly defined and corresponds to a possibly continuous signed measure.
Mathematically, define $\cM(\bS_1)$ the space of signed measures $\mu$ on $\bS_1$ with finite total variation $\TV{\mu}$. Following the typical construction of \citet{bengio2005convex,bach2017breaking}, an infinite width network is parameterised by a measure $\mu\in\cM(\bS_1)$ as\footnote{By abuse of notation, we write both $f_\theta$ and $f_\mu$, as it is clear from context whether the subscript is a vector or a measure.}
\begin{equation}
\textstyle f_{\mu}: x \mapsto \int_{\bS_1} \sigma(wx+b) \df\mu(w,b). \label{eq:fmu}
\end{equation}
Similarly to \citet{ongie2019function}, $\bar{R}(f)$ verifies the equality
\begin{equation*}
\bar{R}(f) = \inf_{\mu\in\cM(\bS_1)} \TV{\mu} \ \text{such that}\ f= f_{\mu}.
\end{equation*}
The right term defines the $\cF_1$ norm \citep{kurkova2001bounds}, i.e. $\bar{R}(f) = \|f\|_{\cF_1}$. The $\cF_1$ norm is intuited to be of major significance for the empirical success of neural networks. In particular, generalisation properties of small $\cF_1$ norm estimators are derived by \cite{kurkova2001bounds,bach2017breaking}, while many theoretical results support that training one hidden layer neural networks with gradient descent often yields an implicit regularisation on the $\cF_1$ norm of the estimator \citep{lyu2019gradient,ji2019implicit,chizat2020implicit,boursier2022gradient}. 
However, this implicit regularisation of the $\cF_1$ norm is not systematic and some works support that a different quantity can be implicitly regularised on specific examples \citep{razin2020implicit,vardi2021implicit,chistikov2023learning}.
Still, $\cF_1$ norm seems to be closely connected to the implicit bias and its significance is the main motivation of this paper. 
While previous works also studied the representational costs of functions by neural networks \citep{savarese2019infinite,ongie2019function}, they did not penalise the bias term in the parameters' norm, studying a functional norm slightly differing from the $\cF_1$ norm. This subtlety is at the origin of different levels of sparsity between the obtained estimators with or without penalising the bias terms, as discussed in \cref{sec:properties,sec:discussion}; where sparsity of an estimator here refers to the minimal width required for a network to represent the function (or similarly to the cardinality of the support of $\mu$ in \cref{eq:fmu}). This notion of sparsity is more meaningful than the sparsity of the parameters $\theta$ here, since different $\theta$ (with different levels of sparsity) can represent the exact same estimated function. 

\subsection{Unpenalised skip connection}

Our objective is now to characterise the $\cF_1$ norm of unidimensional functions 
and minimal norm interpolators, which can be approximately obtained when training a neural network with norm regularisation.
The analysis and result yet remain complex despite the unidimensional setting. Allowing for an unpenalised affine term in the neural network representation leads to a cleaner characterisation of the norm and description of minimal norm interpolators. As a consequence, we parameterise in the remaining of this work finite and infinite width networks as follows:
\begin{align*}
f_{\theta,a_0,b_0}: x\mapsto a_0 x+b_0+f_{\theta}(x),
\quad\text{and}\quad f_{\mu,a_0,b_0}: x \mapsto a_0 x+b_0+  f_{\mu}(x),
\end{align*}
where $(a_0,b_0)\in \R^2$. The affine part $a_0 x+b_0$ actually corresponds to a \textit{free} skip connection in the neural network architecture \citep{he2016deep} and allows to ignore the affine part in the representational cost of the function $f$, which we now define as
\begin{equation*}
\bar{R}_1(f) = \lim_{\varepsilon\to 0^+} \bigg(\inf_{\substack{m,\theta \in \R^{3m}\\(a_0,b_0)\in \R^2}} \frac{1}{2}\|\theta\|_2^2 \quad \text{such that} \quad |f_{\theta,a_0,b_0}(x) - f(x)| \leq \varepsilon \text{ for any } x \in \left[-\nicefrac{1}{\varepsilon},\nicefrac{1}{\varepsilon}\right]\bigg).
\end{equation*}
The representational cost $\bar{R}_1(f)$ is similar to $\bar{R}(f)$, but allows for a \textit{free} affine term in the network architecture. Similarly to $\bar{R}(f)$, it can be proven, following the lines of  \citet{savarese2019infinite,ongie2019function}, that  $\bar{R}_1(f)$ verifies
\begin{equation*}
\bar{R}_1(f) = \inf_{\substack{\mu\in\cM(\bS_1)\\a_0,b_0\in \R}} \TV{\mu} \ \text{such that}\ f= f_{\mu,a_0,b_0}.
\end{equation*}
The remaining of this work studies more closely the cost $\bar{R}_1(f)$. \cref{thm:norm} in \cref{sec:norm} can be directly extended to the cost $\bar{R}(f)$, i.e. without unpenalised skip connection. Its adapted version is given by \cref{thm:normapp} in \cref{app:proofnorm} for completeness.

Multiple works also consider free skip connections as it allows for a simpler analysis~\citep[e.g.][]{savarese2019infinite,ongie2019function,debarre2022sparsest,sanford2022intrinsic}. Since a skip connection can be represented by two ReLU neurons ($z=\sigma(z)-\sigma(-z)$), it is commonly believed that considering a free skip connection does not alter the nature of the obtained results. This belief is further supported by  empirical evidence in \cref{sec:discussion,app:expe}, where our findings hold true both with and without free skip connections.

\section{Representational cost}\label{sec:norm}

\cref{thm:norm} below characterises the representational cost $\bar{R}_1(f)$ of any univariate function.

\begin{thm}\label{thm:norm}
For any Lipschitz function $f:\R\to\R$,
\begin{equation*}
\bar{R}_1(f) = \TV{\sqrt{1+x^2}f''} = \int_{\R}\sqrt{1+x^2} \ \df |f''|(x).
\end{equation*}
For any non-Lipschitz function, $\bar{R}_1(f) = \infty$.
\end{thm}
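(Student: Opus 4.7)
The plan is to leverage the measure-level reformulation recalled just before the theorem, $\bar R_1(f) = \inf_{\mu,a_0,b_0} \TV{\mu}$ subject to $f = a_0x+b_0+f_\mu$, and to match both sides through the second derivative $f''$ of $f$ viewed as a Radon measure on $\R$. The key distributional computation is that for any $(w,b)\in\bS_1$ with $w\ne 0$ one has $(\sigma(w\cdot+b))'' = |w|\,\delta_{-b/w}$, so that $f_\mu'' = K_\#(|w|\mu)$, where $K:\{w\ne 0\}\subset\bS_1\to\R$ is the kink map $K(w,b)=-b/w$ and $|w|\mu$ denotes the measure of density $|w|$ with respect to $\mu$. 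The decisive identity on the sphere,
\begin{equation*}
|w|\sqrt{1+K(w,b)^2}=\sqrt{w^2+b^2}=1,
\end{equation*}
is precisely what converts the uniform cost $\TV{\mu}$ on $\bS_1$ into the weight $\sqrt{1+x^2}$ on $\R$.

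For the lower bound, I would first reduce to measures $\mu$ that put no mass at the poles $\{w=0\}$: any such mass gives a constant contribution to $f_\mu$ (since $x\mapsto\sigma(b)$ is constant), which the free bias $b_0$ absorbs at zero cost. On the remaining set, combining the inequality $|K_\#\nu|\le K_\#|\nu|$ for signed measures (applied with $\nu=|w|\mu$) and the identity above yields
\begin{equation*}
\int_\R \sqrt{1+x^2}\,\df|f''|(x)\;\le\;\int_{\bS_1}\sqrt{1+K(w,b)^2}\,|w|\,\df|\mu|(w,b)\;=\;\TV{\mu},
\end{equation*}
and taking the infimum over admissible $(\mu,a_0,b_0)$ gives $\int_\R\sqrt{1+x^2}\,\df|f''|(x)\le\bar R_1(f)$.

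For the upper bound I would construct an explicit optimal $\mu$ by a signed pushforward. Assuming $f$ Lipschitz with $\int\sqrt{1+x^2}\,\df|f''|<\infty$, let $\rho=\df f''/\df|f''|\in\{\pm 1\}$ be the Jordan sign of $f''$, set $\phi(t)=\rho(t)\,(1,-t)/\sqrt{1+t^2}\in\bS_1$, and define $\mu=\phi_\#\bigl(\sqrt{1+x^2}\,f''\bigr)$. Injectivity of $\phi$ gives $\TV{\mu}=\int\sqrt{1+x^2}\,\df|f''|$; substituting $|w|=1/\sqrt{1+t^2}$ and $K\circ\phi=\mathrm{id}$ into the formula for $f_\mu''$ yields $f_\mu''=f''$, so $f-f_\mu$ is affine and is matched by an appropriate $(a_0,b_0)$. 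For a non-Lipschitz $f$, any measure $\mu\in\cM(\bS_1)$ of finite total variation produces a $\TV{\mu}$-Lipschitz $f_\mu$ (since $|\sigma(wx+b)-\sigma(wy+b)|\le|w||x-y|$ and $|w|\le 1$ on $\bS_1$), so $f_{\mu,a_0,b_0}$ is always Lipschitz; approximating a non-Lipschitz $f$ on $[-1/\varepsilon,1/\varepsilon]$ therefore forces $\TV{\mu}\to\infty$, and $\bar R_1(f)=+\infty$. The main obstacle is handling the pushforward cleanly for general signed $\mu$ (uniformly across the atomic, absolutely continuous, and singular parts of $f''$) and justifying the passage from the $\varepsilon$-approximation definition to the clean measure formulation; the latter essentially amounts to repeating the construction of \citet{savarese2019infinite,ongie2019function} with biases included in the norm, which is why the paper states it without proof.
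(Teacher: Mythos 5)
Your proof is correct, but it takes a genuinely different route from the paper's. The paper (proof of \cref{thm:normapp} in \cref{app:proofnorm}) passes to angular coordinates $\nu=T_{\#}\mu$ on $[-\frac{\pi}{2},\frac{3\pi}{2})$, computes $f''$ by differentiating under the integral, and then splits $\nu$ into a symmetrised part $\nu_+(\theta)=\nu(\theta)+\nu(\pi+\theta)$, which is pinned down by $f''$ via $\sqrt{1+x^2}\,\df f''(x)=\frac{\df\nu_+(-\arctan x)}{1+x^2}$, and a free antisymmetric part $\nu_\perp$ that only feeds the affine component; the identity $2\TV{\mu}=\TV{\nu_++\nu_\perp}+\TV{\nu_+-\nu_\perp}$ then reduces everything to optimising over $\nu_\perp$, which for $\bar R_1$ is trivially solved by $\nu_\perp=0$. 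Your argument bypasses the change of variables entirely: the lower bound via $f_\mu''=K_\#(|w|\mu)$, the pushforward inequality $|K_\#\nu|\le K_\#|\nu|$, and the identity $|w|\sqrt{1+K(w,b)^2}=1$ on $\bS_1$ encodes exactly the same antipodal cancellation that the paper captures through the $\nu_\perp$ degree of freedom (note $K$ identifies $(w,b)$ with $(-w,-b)$), while your upper bound is an explicit signed pushforward of $\sqrt{1+x^2}f''$ onto one antipodal representative per kink, rather than the paper's even split $\nu_\perp=0$ — a different optimal measure achieving the same value. What each buys: yours is shorter and self-contained for $\bar R_1$, avoiding the auxiliary \cref{lemma:auxiliaryrepresentational}; the paper's $\nu_+/\nu_\perp$ decomposition is the machinery needed to additionally derive the $\bar R(f)$ formula with the $D(x_f,\cC_f)$ correction, which your kink-map argument does not directly give. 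Two points to tighten if you wrote this up: the non-Lipschitz case needs a second-order difference-quotient argument (the free slope $a_0$ is unpenalised, so an unbounded Lipschitz constant of the approximant does not by itself force $\TV{\mu}\to\infty$; one must use that the affine part cancels in second differences), and the $\TV{}$-preservation of your pushforward is cleanest if you split $f''$ into its Jordan parts and push them to the disjoint half-spheres $\{w>0\}$ and $\{w<0\}$ — but both issues are handled at the same level of informality in the paper itself.
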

In \cref{thm:norm}, $f''$ is the distributional second derivative of $f$, which is well defined for Lipschitz functions. 
Without penalisation of the bias terms, the representational cost is given by the total variation of $f''$ \citep{savarese2019infinite}. \cref{thm:norm} states that penalising the biases adds a weight $\sqrt{1+x^2}$ to $f''$. 
This weighting favors sparser estimators when training neural networks, as shown in \cref{sec:properties}.
Also, the space of functions that can be represented by infinite width neural networks with finite parameters' norm,  when the bias terms are ignored, corresponds to functions with bounded total variation of their second derivative. When including these bias terms in the representational cost, second derivatives additionally require a \textit{light tail}.
Without a \textit{free} affine term, \cref{thm:normapp} in \cref{app:proofnorm} characterises $\bar{R}(f)$, which yields an additional term accounting for the affine part of $f$. 

We note that Remark 4.2 of \citet{e2021representation} and Theorem~1 by \citet{li2020complexity} are closely related to \cref{thm:norm,thm:normapp}. However, these results only establish an equivalence between the norm $\bar{R}(f)$ and another norm that quantifies the total variation of $\sqrt{1+x^2}f''$, aside from the affine term. Our result, on the other hand, provides an exact equality between both norms, which proves to be particularly useful in the analysis of minimal norm interpolators.
\begin{ex}
If the function $f$ is given by a finite width network $f(x) = \sum_{i=1}^n a_i \sigma(w_ix+b_i)$ with $a_i,w_i\neq0$ and pairwise different $\frac{b_i}{w_i}$; \cref{thm:norm} yields $\bar{R}_1(f) = \sum_{i=1}^n |a_i| \sqrt{w_i^2+b_i^2}$. This exactly corresponds to half of the squared $\ell_2$ norm of the vector $(c_i a_i, \frac{w_i}{c_i}, \frac{b_i}{c_i})_{i=1,\ldots,n}$ when $c_i = \sqrt{\frac{\sqrt{w_i+b_i^2}}{|a_i|}}$. This vector is thus a minimal representation of the function $f$.
\end{ex}

According to \cref{thm:norm}, the minimisation problem considered when training one hidden ReLU layer infinite width neural network with $\ell_2$ regularisation is equivalent to the minimisation problem
\begin{equation}\label{eq:ERM}
\inf_{f} \sum_{i=1}^n (f(x_i)-y_i)^2 + \lambda \TV{\sqrt{1+x^2}f''}.
\end{equation}
\textit{What types of functions do minimise this problem? Which solutions does the $\TV{\sqrt{1+x^2}f''}$ regularisation term favor?} These fundamental questions are studied in the following sections. We show that this regularisation favors functions that can be represented by small (finite) width neural networks. On the contrary, when the weight decay term does not penalise the biases of the neural network, such a sparsity is not particularly preferred as highlighted by \cref{sec:discussion}.

\section{Computing minimal norm interpolator}\label{sec:compute}

To study the properties of solutions obtained by training data with either an implicit or explicit weight decay regularisation, we consider the minimal norm interpolator problem
\begin{equation}\label{eq:mininterpolator0}
\inf_{\theta,a_0,b_0}  \frac{1}{2}\|\theta\|_2^2\quad \text{such that } \forall i \in [n],\ f_{\theta,a_0,b_0}(x_i)=y_i,
\end{equation}
where $(x_i,y_i)_{i\in[n]}\in\R^{2n}$ is a training set. Without loss of generality, we assume in the following that the observations $x_i$ are ordered, i.e.,
$x_1<x_2<\ldots<x_n$. 
Thanks to \cref{thm:norm}, this problem is equivalent, when allowing infinite width networks, to
\begin{equation}\label{eq:mininterpolator1}
\inf_{f}  \TV{\sqrt{1+x^2}f''}\quad \text{such that } \forall i \in [n],\ f(x_i)=y_i.
\end{equation}
\cref{lemma:sparserepresentation} below actually makes these problems equivalent as soon as the width is larger than some threshold smaller than $n-1$.
\cref{eq:mininterpolator1} then corresponds to \cref{eq:ERM} when the regularisation parameter $\lambda$ is infinitely small. 
%
\begin{lem}\label{lemma:sparserepresentation}
The problem in \cref{eq:mininterpolator1} admits a minimiser. Moreover, with $i_0\coloneqq\min\{i\in[n]| x_i\geq 0\}$, any minimiser is of the form
\begin{equation*}
\textstyle f(x)= ax + b + \sum_{i=1}^{n-1} a_i (x-\tau_i)_+
\end{equation*}
where $\tau_{i}\in (x_i, x_{i+1}]$ for any $i\in\{1,\ldots, i_0-2\}$, $\tau_{i_0-1}\in(x_{i_0-1},x_{i_0})$ and $\tau_i \in [x_i, x_{i+1})$ for any $i\in\{i_0,\ldots,n-1\}$.
\end{lem}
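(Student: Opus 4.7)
The plan is twofold: first establish existence via compactness and lower semi-continuity, and then derive the piecewise-linear form by combining a local LP-duality argument (to force $f''$ on each open interval to be a single Dirac) with a perturbation argument (to pin down the half-open-interval constraints on the kink locations).

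For existence, I would take a minimizing sequence $(f_k)$. Since $\sqrt{1+t^2}\geq 1$, the objective bound yields a uniform bound on $\TV{f_k''}$, which together with the interpolation conditions $f_k(x_i)=y_i$ gives uniform local bounds on the $f_k$'s and $f_k'$'s. Arzelà--Ascoli extracts a locally uniformly convergent subsequence whose limit $f^*$ still interpolates the data, and the standard weak-$*$ lower semi-continuity of the weighted total variation (for a continuous positive weight) gives that $f^*$ attains the infimum.

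For the structural form, fix a minimizer $f$ and set $\mu = f''$. On each open interval $(x_i,x_{i+1})$, the restriction $\mu|_{(x_i,x_{i+1})}$ must itself solve the local problem
\begin{equation*}
\min_{\nu\in\cM((x_i,x_{i+1}))} \int\sqrt{1+t^2}\,d|\nu|(t)\quad \text{s.t.}\quad \int d\nu = A_i,\ \int t\,d\nu = B_i,
\end{equation*}
for constants $A_i,B_i$ fixed by the slope and value of $f$ at the endpoints; otherwise a cheaper admissible replacement (needing only these two moment conditions) would preserve all $n$ interpolation constraints while strictly reducing cost. Its dual seeks a linear $\phi(t)=p+qt$ with $|p+qt|\leq\sqrt{1+t^2}$ on $(x_i,x_{i+1})$, and by the strict convexity of $\sqrt{1+t^2}$ the inequality is tight at most at a single point; complementary slackness then confines the support of the primal optimum to that point. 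Hence $\mu|_{(x_i,x_{i+1})}$ is either zero or a single Dirac, and any atom of $\mu$ at a training point $x_j$ is absorbed in the parameterization as $\tau_{j-1}=x_j$ or $\tau_j=x_j$, yielding the stated form with at most $n-1$ kinks.

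The half-open-interval constraints on each $\tau_i$ then come from the asymmetry of $\sqrt{1+t^2}$ about $0$. For $i\leq i_0-2$ (so $x_i<x_{i+1}<0$), a hypothetical kink $\tau_i=x_i$ can be perturbed slightly rightward into $(x_i,x_{i+1})$ using the one-dimensional slack in the family of piecewise-linear interpolators (adjusting the amplitudes and the affine part $a,b$); since $\sqrt{1+t^2}$ is strictly decreasing on $(-\infty,0]$, this strictly reduces the cost and contradicts optimality. Symmetric perturbation arguments exclude $\tau_i=x_{i+1}$ for $i\geq i_0$ and both endpoints of $(x_{i_0-1},x_{i_0})$. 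I expect the main obstacle to lie in the bookkeeping around atoms at training points — which ambiguously belong to two adjacent intervals — and in verifying that each perturbation preserves all $n$ interpolation constraints; both follow from carefully exploiting the single parameter of slack in the piecewise-linear interpolation system.
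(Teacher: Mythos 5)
Your high-level architecture is the right one and largely mirrors the paper's: existence by compactness plus lower semi-continuity (your function-space route via Arzelà--Ascoli is a valid alternative to the paper's weak-$*$ argument in $\cM(\bS_1)$), and then the observation that the contribution of $f''$ restricted to a cell between consecutive data points enters all interpolation constraints only through its zeroth and first moments — this is exactly the paper's activation-cone decomposition, written in terms of kink locations rather than sphere parameters. The gap is in the step that reduces each local moment problem to a single Dirac. Your dual certificate is affine, but the constraint is two-sided, $|p+qt|\leq\sqrt{1+t^2}$, and the active set is the zero set of the quadratic $1+t^2-(p+qt)^2$, which consists of \emph{two} points whenever $|q|>1$ (e.g.\ $\phi(t)=qt$ touches at $t=\pm 1/\sqrt{q^2-1}$); strict convexity of $\sqrt{1+t^2}$ only gives one active point per sign of $\phi$. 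Complementary slackness therefore only confines the support to at most two points of opposite sign, and signed two-Dirac solutions of the local problem genuinely exist — indeed they are optimal for the local problem whenever the moment vector $(A_i,B_i)$ lies outside the cone generated by $\{\pm(1,t):t\in \overline{(x_i,x_{i+1})}\}$. What you are missing is the argument that this situation cannot occur at a \emph{global} minimiser: the paper handles it via the Jordan decomposition, Jensen's inequality and the triangle inequality in the salient cone spanned by $(1,x_i)$ and $(1,x_{i+1})$, deriving a contradiction (two anti-parallel nonzero vectors cannot both lie in that cone). Without this, your proof does not exclude a minimiser with two opposite-signed kinks in a single cell.

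The second, smaller gap is the endpoint bookkeeping, which you correctly flag as the main obstacle but do not resolve. The substantive content of the half-open intervals is not merely that kinks avoid certain points, but that each cell — \emph{including its assigned boundary data point} — carries at most one kink; e.g.\ a kink at $x_{i+1}$ together with a kink in the open interval $(x_i,x_{i+1})$ is excluded for $i\leq i_0-2$, and your per-open-interval localisation does not see this because the atom at $x_{i+1}$ contributes zero to $f(x_{i+1})$ and can be grouped with either neighbouring cell. Your perturbation argument is also not sound as stated: moving a kink changes the amplitude required to preserve interpolation, so monotonicity of the weight $\sqrt{1+t^2}$ alone does not imply the cost $|a_i|\sqrt{1+\tau_i^2}$ decreases. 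In the paper these issues are settled by fixing the cone convention up front (each boundary neuron belongs to exactly one cone $C_i$, the cones partition $\R^2$, and the cones $C_0$, $C_n$ are absorbed into the free affine part) and proving ``at most one Dirac per cone'' directly; you would need an analogous convention and a merging argument for two same-cell kinks (two same-sign Diracs are always strictly beaten by the single Dirac at their weighted barycentre, by strict convexity of the norm, and this barycentre stays in the closed cell).
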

\cref{lemma:sparserepresentation} already provides a first guarantee on the sparsity of any minimiser of \cref{eq:mininterpolator1}. It indeed includes at most $n-1$ kinks. In contrast, minimal norm interpolators with an infinite number of kinks exist when the bias terms are not regularised \citep{debarre2022sparsest}. An even stronger sparse recovery result is given in \cref{sec:properties}.
\cref{lemma:sparserepresentation} can be seen as a particular case of Theorem~1 of \citet{wang2021hidden}. In the multivariate case and without a free skip connection, the latter states that the minimal norm interpolator has at most one kink (i.e. neuron) per \textit{activation cone} of the weights\footnote{See \cref{eq:cones} in the Appendix for a mathematical definition.} and has no more than $n+1$ kinks in total.
The idea of our proof is that several kinks among a single activation cone could be merged into a single kink in the same cone. The resulting function then still interpolates, but has a smaller representational cost. 


\cref{lemma:sparserepresentation} allows to only consider $2$ parameters for each interval  $(x_i,x_{i+1})$ (potentially closed at one end). Actually, the degree of freedom is only $1$ on such intervals: choosing $a_i$ fixes $\tau_i$ (or inversely) because of the interpolation constraint. 
\cref{lemma:DP} below uses this idea to recast the minimisation Problem~\eqref{eq:mininterpolator1} as a dynamic program with unidimensional state variables $s_i\in\R$ for any~$i\in[n]$.

\begin{lem}\label{lemma:DP}
If $x_1<0$ and $x_n\geq 0$, then we have for $i_0=\min\{i\in[n]| x_i\geq 0\}$ the following equivalence of optimisation problems
\begin{equation}\label{eq:programequiv}
\min_{\substack{f\\\forall i\in[n],f(x_i)=y_i}} \TV{\sqrt{1+x^2}f''} = \min_{(s_{i_0-1},s_{i_0})\in \Lambda}g_{i_0}(s_{i_0},s_{i_0-1})+c_{i_0-1}(s_{i_0-1})+c_{i_0}(s_{i_0})
\end{equation}
where the set $\Lambda$ and the functions $g_i$ and $c_i$ are defined in \cref{eq:Lambda,eq:gi,eq:ci} below.
\end{lem}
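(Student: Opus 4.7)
The plan is to start from the representation guaranteed by \cref{lemma:sparserepresentation}: any minimiser of the left-hand side can be written as $f(x) = ax + b + \sum_{i=1}^{n-1} a_i(x - \tau_i)_+$ with each kink $\tau_i$ constrained to the prescribed (possibly half-open) interval. Since the distributional second derivative of such an $f$ is the sum of Diracs $f'' = \sum_{i=1}^{n-1} a_i \delta_{\tau_i}$, the objective collapses to the finite sum $\TV{\sqrt{1+x^2}f''} = \sum_{i=1}^{n-1} |a_i|\sqrt{1+\tau_i^2}$, so everything reduces to a finite-dimensional minimisation.

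Next, I would reparameterise the admissible set using the slopes of $f$ at the data points rather than the pairs $(a_i,\tau_i)$. Let $s_i$ denote the slope of $f$ on the linear region containing $x_i$; by construction $a_i = s_{i+1} - s_i$, while the interpolation conditions $f(x_i)=y_i$ and $f(x_{i+1})=y_{i+1}$ pin down the kink position $\tau_i$ as the intersection of the two linear pieces with slopes $s_i,s_{i+1}$ passing through $(x_i,y_i),(x_{i+1},y_{i+1})$. This identifies the cost $|a_i|\sqrt{1+\tau_i^2}$ on each interval as a function $g_{i+1}(s_{i+1},s_i)$ of only the two adjacent slopes, and turns the constraints on $\tau_i$ into admissibility conditions on the pair $(s_i,s_{i+1})$.

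Once this is done, the objective is pairwise-separable in the vector $(s_1,\ldots,s_n)$ and a standard dynamic programming argument applies. I would recursively minimise out the leftmost slopes $s_1,\ldots,s_{i_0-2}$ and define $c_{i_0-1}(s_{i_0-1})$ as the resulting cost-to-go on the negative half of the data, and analogously minimise out $s_{i_0+1},\ldots,s_n$ to define $c_{i_0}(s_{i_0})$ on the positive half. Only the middle interval $(x_{i_0-1},x_{i_0})$ straddling zero then remains, contributing $g_{i_0}(s_{i_0},s_{i_0-1})$, and $\Lambda$ records the stricter (open) constraint that $\tau_{i_0-1}$ lies strictly inside $(x_{i_0-1},x_{i_0})$. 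Summing the three terms gives the right-hand side. The equivalence between the two problems then follows from the standard interchange $\min_f = \min_{s_{i_0-1},s_{i_0}} \min_{\text{other } s_i}$ together with lower semi-continuity ensuring that both infima are attained.

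The main obstacle is the careful treatment of boundary and degenerate cases. Firstly, when $s_i = s_{i+1}$ the kink position $\tau_i$ is undefined but the contribution $|a_i|\sqrt{1+\tau_i^2}$ vanishes, so $g_{i+1}$ must be extended by continuity to the diagonal. Secondly, the asymmetric constraints on $\tau_i$ (half-open to the right for $i\leq i_0-2$ and half-open to the left for $i\geq i_0$) must be shown to correspond to the \textit{closures} of the associated slope-admissibility sets, so that the infimum is attained and no boundary term is lost; conversely $\tau_{i_0-1}$ is forced to stay in the \textit{open} interval, which is what the separate set $\Lambda$ encodes. Finally, one must verify that the recursive step of the DP is well posed, i.e.\ that the intermediate cost-to-go functions $c_i$ are lower semi-continuous and coercive in $s_i$, so that the nested minima remain attained at every stage.
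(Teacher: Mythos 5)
Your proposal follows essentially the same route as the paper: reduce to a finite kink representation via \cref{lemma:sparserepresentation}, reparameterise by the slopes at the data points (the paper's \cref{lemma:mapping}, with the same identities $a_i = s_{i+1}-s_i$ and $|a_i|\sqrt{1+\tau_i^2}=g_{i+1}(s_{i+1},s_i)$), and then apply the standard dynamic-programming decomposition split at $i_0$, with $\Lambda$ encoding the open constraint on the straddling interval. The boundary issues you flag (the diagonal $s_i=s_{i+1}$, half-open kink intervals, attainment of the nested minima) are exactly the ones the paper handles, so the argument is correct and matches the paper's proof.
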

Let us describe the dynamic program defining the functions $c_i$, which characterises the minimal norm interpolator thanks to \cref{lemma:DP}. First define for any $i\in[n-1]$, the slope  $\delta_i \coloneqq \frac{y_{i+1}-y_i}{x_{i+1}-x_i}$; the function
\begin{gather}
g_{i+1}(s_{i+1},s_i) \coloneqq \sqrt{\left(x_{i+1}(s_{i+1}-\delta_i)-x_{i}(s_{i}-\delta_i)\right)^2 +\left(s_{i+1}-s_{i}\right)^2} \text{ for any }(s_{i+1},s_i) \in \R^2;\label{eq:gi}\\
\text{and the intervals}\quad S_{i}(s) \coloneqq \begin{cases} (-\infty,\delta_i] \text{ if } s>\delta_i\\
\{\delta_i\} \text{ if } s=\delta_i\\
[\delta_i,+\infty) \text{ if } s<\delta_i\end{cases} \qquad \text{for any } s\in\R.\notag
\end{gather}
The set $\Lambda$ is then the union of three product spaces given by
\begin{equation}
\Lambda \coloneqq (-\infty,\delta_{i_0-1})\times(\delta_{i_0-1},+\infty) \cup \{(\delta_{i_0-1},\delta_{i_0-1})\} \cup(\delta_{i_0-1},+\infty)\times(-\infty,\delta_{i_0-1}). \label{eq:Lambda}
\end{equation}
Finally, we define the functions $c_i:\R\to\R_+$ recursively as $c_1=c_n \equiv 0$ and
\begin{equation}
\label{eq:ci}\begin{gathered}
c_{i+1}:s_{i+1}\mapsto\min_{s_i\in S_i(s_{i+1})} g_{i+1}(s_{i+1},s_i) + c_i(s_i) \quad \text{ for any } i \in \{1,\ldots,i_0-2\}\\
c_i: s_i \mapsto \min_{s_{i+1}\in S_{i}(s_{i})} g_{i+1}(s_{i+1},s_i) + c_{i+1}(s_{i+1}) \quad \text{ for any } i \in \{i_0,\ldots,n-1\}.\end{gathered}
\end{equation}
\cref{eq:ci} defines a dynamic program with a continuous state space. Intuitively for $i\geq i_0$, the variable $s_i$ accounts for the left derivative at the point $x_i$. The term $g_{i+1}(s_{i+1},s_i)$ is the minimal cost (in neuron norm) for reaching the point $(x_{i+1},y_{i+1})$ with a slope $s_{i+1}$, knowing that the left slope is $s_i$ at the point $(x_{i},y_{i})$. Similarly, the interval $S_i(s_i)$ gives the reachable slopes\footnote{Here, a single kink is used in the interval $[x_i, x_{i+1}]$, thanks to \cref{lemma:sparse1}.} at $x_{i+1}$, knowing the slope in $x_i$ is $s_i$.
Finally, $c_i(s_i)$ holds for the minimal cost of fitting all the points $(x_{i+1},y_{i+1}), \ldots, (x_n,y_n)$ when the left derivative in $(x_i,y_i)$ is given by $s_i$. 
It is defined recursively by minimising the sum of the cost for reaching the next point $(x_{i+1},y_{i+1})$ with a slope $s_{i+1}$, given by $g_{i+1}(s_{i+1},s_i)$; and the cost of fitting all the points after $x_{i+1}$, given by $c_{i+1}$. This recursive definition is illustrated in \cref{fig:DP} below. A symmetric definition holds for $i<i_0$.

    \begin{figure}[htbp]
        \centering
        \includegraphics[trim=45pt 5pt 70pt 30pt, clip, width=0.8\textwidth]{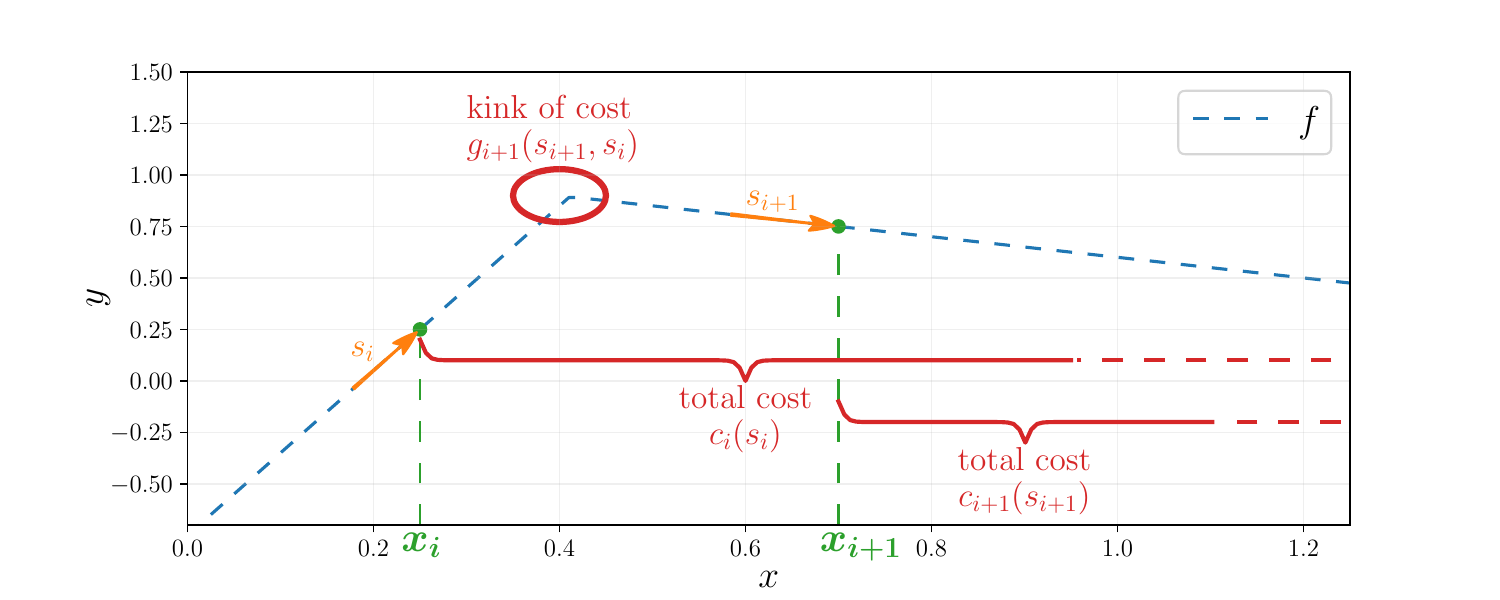}
        \caption{Recursive definition of the dynamic program for $i\geq i_0$.}
        \label{fig:DP}
    \end{figure}
    
The idea to derive \cref{eq:gi} is  to first use \cref{lemma:sparserepresentation} to get a finite representation of a minimal function $f^*$. From there, the minimal cost for connecting the point $(x_{i+1},y_{i+1})$ with $(x_i, y_i)$ is done by using a single kink in between. The restrictions given by $s_i$ and $s_{i+1}$ then yield a unique possible kink. Minimizing its neuron norm then yields \cref{eq:gi}
  
\begin{rem}\label{remark:junction}
\cref{eq:programequiv} actually considers the junction of two dynamic programs: a first one corresponding to the points with negative $x$ values and a second one for positive values. This separation around $x=0$ is not needed for \cref{lemma:DP}, but allows for a cleaner analysis in \cref{sec:properties}. \cref{lemma:sparserepresentation,lemma:DP} also hold for any arbitrary choice of $i_0$. In particular for $i_0=1$, \cref{eq:programequiv} would not consider the junction of two dynamic programs anymore, but a single one.
\end{rem}
\begin{rem}
The assumption $x_1<0$ and $x_n\geq 0$ is not fundamental, but is only required to properly define the junction mentioned in \cref{remark:junction}. If all the $x$ values are positive (or negative by symmetry), the analysis of the right term in \cref{eq:programequiv} is simplified, since there is no junction to consider. In particular, all the results from \cref{sec:properties} hold without this assumption. These results are proven in the hardest case $x_1<0$ and $x_n\geq 0$ in \cref{app:propertiesproof}, from which other cases can be directly deferred.
\end{rem}
\cref{lemma:DP} formulates the minimisation of the representational cost among the interpolating functions as a simpler dynamic program on the sequence of slopes at each $x_i$. This equivalence is the key technical result of this work, from which \cref{sec:properties} defers many properties on the minimiser(s) of \cref{eq:mininterpolator1}.
%

\section{Properties of minimal norm interpolator}\label{sec:properties}

Thanks to the dynamic program formulation given by \cref{lemma:DP}, this section derives key properties on the interpolating functions of minimal representational cost. In particular, it shows that \cref{eq:mininterpolator1} always admits a unique minimum. Moreover, under some condition on the training set, this minimising function has the smallest number of kinks among the set of interpolators.
\begin{thm}\label{thm:unique}
The following optimisation problem admits a unique minimiser:
\begin{equation*}
\inf_{f}  \TV{\sqrt{1+x^2}f''}\quad \text{such that } \forall i \in [n],\ f(x_i)=y_i.
\end{equation*}
\end{thm}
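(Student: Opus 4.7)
My plan is to exploit the dynamic program formulation in Lemma \ref{lemma:DP} and argue that the optimal slope sequence is unique, which then forces the minimizer itself to be unique. By Lemma \ref{lemma:sparserepresentation}, any minimizer has the form $f(x) = ax+b+\sum_{i=1}^{n-1} a_i(x-\tau_i)_+$ with at most one kink $\tau_i$ per interval between consecutive data points. So once the left and right slopes $(s_i,s_{i+1})$ of $f$ at the endpoints of each interval $(x_i,x_{i+1})$ are known, the interpolation constraints $f(x_i)=y_i$ and $f(x_{i+1})=y_{i+1}$ together with the slope constraints uniquely determine the kink's location and amplitude on that interval (when $s_{i+1}\neq s_i$; when they agree, there is no kink). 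Consequently, uniqueness of the whole function reduces to uniqueness of the slope sequence $(s_i)_{i\in[n]}$ realizing the minimum in the DP of Lemma \ref{lemma:DP}.

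To show the optimal slope sequence is unique, the plan is to prove by induction that each cost function $c_i$ defined in \eqref{eq:ci} is strictly convex on the relevant one-sided intervals $(-\infty,\delta_{i-1}]$ or $[\delta_{i-1},+\infty)$, and that the recursive minimization in $s_i$ achieves its minimum at a unique point for every $s_{i+1}$. The key input is that $g_{i+1}(s_{i+1},s_i)$ is the Euclidean norm of an affine bijection of $(s_{i+1}-\delta_i, s_i-\delta_i)$ (the Jacobian of that affine map has determinant $x_i-x_{i+1}\neq 0$). Thus $g_{i+1}$ is strictly convex except along rays emanating from $(\delta_i,\delta_i)$. Combined with an inductively strictly-convex $c_i$ and the constraint $s_i\in S_i(s_{i+1})$, this should yield strict convexity of $c_{i+1}$ and uniqueness of the inner minimizer. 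The base case is immediate since $c_1\equiv c_n\equiv 0$, and the strict convexity of the first nontrivial $c_2$ follows from the strict convexity of $\|\cdot\|$ along non-radial directions combined with the half-line constraint $S_1$.

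The last step treats the junction in \eqref{eq:programequiv}: one must minimize $g_{i_0}(s_{i_0},s_{i_0-1})+c_{i_0-1}(s_{i_0-1})+c_{i_0}(s_{i_0})$ over the non-convex set $\Lambda$. Since $\Lambda$ is a disjoint union of three product regions, I would argue uniqueness in two stages: strict convexity of the sum on each individual product region gives at most one minimizer per region, and one shows by direct comparison that exactly one of the regions realizes the global minimum (or, in degenerate cases where the candidate slopes coincide with $\delta_{i_0-1}$, the boundary points match and give the same function). Chaining the uniqueness of optimal $s_{i_0-1},s_{i_0}$ with the uniqueness of the inner DP minimizers then yields a unique slope sequence and hence a unique minimizer.

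The main obstacle is the lack of strict convexity of $g_{i+1}$ as a function on $\R^2$: the Euclidean norm is only strictly convex along non-radial directions, so strict convexity of the $c_i$'s does not come for free and one must carefully exploit both the affine reparametrization (which rules out most degenerate directions) and the half-line constraint $S_i(s_{i+1})$ (which further trims the flat rays). A secondary technical difficulty is the handling of degenerate slope sequences, where $s_i=\delta_{i-1}$ for some $i$ and an interval contributes no kink; these boundary cases need to be checked separately to confirm that the optimum still corresponds to a well-defined unique function and not to an ambiguity between two limiting representations.
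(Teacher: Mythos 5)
Your high-level route is the same as the paper's: reduce via \cref{lemma:sparserepresentation} and the slope bijection to uniqueness of the optimal slope sequence in the dynamic program, prove properties of the $c_i$ by backward induction, handle the one-dimensional recursion steps and the two-dimensional junction over $\Lambda$ separately. However, the central inductive claim you propose --- that each $c_i$ is strictly convex on the one-sided intervals $(-\infty,\delta_i]$ and $[\delta_i,+\infty)$ --- is false, and the base case already breaks it: the paper computes $c_{n-1}(s_{n-1})=\sqrt{1+x_{n-1}^2}\,|s_{n-1}-\delta_{n-1}|$, which is affine on each side. More generally, whenever $\delta_{i+1}\le\delta_i$ one gets $c_i(s_i)=\sqrt{1+x_i^2}\,|s_i-\delta_i|+c_{i+1}(\delta_i)$ on $(-\infty,\delta_i]$. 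The paper's \cref{lemma:unique} is built precisely around this failure: on each one-sided interval, $c_i$ is \emph{either} exactly the weighted absolute value \emph{or} strictly convex (a dichotomy), together with an anti-Lipschitz lower bound on its increments. Your one-dimensional steps survive without strict convexity of $c_{i+1}$ (either $S_i(\tilde s_i)$ is a singleton or $g_{i+1}(\cdot,\tilde s_i)$ is strictly convex because the affine path avoids the origin and is non-radial), so that part of your plan is fine once rephrased.

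The step that genuinely fails is the junction. On a product region such as $(-\infty,\delta_{i_0-1})\times(\delta_{i_0-1},+\infty)$, the function $g_{i_0}$ is the norm of an affine image and is flat along the rays emanating from $(\delta_{i_0-1},\delta_{i_0-1})$, which cut through the interior of the region; and $c_{i_0-1}$, $c_{i_0}$ may both be in the affine regime. So ``strict convexity of the sum on each product region'' is not available, and two distinct minimisers on a common flat ray cannot be excluded this way. The paper's argument instead assumes two minimisers, notes the whole segment between them is optimal, extracts the equality cases (colinearity for $g_{i_0}$, the exact-Lipschitz affine regime for $c_{i_0-1}$ and $c_{i_0}$ via the dichotomy), and then derives a contradiction from the \emph{strict} triangle inequality
$\bigl|\sqrt{1+x_{i_0}^2}\,|s_{i_0}-s'_{i_0}|-\sqrt{1+x_{i_0-1}^2}\,|s_{i_0-1}-s'_{i_0-1}|\bigr| < \|(s_{i_0}-s'_{i_0})v-(s_{i_0-1}-s'_{i_0-1})u\|$, which holds because $u=(x_{i_0-1},1)$ and $v=(x_{i_0},1)$ are linearly independent. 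This cancellation argument is the missing idea in your proposal; without it (or a substitute), the junction case is not closed.
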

The proof of \cref{thm:unique} uses the correspondence between interpolating functions and sequences of slopes $(s_i)_{i\in [n]} \in \cS$, where the set $\cS$ is defined by \cref{eq:cS} in \cref{eq:DPproof}. In particular, we show that the following problem admits a unique minimiser:
\begin{equation}\label{eq:slopecost2}
\textstyle \min_{\bs\in\cS} \sum_{i=1}^{n-1}g_{i+1}(s_{i+1},s_i).
\end{equation}
We note in the following  $\bs^*\in\cS$ the unique minimiser of the problem in \cref{eq:slopecost2}. From this sequence of slopes $\bs^*$, the unique minimising function of \cref{eq:mininterpolator1} can be recovered. Moreover, $\bs^*$ minimises the dynamic program given by the functions $c_i$ as follows:
\begin{gather*}
c_{i+1}(s^*_{i+1}) = g_{i+1}(s^*_{i+1},s^*_i) + c_i(s^*_i)\quad \text{for any } i \in [i_0-2]\\
c_{i}(s^*_{i}) = g_{i+1}(s^*_{i+1},s^*_i) + c_{i+1}(s^*_{i+1}) \quad \text{for any } i \in \{i_0, \ldots, n-1\}.
\end{gather*}
Using simple properties of the functions $c_i$ given by \cref{lemma:unique} in \cref{app:propertiesproof}, properties on $\bs^*$ can be derived besides the uniqueness of the minimal norm interpolator. \cref{lemma:sparse1} below gives a first intuitive property of this minimiser, which proves helpful in showing the main result of the section.
\begin{lem}\label{lemma:sparse1}
For any $i\in [n]$, $s^*_i \in [\min(\delta_{i-1},\delta_{i}), \max(\delta_{i-1},\delta_{i})]$, where $\delta_0\coloneqq\delta_1$ and $\delta_n\coloneqq\delta_{n-1}$ by convention.
\end{lem}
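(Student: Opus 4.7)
The plan is a contradiction argument via a local perturbation of $s^*_i$. Assume $s^*_i > \max(\delta_{i-1},\delta_i)$ for some $i$; the mirror case $s^*_i < \min(\delta_{i-1},\delta_i)$ is handled identically. The bracketing constraints encoded in $\bs^* \in \cS$ immediately force $s^*_{i-1} \leq \delta_{i-1}$ and $s^*_{i+1} \leq \delta_i$, and the first is moreover strict, since $s^*_{i-1} = \delta_{i-1}$ would pin $s^*_i \in S_{i-1}(\delta_{i-1}) = \{\delta_{i-1}\}$, contradicting the hypothesis.

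Next, I would perturb $s^*_i$ downward to $\tilde{s}_i = s^*_i - \varepsilon$ for $\varepsilon > 0$ small, keeping every other $s^*_j$ fixed. The strict inequalities above keep the perturbed sequence in $\cS$, and only the two summands $h(s_i) := g_i(s_i, s^*_{i-1}) + g_{i+1}(s^*_{i+1}, s_i)$ of the total cost depend on $s_i$, so it suffices to show $h'(s^*_i) > 0$ to contradict optimality of $\bs^*$. A direct differentiation of~\eqref{eq:gi}, with the shorthand $u := s^*_i - \delta_{i-1} > 0$, $v := \delta_{i-1} - s^*_{i-1} > 0$, $u' := s^*_i - \delta_i > 0$ and $v' := \delta_i - s^*_{i+1} \geq 0$, reduces the two numerators of $h'(s^*_i)$ to $u(1+x_i^2) + v(1+x_{i-1}x_i)$ and $u'(1+x_i^2) + v'(1+x_i x_{i+1})$. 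In the bulk regime where $x_{i-1}, x_i, x_{i+1}$ all share the same sign (i.e.\ $i \ge i_0+1$ or $i \le i_0-2$), both ``inner-product factors'' $1+x_{i-1}x_i$ and $1+x_i x_{i+1}$ are at least $1$, so both numerators are strictly positive and $h'(s^*_i) > 0$, giving the contradiction.

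The main obstacle is the junction indices $i \in \{i_0-1, i_0\}$, where $x_{i-1}$ and $x_i$ (respectively $x_i$ and $x_{i+1}$) have opposite signs and one inner-product factor can turn negative when $|x_{i-1}x_i| > 1$. There I would first invoke the envelope identity $c'_i(s^*_i) = \partial_2 g_{i+1}(s^*_{i+1}, s^*_i)$ granted by Lemma~\ref{lemma:unique}, so that $h'(s^*_i)$ still captures the correct directional derivative of the junction minimisation in~\eqref{eq:programequiv}, and then replace the single-coordinate perturbation by the joint shift $(s_{i_0-1}, s_{i_0}) \mapsto (s_{i_0-1} + \varepsilon, s_{i_0} - \varepsilon)$ pushing both slopes toward the common chord value $\delta_{i_0-1}$; a direct Taylor expansion shows that $g_{i_0}$ decreases along this direction while the always-positive contribution from $\partial g_{i_0+1}$ (or the symmetric $\partial g_{i_0-1}$) is not cancelled, yielding the strict cost decrease. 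Finally, the boundary cases $i \in \{1, n\}$ follow from $c_1 \equiv c_n \equiv 0$: the slope $s^*_1$ enters only through $g_2(s^*_2, \cdot)$, whose unconstrained first-order minimiser over $s_1$ falls outside $S_1(s^*_2)$ unless $s^*_2 = \delta_1$, so the constrained minimum lies on the boundary $s^*_1 = \delta_1$ (and symmetrically for $s^*_n$).
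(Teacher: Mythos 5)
Your overall strategy differs from the paper's: you argue by first-order perturbation of the full sum $\sum_{i}g_{i+1}(s_{i+1},s_i)$ over $\cS$, whereas the paper works through the value functions $c_i$ of the dynamic program and the Lipschitz/anti-Lipschitz bounds of \cref{lemma:unique}. Your bulk computation is correct: writing the two relevant directional derivatives as $\langle (x_i,1),\, u(x_i,1)+v(x_{i-1},1)\rangle$ and $\langle (x_i,1),\, u'(x_i,1)+v'(x_{i+1},1)\rangle$ (up to positive normalisations) does give strict positivity when all three abscissae share a sign, and the feasibility of the downward perturbation inside $\cS$ is correctly checked. The boundary cases are also fine in substance.

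The junction case, however, contains a genuine gap, and it is precisely the part of the lemma that requires work. Your claim that $g_{i_0}$ decreases along the joint direction $(\dot s_{i_0-1},\dot s_{i_0})=(+1,-1)$ is false in general: the directional derivative of $g_{i_0}$ along that direction has numerator $-u\bigl[(1+x_{i_0}^2)+(1+x_{i_0-1}x_{i_0})\bigr]-v\bigl[(1+x_{i_0-1}^2)+(1+x_{i_0-1}x_{i_0})\bigr]$, and with, say, $x_{i_0-1}=-10$, $x_{i_0}=1$ the first bracket equals $2-9=-7<0$, so for $u\gg v$ the term $g_{i_0}$ \emph{increases} along your direction. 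Moreover the joint shift also perturbs $g_{i_0-1}(s_{i_0-1},s_{i_0-2})$, whose variation you do not control (it depends on $s^*_{i_0-2}$ and $\delta_{i_0-2}$). The envelope identity you invoke is also not what \cref{lemma:unique} provides ($c_i$ is convex but possibly non-differentiable). The repair that stays within your framework is not a joint perturbation but a sharper bound on the two single-coordinate derivatives: since $u(x_{i_0},1)+v(x_{i_0-1},1)$ lies in the cone spanned by $(x_{i_0},1)$ and $(x_{i_0-1},1)$, the first normalised derivative is bounded below by $\frac{1+x_{i_0-1}x_{i_0}}{\sqrt{1+x_{i_0-1}^2}}$ (strictly, since $u>0$), while the second is bounded below by $\frac{1+x_{i_0}x_{i_0+1}}{\sqrt{1+x_{i_0+1}^2}}$; the elementary inequality $-\frac{1+x_{i_0-1}x_{i_0}}{\sqrt{1+x_{i_0-1}^2}}\leq x_{i_0}\leq \frac{1+x_{i_0}x_{i_0+1}}{\sqrt{1+x_{i_0+1}^2}}$ (valid because $x_{i_0-1}<0\leq x_{i_0}\leq x_{i_0+1}$) then makes the sum strictly positive. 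This is exactly the cancellation the paper exploits via the anti-Lipschitz property of $c_{i_0}$ in \cref{app:sparse1proof}; without it, your argument does not close at $i\in\{i_0-1,i_0\}$.
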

A geometric interpretation of \cref{lemma:sparse1} is that the optimal (left or right) slope in $x_i$ is between the line joining $(x_{i-1},y_{i-1})$ with $(x_{i},y_{i})$ and the line joining $(x_{i},y_{i})$ with $(x_{i+1},y_{i+1})$.

\subsection{Recovering a sparsest interpolator}

We now aim at characterising when the minimiser of \cref{eq:mininterpolator1} is among the set of sparsest interpolators, in terms of number of kinks. Before describing the minimal number of kinks required to fit the data in \cref{lemma:sparsest}, we partition $[x_1,x_n)$ into intervals of the form $[x_{n_k}, x_{n_{k+1}})$ where
\begin{gather}
n_0 = 1 \text{ and for any } k\geq 0 \text{ such that } n_{k}<n, \notag\\
n_{k+1} = \min \left\{ j \in \{n_{k}+1,\ldots,n-1\} \mid \sign(\delta_j-\delta_{j-1}) \neq \sign(\delta_{j-1}-\delta_{j-2}) \right\} \cup \{n\}\label{eq:partition},
\end{gather}
and $\sign(0)\coloneqq 0$ 
by convention.
If we note $f_{\lin}$ the canonical piecewise linear interpolator, it is either convex, concave or affine on every interval $[x_{n_k-1}, x_{n_{k+1}}]$. This partitioning thus splits the space into convex, concave and affine parts of $f_{\lin}$, as illustrated by \cref{fig:partition} on a toy example. 
This partition is crucial in describing the sparsest interpolators, thanks to \cref{lemma:sparsest}.
    \begin{figure}[htbp]
        \centering
        \includegraphics[trim=55pt 5pt 70pt 30pt, clip, width=0.7\textwidth]{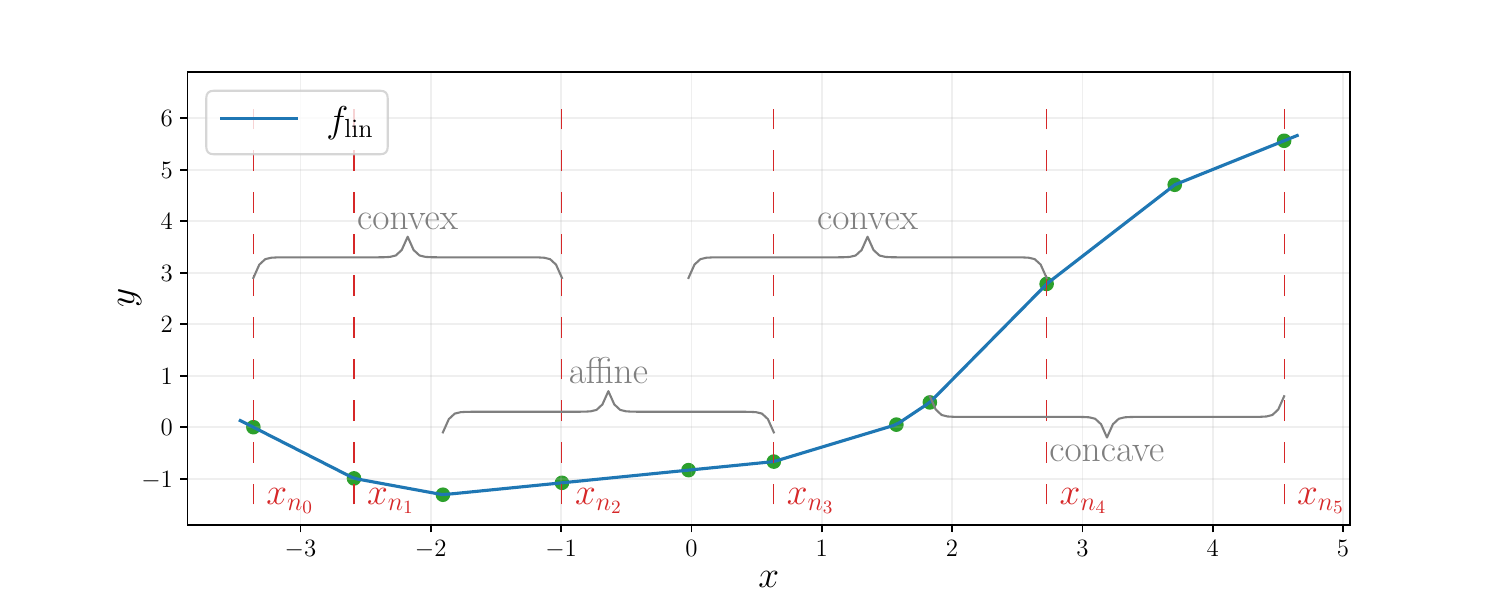}
        \caption{Partition given by $(n_k)_k$ on a toy example.}
        \label{fig:partition}
    \end{figure}
\begin{lem}\label{lemma:sparsest}
If we denote by $\|f''\|_0$ the cardinality of the support of the measure $f''$,
\begin{equation*}
\min_{\substack{f\\\forall i, f(x_i)=y_i}} \|f''\|_{0} = \sum_{k\geq 1}\left\lceil \frac{n_{k+1}-n_{k}}{2}\right\rceil \iind{\delta_{n_k-1}\neq\delta_{n_k}}.
\end{equation*}
\end{lem}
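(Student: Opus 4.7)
The plan is to prove the equality by establishing matching upper and lower bounds on the minimum number of kinks of a piecewise linear interpolator, working region by region in the partition \cref{eq:partition}.

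\textbf{Upper bound.} I construct an interpolator achieving the claimed number of kinks. For each $k \geq 1$ with $\delta_{n_k-1} \neq \delta_{n_k}$, consider the \emph{extended} interval $[x_{n_k-1}, x_{n_{k+1}}]$: by definition of the partition, the slopes $\delta_{n_k-1}, \delta_{n_k}, \ldots, \delta_{n_{k+1}-1}$ are strictly monotone, so the $m_k \coloneqq n_{k+1}-n_k+2$ data points in this interval lie in strict convex or concave position. I partition these points into $\lceil m_k/2 \rceil$ consecutive groups of one or two points, fit each pair by the unique straight line through its two data points, and each singleton by a line of appropriately chosen slope, and glue successive linear pieces at their intersections. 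Strict monotonicity of the pair slopes ensures, by the same weighted-average identity as in the four-point verification, that each intersection lies strictly in the gap between the two consecutive groups, so the assembled function is a valid piecewise linear interpolator with exactly $\lceil m_k/2 \rceil - 1 = \lceil(n_{k+1}-n_k)/2 \rceil$ kinks on the extended region. For flat transitions ($\delta_{n_k-1}=\delta_{n_k}$) adjacent pieces coincide and no kink is needed. Summing across $k$ produces a global interpolator achieving the claimed number of kinks.

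\textbf{Lower bound.} Let $f$ be any piecewise linear interpolator. For each $k \geq 1$ with $\delta_{n_k-1}\neq\delta_{n_k}$, the $m_k$ data points in $[x_{n_k-1},x_{n_{k+1}}]$ have strictly monotone consecutive slopes, hence no three are collinear. Each linear piece of $f$ can therefore cover at most two of these points, so $f$ must use at least $\lceil m_k/2\rceil$ distinct linear pieces on the extended interval, yielding at least $\lceil(n_{k+1}-n_k)/2\rceil$ kinks in the open interior $(x_{n_k-1},x_{n_{k+1}})$. It then remains to sum these per-region bounds without double counting. Consecutive extended intervals overlap only in the short ``bridge'' $(x_{n_{k+1}-1},x_{n_{k+1}})$, where a kink could in principle be attributed to both regions; but such a shared kink would force the piece immediately to its left (the last piece of region $k$) to absorb at least three of region $k$'s strictly-monotone-slope points unless an additional kink is present strictly inside $(x_{n_k-1},x_{n_{k+1}-1})$, contradicting non-collinearity. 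Hence any bridge kink is compensated by an extra strictly-inside kink in one of the two adjacent regions, and summing the per-region contributions counts each kink at most once.

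\textbf{Main obstacle.} The principal technical difficulty is exactly this non-double-counting step: while the ``two points per line'' bound within each strict monotonicity region is straightforward, rigorously proving that bridge kinks cannot simultaneously help two adjacent regions attain their minimum requires a careful case analysis on the parity of $m_k$ and the location of any singleton in the forced pairing, showing that the collinearity obstruction always surfaces.
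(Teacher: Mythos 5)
Your overall architecture (matching upper and lower bounds, region by region) mirrors the paper's, and your upper-bound construction is essentially a workable variant of the paper's explicit interpolator (the paper builds it via a slope sequence alternating between $\delta_{i-1}$ and $\delta_i$, which sidesteps your unspecified choices of singleton placement and of the gluing at the two shared points between consecutive extended intervals). The genuine gap is in the lower bound, precisely at the step you flag as the ``main obstacle'' and then do not carry out. Your per-region bound of $\lceil m_k/2\rceil$ linear pieces, hence $\lceil (n_{k+1}-n_k)/2\rceil$ kinks in the open interior $(x_{n_k-1},x_{n_{k+1}})$, is fine in isolation, but these open intervals overlap on $(x_{n_{k+1}-1},x_{n_{k+1}})$ for consecutive nontrivial regions, so summing the per-region counts may count a single kink twice and the inequality ``total kinks $\geq \sum_k \lceil (n_{k+1}-n_k)/2\rceil\,\iind{\delta_{n_k-1}\neq\delta_{n_k}}$'' does not follow. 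The danger is real: for both $m_k$ and $m_{k+1}$ odd, the union-level ``two points per line'' count gives strictly less than the sum of the two per-region requirements, so some additional structural fact is genuinely needed. Your proposed fix (a shared bridge kink forces a piece to absorb three points of region $k$ unless an extra interior kink appears) is asserted, not proved, and you yourself concede it needs a case analysis on parity and singleton location that you have not done. As written, the lower bound is therefore incomplete.

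The paper closes this gap with a different and much cleaner device: it tracks the \emph{sign} of the required kinks. On a convex region ($\delta_{n_k-1}<\delta_{n_k}$) it exhibits $\lceil (n_{k+1}-n_k)/2\rceil$ disjoint triples $(x_{j-1},x_{j+1})$, each of which forces a \emph{positive} kink (otherwise $f'$ would be weakly decreasing across three points with strictly increasing chord slopes); concave regions symmetrically force \emph{negative} kinks. By the definition of the partition, two consecutive regions with nonzero indicator have opposite convexity, so their extended intervals may overlap but demand kinks of opposite signs, and a single kink cannot serve both; regions demanding kinks of the same sign have disjoint extended intervals. Positive and negative kink counts are then lower-bounded separately and added, with no double counting to worry about. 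If you want to salvage your line-counting argument, you would need to either import this sign bookkeeping or actually complete the parity case analysis you sketch; the former is both shorter and more robust.
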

\cref{lemma:sparsest}'s proof idea is that for any interval $[x_{k-1}, x_{k+1})$ where $f_{\lin}$ is convex (resp. concave) non affine, any function requires at least one positive (resp. negative) kink to fit the three data points in this interval. The result then comes from counting the number of such disjoint intervals and showing that a specific interpolator exactly reaches this number.

The minimal number of kinks required to interpolate the data is given by \cref{lemma:sparsest}. Before giving the main result of this section, we introduce the following assumption on the data $(x_k,y_k)_{k\in[n]}$.
\begin{assumption}\label{ass:slopes}
For the sequence $(n_k)_k$ defined in \cref{eq:partition}:
\begin{equation*}
n_{k+1} - n_k \leq 3 \ \text{or}\ \delta_{n_k} = \delta_{n_k-1}\quad \text{for any }k\geq 0.
\end{equation*}
\end{assumption}
\cref{ass:slopes} exactly means there are no $6$ (or more) consecutive points $x_k,\ldots,x_{k+5}$ such that $f_{\lin}$ is convex (without $3$ aligned points) or concave on $[x_k,x_{k+5}]$. This assumption depends a lot on the structure of the true model function (if there is any). For example, it holds if the truth is given by a piecewise linear function, while it may not if the truth is given by a quadratic function. 
\cref{thm:recovery} below shows that under \cref{ass:slopes}, the minimal cost interpolator is amongst the sparsest interpolators, in number of its kinks.
%
\begin{thm}\label{thm:recovery}
If \cref{ass:slopes} holds, then
\begin{equation}\label{eq:sparserecovery}
\argmin_{\substack{f\\\forall i, f(x_i)=y_i}} \|\sqrt{1+x^2}f''\|_{\mathrm{TV}} \in \argmin_{\substack{f\\\forall i, f(x_i)=y_i}} \|f''\|_{0}.
\end{equation}
\end{thm}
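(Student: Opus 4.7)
The plan is to lift the minimisation of $\TV{\sqrt{1+x^2}f''}$ to the dynamic program of \cref{lemma:DP}, work with the unique optimal slope sequence $\mathbf{s}^*$ guaranteed by \cref{thm:unique}, and verify that under \cref{ass:slopes} the number of consecutive slope transitions of $\mathbf{s}^*$ exactly matches the sparsity lower bound of \cref{lemma:sparsest}.

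\textbf{Step 1: kinks count slope transitions.} By \cref{lemma:sparserepresentation} any minimiser writes $f^*(x)=ax+b+\sum_{i=1}^{n-1}a_i(x-\tau_i)_+$ with at most one kink per interval $(x_i,x_{i+1})$. On such an interval, having at most one kink together with the interpolation constraints at $x_i$ and $x_{i+1}$ forces either $s_i^*=s_{i+1}^*=\delta_i$ with $a_i=0$, or $s_i^*\neq s_{i+1}^*$ with a single active kink of weight $s_{i+1}^*-s_i^*$. Consequently
\[
\|(f^*)''\|_0 \;=\; \bigl|\{i\in[n-1]:\, s_i^*\neq s_{i+1}^*\}\bigr|.
\]

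\textbf{Step 2: block-by-block count.} I would then analyse these transitions using the partition $(n_k)_k$. On a linear block (i.e.\ $\delta_{n_k-1}=\delta_{n_k}$), the sign convention $\sign(0)=0$ in \eqref{eq:partition} forces all slopes $\delta_{n_k-1},\ldots,\delta_{n_{k+1}-1}$ to coincide, and \cref{lemma:sparse1} then pins every interior $s_j^*$ to this common value, so no inner interval carries a kink—matching the zero contribution in \cref{lemma:sparsest}. On a strictly monotone block of length $L=n_{k+1}-n_k\in\{1,2,3\}$ (the upper bound coming from \cref{ass:slopes}), the \cref{lemma:sparse1} intervals $[\min(\delta_{j-1},\delta_j),\max(\delta_{j-1},\delta_j)]$ for successive interior slopes form a staircase whose consecutive members share the single endpoint $\delta_j$. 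Exploiting the DP optimality conditions together with the monotonicity/convexity properties of the value functions $c_i$ (\cref{lemma:unique} in the appendix) and the explicit form of $g_{i+1}$, I would show that the optimiser places each interior $s_j^*$ at one of these shared endpoints in an alternating pattern, so that pairs of consecutive interior slopes are made equal whenever possible. A direct enumeration per block then gives exactly $\lceil L/2 \rceil$ transitions, and summing over blocks yields
\[
\|(f^*)''\|_0 \;=\; \sum_{k\geq 1}\Bigl\lceil \tfrac{n_{k+1}-n_k}{2}\Bigr\rceil \iind{\delta_{n_k-1}\neq\delta_{n_k}},
\]
which is exactly the lower bound of \cref{lemma:sparsest}, proving \eqref{eq:sparserecovery}.

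\textbf{Main obstacle.} The technical heart is the alignment claim: ruling out that a DP-optimal interior $s_j^*$ takes a strictly interior value of its \cref{lemma:sparse1} interval, which would introduce superfluous kinks. This is precisely where \cref{ass:slopes} is essential—for monotone blocks of length $\geq 4$, strictly interior optima can genuinely be cheaper, so the minimiser need not be sparsest. The analysis of $c_i$—its monotonicity on each side of $\delta_{i-1}$ combined with the way $g_{i+1}$ penalises deviations from $\delta_i$—is what forces the alignment. The junction of the two dynamic programs around $x=0$ (see \cref{remark:junction}) is handled symmetrically and does not affect the block-wise count.
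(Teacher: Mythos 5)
Your plan follows essentially the same route as the paper's proof: reduce to the unique optimal slope sequence via \cref{lemma:DP} and \cref{thm:unique}, use \cref{lemma:sparse1} to confine each $s_i^*$, and then show block by block that the minimiser realises exactly the $\lceil (n_{k+1}-n_k)/2\rceil$ count of \cref{lemma:sparsest}. The "alignment claim" you flag as the main obstacle is precisely what the paper carries out in \cref{lemma:case1} by explicit case analysis for $n_{k+1}-n_k\in\{1,2,3\}$ (including the junction configurations around $x=0$), using the Lipschitz and convexity properties of the $c_i$ from \cref{lemma:unique}; your sketch identifies the right tools and the right reason \cref{ass:slopes} is needed, so it is a correct outline of the same argument.
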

\cref{thm:recovery} states conditions under which the interpolating function $f$ with the smallest representational cost $\bar{R}_1(f)$ also has the minimal number of kinks, i.e. ReLU hidden neurons, among the set of interpolators. It illustrates how norm regularisation, and in particular adding the biases' norm to the weight decay, favors estimators with a small number of neurons. While training neural networks with norm regularisation, the final estimator can actually have many non-zero neurons, but they all align towards a few key directions. As a consequence, the obtained estimator is actually equivalent to a small width network, meaning they have the same output for every input $x\in\R$. 

Recall that such a sparsity does not hold when the bias terms are not regularised. More precisely, some sparsest interpolators have a minimal representational cost in that case, but there are also minimal cost interpolators with an arbitrarily large (even infinite) number of kinks \citep{debarre2022sparsest}. There is thus no particular reason that the obtained estimator is sparse when minimising the representational cost without penalising the bias terms. \cref{sec:discussion} empirically illustrates this difference of sparsity in the recovered estimators, depending on whether or not the bias parameters are penalised in the norm regularisation.

The generalisation benefit of this sparsity remains unclear. Indeed, generalisation bounds in the literature often rely on the parameters’ norm rather than the network width (i.e., sparsity level). The relation between sparsest and min norm interpolators is important to understand in the particular context of implicit regularisation. In particular, while \citet{boursier2022gradient} conjectured that the implicit bias for regression problem was towards min norm interpolators, \citet{chistikov2023learning} recently proved that the implicit bias could sometimes instead lead to sparsest interpolators. Our result suggests that both min norm and sparsest interpolators often coincide, which could explain the prior belief of convergence towards min norm interpolators. Yet, \cref{thm:recovery} and \citet{chistikov2023learning} instead suggest that, at least in some situations, implicit bias favors sparsest interpolators, yielding different estimators\footnote{Note that this nuance only holds for regression tasks. Instead, it is known that implicit regularisation favors min norm interpolators in classification tasks \citep{chizat2020implicit}, which always coincide with sparsest interpolators in that case (see \cref{coro:margin} in \cref{sec:classif}) for univariate data.}.

\begin{rem}\label{remark:assumption}
\cref{thm:recovery} states that sparse recovery, given by \cref{eq:sparserecovery}, occurs if \cref{ass:slopes} holds. When $n_{k+1}-n_k\geq 4$, i.e. there are convex regions of $f_{\lin}$ with at least $6$ points, \cref{app:assumption} gives a counterexample where \cref{eq:sparserecovery} does not hold. However, \cref{eq:sparserecovery} can still hold under weaker data assumptions than \cref{ass:slopes}. In particular, \cref{app:assumption} gives a necessary and sufficient condition for sparse recovery when there are convex regions with exactly $6$ points. When we allow for convex regions with at least $7$ points, it however becomes much harder to derive conditions where sparse recovery still occurs.
\end{rem}
\begin{rem}\label{remark:counterexample}
The counterexample presented in \cref{app:assumption} reveals an unexpected outcome: minimal representational cost interpolators may not necessarily belong to the sparest interpolators. This finding is closely related to the idea that it may not be generally feasible to characterize the implicit regularisation of gradient descent as minimising parameters norm  \citep{vardi2021implicit,chistikov2023learning}. In particular, \citet{vardi2021implicit,chistikov2023learning} rely on examples where minimal norm interpolators are not the sparsest ones; and the implicit regularisation instead favors the latter.
We believe that this inherent limitation is one of the underlying reason for the different implicit regularization effects  observed  in other settings such as matrix factorization \citep{gunasekar2017implicit,razin2020implicit,li2020towards}.
\end{rem}

\subsection{Application to classification}\label{sec:classif}

In the binary classification setting, max-margin classifiers, defined as the minimiser of the problem 
\begin{equation}\label{eq:margin}
\min_{f} \bar{R}(f) \quad \text{such that } \forall i\in[n], y_i f(x_i)\geq 1,
\end{equation}
are known to be the estimators of interest. Indeed, gradient descent on the cross entropy loss $l(\hat{y},y)=\log(1+e^{-\hat{y}y})$ converges in direction to such estimators \citep{lyu2019gradient,chizat2020implicit}.
\cref{thm:recovery} can be used to characterise max-
margin classifiers, leading to \cref{coro:margin}. 
\begin{coro}\label{coro:margin}
\label{coro:margincorrected}
\begin{equation*}
\argmin_{\substack{f\\\forall i\in[n], y_i f(x_i)\geq 1}} \bar{R}_1(f) \subset \argmin_{\substack{f\\\forall i\in[n], y_i f(x_i)\geq 1}} \|f''\|_0.
\end{equation*}
\end{coro}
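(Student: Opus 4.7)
The plan is to identify the max-margin classifier $f^*$ with the min-norm interpolator of its own values on the data, apply \cref{thm:recovery}, and then lift the interpolation sparsity conclusion back to classification.

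Given $f^* \in \argmin_{y_i f(x_i) \geq 1} \bar R_1(f)$, define $\tilde y_i := f^*(x_i)$, so $y_i \tilde y_i \geq 1$. I would first show that $f^*$ is the unique min-norm interpolator of $(x_i, \tilde y_i)$: by \cref{thm:unique} such a minimiser $h$ is unique; since $h(x_i) = \tilde y_i$ satisfies the margin constraints, $h$ is feasible for the max-margin problem, so $\bar R_1(h) \geq \bar R_1(f^*)$; conversely $f^*$ interpolates $\tilde y$, so $\bar R_1(f^*) \geq \bar R_1(h)$; equality plus uniqueness forces $h = f^*$.

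I would then verify \cref{ass:slopes} for the data $(x_i, \tilde y_i)$ so as to invoke \cref{thm:recovery}. The argument is by contradiction: if some segment of the piecewise linear interpolator of $\tilde y$ were strictly convex (or concave) with at least five consecutive indices, then at some interior index $j$ of this segment the margin must be strict ($y_j \tilde y_j > 1$), since $\tilde y_j = y_j$ throughout a same-sign region would force that region to be flat. Using the dynamic program of \cref{lemma:DP}, a small perturbation of $\tilde y_j$ toward $y_j$ strictly decreases the representational cost while preserving feasibility, contradicting the max-margin optimality of $f^*$. Applying \cref{thm:recovery} then yields $f^* \in \argmin_{g:\,g(x_i) = \tilde y_i} \|g''\|_0$.

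Finally, I would upgrade this to the classification statement by showing $\|f^{*''}\|_0 \leq \|g''\|_0$ for every feasible $g$. For such $g$, let $\hat y_i := g(x_i)$ and denote by $K(\hat y) := \min_{h:\,h(x_i) = \hat y_i} \|h''\|_0$ the min-kink interpolation count, so that $\|g''\|_0 \geq K(\hat y)$. It therefore suffices to prove $K(\tilde y) \leq K(\hat y)$. The key observation is that, by \cref{lemma:sparsest}, $K(\hat y)$ depends only on the direction-change pattern of the slope sequence $\delta_j$; for binary labels $y_i \in \{-1, +1\}$, feasible sequences $\hat y$ with fewest direction changes are precisely those that hug the labels as tightly as possible, and by the previous step $\tilde y$ is exactly such a sequence, giving the desired inequality via a direct combinatorial comparison.

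The main obstacle is the verification of \cref{ass:slopes}: one must argue rigorously, via the continuous dynamic program of \cref{lemma:DP}, that long strictly convex or concave segments of $\tilde y$ are incompatible with max-margin optimality, by exhibiting an explicit perturbation of an interior inactive target that reduces $\bar R_1$. The combinatorial comparison $K(\tilde y) \leq K(\hat y)$ is more routine once one exploits the binary structure of the labels and \cref{lemma:sparsest}.
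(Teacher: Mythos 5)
Your high-level strategy---identify the max-margin classifier $f^*$ with the minimal-norm interpolator of its own values $\tilde y_i := f^*(x_i)$, invoke \cref{thm:unique} and \cref{thm:recovery}, and transfer the sparsity conclusion back to the classification problem---is the same germ as the paper's argument, and your first step (uniqueness plus a two-way cost comparison forces $f^*$ to coincide with the min-norm interpolator of $\tilde y$) is correct. However, the two steps you yourself flag as remaining work are genuine gaps, not routine verifications. First, the verification of \cref{ass:slopes} for the full dataset $(x_i,\tilde y_i)$ is only asserted: the existence of an interior index with strict margin inside a long convex region does not by itself contradict optimality, and the claim that ``a small perturbation of $\tilde y_j$ toward $y_j$ strictly decreases the representational cost'' requires a careful analysis of how the optimal value of the dynamic program varies with a single target; nothing in \cref{lemma:DP} or \cref{lemma:unique} gives you that monotonicity, and in a convex region the feasible perturbation direction need not be the cost-decreasing one. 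The paper sidesteps this entirely: it does not interpolate $f^*$ at all $n$ points, but only at $2(K-2)$ points $\tilde x_i$ chosen inside the label-transition intervals where $f^*$ takes the exact values $\pm 1$; for that reduced dataset the induced partition satisfies $n_{k+1}=n_k+2$, so \cref{ass:slopes} holds trivially, and $f^*$ is identified with the reduced problem's unique minimiser via \cref{thm:unique} together with \cref{lemma:sparse1} (which guarantees the reduced minimiser still satisfies all the original margin constraints).

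Second, your closing comparison $K(\tilde y)\le K(\hat y)$ rests on the premise that $\tilde y$ ``hugs the labels as tightly as possible''. That premise is false in general: at non-support points the max-margin classifier overshoots. For instance, with $x=(-1,0,1,2,3,4)$ and $y=(-1,+1,+1,+1,+1,-1)$, the single-kink tent $f^*(x)=4-2\lvert x-1.5\rvert$ has cost $4\sqrt{1+1.5^2}\approx 7.21$, strictly cheaper than the label-hugging trapezoid with kinks at $0$ and $3$ (cost $2+2\sqrt{10}\approx 8.32$), so $\tilde y=(-1,1,3,3,1,-1)\ne y$. The ``direct combinatorial comparison'' you invoke therefore does not exist as stated, and you would need an independent argument that the convexity-change count of $f_{\lin}(\tilde y)$ is minimal among all feasible target vectors. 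The paper again gets this for free: by \cref{lemma:sparsest} the sparsest-interpolator count of its reduced dataset equals the universal lower bound $K-2$ on the number of kinks of any margin classifier, so no comparison between different feasible target vectors is ever needed. I would recommend restructuring your proof around interpolation constraints placed at the decision-boundary crossings of $f^*$ rather than at all data points.
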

\cref{thm:recovery} yields that the max-margin classifier is among the sparsest margin classifiers, when a free skip connection is allowed.  We believe that the left minimisation problem admits a unique solution. However, uniqueness cannot be directly derived from \cref{thm:recovery}, but would instead require another thorough analysis, using an adapted dynamic programming reformulation. 
Since the uniqueness property is of minor interest, we here prefer to focus on a direct corollary of \cref{thm:recovery}.
We emphasise that no data assumptions are required for classification tasks, apart from being univariate.

\section{Experiments}\label{sec:discussion}

This section compares, through \cref{fig:weight_decay}, the estimators that are obtained with and without counting the bias terms in the regularisation, when training a one-hidden ReLU layer neural network. The code is made available at \url{github.com/eboursier/penalising_biases}.

For this experiment, we train neural networks by minimising the empirical loss, regularised with the $\ell_2$ norm of the parameters (either with or without the bias terms) with a regularisation factor $\lambda=10^{-3}$. Each neural network has $m=200$ hidden neurons and all parameters are initialised i.i.d. as centered Gaussian variables of variance $\nicefrac{1}{\sqrt{m}}$ (similar results are observed for larger initialisation scales).\footnote{For small initialisations, both methods yield sparse estimators, since implicit regularisation of the bias terms is significant in that case. Our goal is only to illustrate the differences in the minimisers of the two problems (with and without bias penalisation), without any optimisation consideration.} There is no free skip connection here, which illustrates its benignity: the results that are expected by the above theory also happen without free skip connection.
Experiments with a free skip connection are given in \cref{app:expe} and yield similar observations.
   \begin{figure}[htbp]
        \centering
        \begin{subfigure}[b]{0.45\textwidth}
            \centering
            \includegraphics[trim=10pt 0pt 10pt 30pt, clip, width=\textwidth]{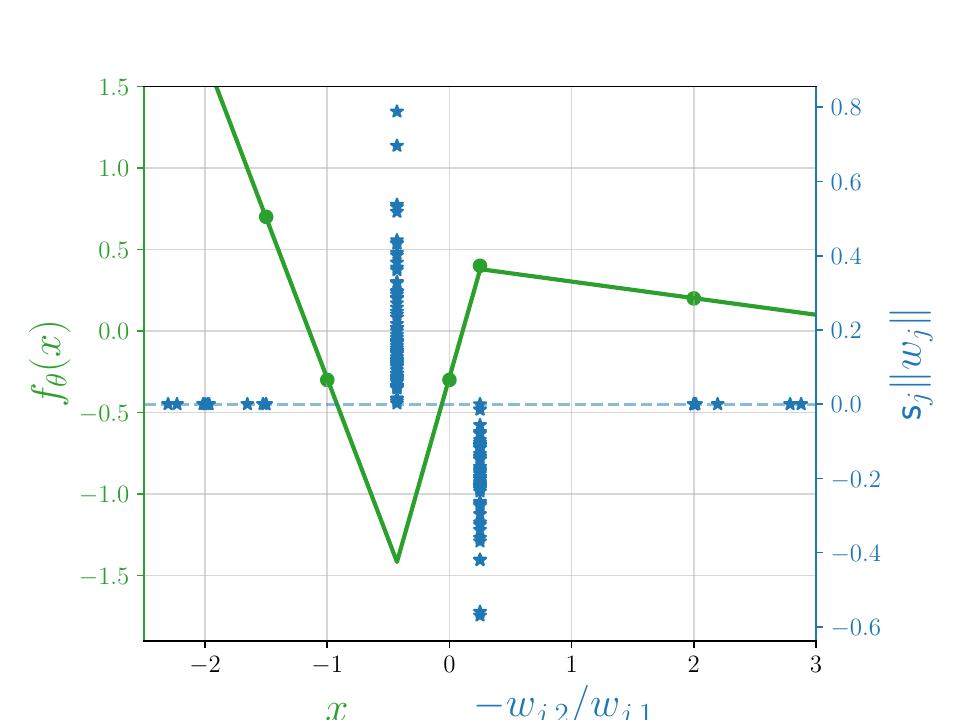}
            \caption{\label{fig:fullreg}{\small Penalising bias terms in the $\ell_2$ regularisation.}}  
        \end{subfigure}
        \hfill
        \begin{subfigure}[b]{0.45\textwidth}  
            \centering 
            \includegraphics[trim=10pt 0pt 10pt 30pt, clip, width=\textwidth]{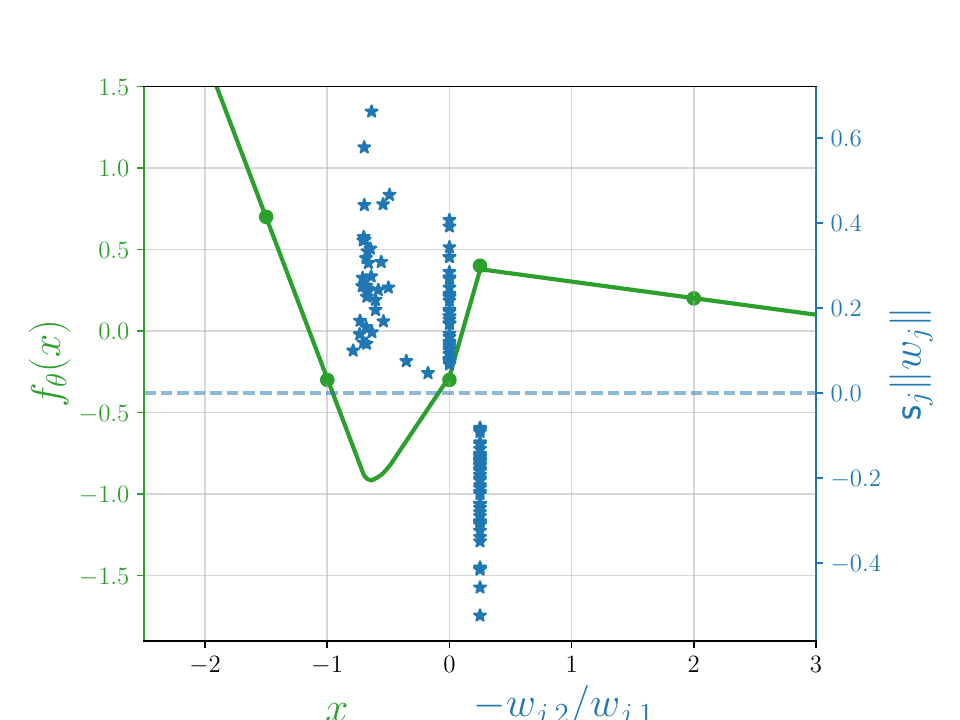}
            \caption{\label{fig:nobias}{\small Ignoring the bias terms in the $\ell_2$ regularisation.}}    
        \end{subfigure}
        \caption{\label{fig:weight_decay}  Final estimator when training one-hidden layer network with $\ell_2$ regularisation. The green dots correspond to the data and the green line is the estimated function. Each blue star represents a hidden neuron $(w_j,b_j)$ of the network: its $x$-axis value is given by $-b_j/w_{j}$, which coincides with the position of the kink of its associated ReLU; its $y$-axis value is given by the output weight $a_j$.} 
    \end{figure}
%

As predicted by our theoretical study, penalising the bias terms in the $\ell_2$ regularisation enforces the sparsity of the final estimator. The estimator of \cref{fig:fullreg} indeed counts $2$ kinks (the smallest number required to fit the data), while in \cref{fig:nobias}, the directions of the neurons are scattered. More precisely, the estimator is almost \textit{smooth} near $x=-0.5$, while the sparse estimator of \cref{fig:fullreg} is clearly not differentiable at this point. Also, the estimator of \cref{fig:nobias} includes a clear additional kink at $x=0$.
\cref{fig:weight_decay} thus illustrates that counting the bias terms in regularisation can lead to sparser estimators. 

\section{Conclusion}
This work studies the importance of parameters' norm for one hidden ReLU layer neural networks in the univariate case. In particular, the parameters' norm required to represent a function is given by $\TV{\sqrt{1+x^2}f''}$ when allowing for a free skip connection. In comparison to weight decay, which omits the bias parameters in the norm, an additional $\sqrt{1+x^2}$ weighting term appears in the representational cost. This weighting is of crucial importance since it implies uniqueness of the minimal norm interpolator. Moreover, it favors sparsity of this interpolator in number of kinks. Minimising the parameters' norm (with the biases), which can be either obtained by explicit or implicit regularisation when training neural networks, thus leads to sparse interpolators. We believe this sparsity is a reason for the good generalisation properties of neural networks observed in practice.

Although these results provide some understanding of minimal norm interpolators, extending them to more general and difficult settings remains open. Even if the representational cost might be described in the multivariate case \citep[as done by][without bias penalisation]{ongie2019function}, characterising minimal norm interpolators seems very challenging in that case.
Characterising minimal norm interpolators, with no free skip connection, also presents a major challenge for future work. 

\begin{ack}The authors thank Gal Vardi, for suggesting the proof of \cref{coro:margin}, through a direct use of \cref{thm:recovery}. 
The authors thank Karolina Drabik and Antoni Puch for identifying a mistake in the original version of \cref{thm:normapp}. 
The authors also thank Claire Boyer, Julien Fageot and Loucas Pillaud-Vivien for very  helpful discussions and feedback.
\end{ack}

\bibliographystyle{plainnat}
\bibliography{main.bib}

\begin{thebibliography}{39}
\providecommand{\natexlab}[1]{#1}
\providecommand{\url}[1]{\texttt{#1}}
\expandafter\ifx\csname urlstyle\endcsname\relax
  \providecommand{\doi}[1]{doi: #1}\else
  \providecommand{\doi}{doi: \begingroup \urlstyle{rm}\Url}\fi

\bibitem[Bach(2017)]{bach2017breaking}
Francis Bach.
\newblock Breaking the curse of dimensionality with convex neural networks.
\newblock \emph{The Journal of Machine Learning Research}, 18\penalty0
  (1):\penalty0 629--681, 2017.

\bibitem[Barron(1993)]{barron1993universal}
Andrew~R Barron.
\newblock Universal approximation bounds for superpositions of a sigmoidal
  function.
\newblock \emph{IEEE Transactions on Information theory}, 39\penalty0
  (3):\penalty0 930--945, 1993.

\bibitem[Bartlett(1996)]{bartlett1996valid}
Peter Bartlett.
\newblock For valid generalization the size of the weights is more important
  than the size of the network.
\newblock \emph{Advances in neural information processing systems}, 9, 1996.

\bibitem[Belkin et~al.(2019)Belkin, Hsu, Ma, and Mandal]{belkin2019reconciling}
Mikhail Belkin, Daniel Hsu, Siyuan Ma, and Soumik Mandal.
\newblock Reconciling modern machine-learning practice and the classical
  bias--variance trade-off.
\newblock \emph{Proceedings of the National Academy of Sciences}, 116\penalty0
  (32):\penalty0 15849--15854, 2019.

\bibitem[Bengio et~al.(2005)Bengio, Roux, Vincent, Delalleau, and
  Marcotte]{bengio2005convex}
Yoshua Bengio, Nicolas Roux, Pascal Vincent, Olivier Delalleau, and Patrice
  Marcotte.
\newblock Convex neural networks.
\newblock \emph{Advances in neural information processing systems}, 18, 2005.

\bibitem[Boursier et~al.(2022)Boursier, Pillaud-Vivien, and
  Flammarion]{boursier2022gradient}
Etienne Boursier, Loucas Pillaud-Vivien, and Nicolas Flammarion.
\newblock Gradient flow dynamics of shallow relu networks for square loss and
  orthogonal inputs.
\newblock \emph{arXiv preprint arXiv:2206.00939}, 2022.

\bibitem[Cand{\`e}s and Fernandez-Granda(2014)]{candes2014towards}
Emmanuel~J Cand{\`e}s and Carlos Fernandez-Granda.
\newblock Towards a mathematical theory of super-resolution.
\newblock \emph{Communications on pure and applied Mathematics}, 67\penalty0
  (6):\penalty0 906--956, 2014.

\bibitem[Chen et~al.(2001)Chen, Donoho, and Saunders]{chen2001atomic}
Scott~Shaobing Chen, David~L Donoho, and Michael~A Saunders.
\newblock Atomic decomposition by basis pursuit.
\newblock \emph{SIAM review}, 43\penalty0 (1):\penalty0 129--159, 2001.

\bibitem[Chistikov et~al.(2023)Chistikov, Englert, and
  Lazic]{chistikov2023learning}
Dmitry Chistikov, Matthias Englert, and Ranko Lazic.
\newblock Learning a neuron by a shallow relu network: Dynamics and implicit
  bias for correlated inputs.
\newblock \emph{arXiv preprint arXiv:2306.06479}, 2023.

\bibitem[Chizat and Bach(2020)]{chizat2020implicit}
Lenaic Chizat and Francis Bach.
\newblock Implicit bias of gradient descent for wide two-layer neural networks
  trained with the logistic loss.
\newblock In \emph{Conference on Learning Theory}, pages 1305--1338. PMLR,
  2020.

\bibitem[Debarre et~al.(2022)Debarre, Denoyelle, Unser, and
  Fageot]{debarre2022sparsest}
Thomas Debarre, Quentin Denoyelle, Michael Unser, and Julien Fageot.
\newblock Sparsest piecewise-linear regression of one-dimensional data.
\newblock \emph{Journal of Computational and Applied Mathematics},
  406:\penalty0 114044, 2022.

\bibitem[E and Wojtowytsch(2021)]{e2021representation}
Weinan E and Stephan Wojtowytsch.
\newblock Representation formulas and pointwise properties for barron
  functions, 2021.

\bibitem[Fernandez-Granda(2016)]{fernandez2016super}
Carlos Fernandez-Granda.
\newblock Super-resolution of point sources via convex programming.
\newblock \emph{Information and Inference: A Journal of the IMA}, 5\penalty0
  (3):\penalty0 251--303, 2016.

\bibitem[Goodfellow et~al.(2016)Goodfellow, Bengio, and
  Courville]{goodfellow2016deep}
Ian Goodfellow, Yoshua Bengio, and Aaron Courville.
\newblock \emph{Deep Learning}.
\newblock MIT Press, 2016.
\newblock \url{http://www.deeplearningbook.org}.

\bibitem[Gunasekar et~al.(2017)Gunasekar, Woodworth, Bhojanapalli, Neyshabur,
  and Srebro]{gunasekar2017implicit}
Suriya Gunasekar, Blake~E Woodworth, Srinadh Bhojanapalli, Behnam Neyshabur,
  and Nati Srebro.
\newblock Implicit regularization in matrix factorization.
\newblock \emph{Advances in Neural Information Processing Systems}, 30, 2017.

\bibitem[He et~al.(2016)He, Zhang, Ren, and Sun]{he2016deep}
Kaiming He, Xiangyu Zhang, Shaoqing Ren, and Jian Sun.
\newblock Deep residual learning for image recognition.
\newblock In \emph{Proceedings of the IEEE conference on computer vision and
  pattern recognition}, pages 770--778, 2016.

\bibitem[Ji and Telgarsky(2019)]{ji2019implicit}
Ziwei Ji and Matus Telgarsky.
\newblock The implicit bias of gradient descent on nonseparable data.
\newblock In \emph{Conference on Learning Theory}, pages 1772--1798. PMLR,
  2019.

\bibitem[Krogh and Hertz(1991)]{krogh1991simple}
Anders Krogh and John Hertz.
\newblock A simple weight decay can improve generalization.
\newblock \emph{Advances in neural information processing systems}, 4, 1991.

\bibitem[Kurkov{\'a} and Sanguineti(2001)]{kurkova2001bounds}
Vera Kurkov{\'a} and Marcello Sanguineti.
\newblock Bounds on rates of variable-basis and neural-network approximation.
\newblock \emph{IEEE Transactions on Information Theory}, 47\penalty0
  (6):\penalty0 2659--2665, 2001.

\bibitem[Li et~al.(2020{\natexlab{a}})Li, Luo, and Lyu]{li2020towards}
Zhiyuan Li, Yuping Luo, and Kaifeng Lyu.
\newblock Towards resolving the implicit bias of gradient descent for matrix
  factorization: Greedy low-rank learning.
\newblock \emph{arXiv preprint arXiv:2012.09839}, 2020{\natexlab{a}}.

\bibitem[Li et~al.(2020{\natexlab{b}})Li, Ma, and Wu]{li2020complexity}
Zhong Li, Chao Ma, and Lei Wu.
\newblock Complexity measures for neural networks with general activation
  functions using path-based norms.
\newblock \emph{arXiv preprint arXiv:2009.06132}, 2020{\natexlab{b}}.

\bibitem[Lyu and Li(2019)]{lyu2019gradient}
Kaifeng Lyu and Jian Li.
\newblock Gradient descent maximizes the margin of homogeneous neural networks.
\newblock In \emph{International Conference on Learning Representations}, 2019.

\bibitem[Nacson et~al.(2022)Nacson, Ravichandran, Srebro, and
  Soudry]{nacson2022implicit}
Mor~Shpigel Nacson, Kavya Ravichandran, Nathan Srebro, and Daniel Soudry.
\newblock Implicit bias of the step size in linear diagonal neural networks.
\newblock In \emph{International Conference on Machine Learning}, pages
  16270--16295. PMLR, 2022.

\bibitem[Neyshabur et~al.(2014)Neyshabur, Tomioka, and
  Srebro]{neyshabur2014search}
Behnam Neyshabur, Ryota Tomioka, and Nathan Srebro.
\newblock In search of the real inductive bias: On the role of implicit
  regularization in deep learning.
\newblock \emph{arXiv preprint arXiv:1412.6614}, 2014.

\bibitem[Ng(2011)]{ng2011sparse}
Andrew Ng.
\newblock Sparse autoencoder.
\newblock \emph{CS294A Lecture notes}, 72\penalty0 (2011):\penalty0 1--19,
  2011.

\bibitem[Ongie et~al.(2019)Ongie, Willett, Soudry, and
  Srebro]{ongie2019function}
Greg Ongie, Rebecca Willett, Daniel Soudry, and Nathan Srebro.
\newblock A function space view of bounded norm infinite width relu nets: The
  multivariate case.
\newblock In \emph{International Conference on Learning Representations (ICLR
  2020)}, 2019.

\bibitem[Parhi and Nowak(2021)]{parhi2021banach}
Rahul Parhi and Robert~D Nowak.
\newblock Banach space representer theorems for neural networks and ridge
  splines.
\newblock \emph{Journal of Machine Learning Research}, 22\penalty0
  (43):\penalty0 1--40, 2021.

\bibitem[Parhi and Nowak(2023)]{parhi2023deep}
Rahul Parhi and Robert~D Nowak.
\newblock Deep learning meets sparse regularization: A signal processing
  perspective.
\newblock \emph{arXiv preprint arXiv:2301.09554}, 2023.

\bibitem[Poon et~al.(2019)Poon, Keriven, and Peyr{\'e}]{poon2019support}
Clarice Poon, Nicolas Keriven, and Gabriel Peyr{\'e}.
\newblock Support localization and the fisher metric for off-the-grid sparse
  regularization.
\newblock In \emph{The 22nd International Conference on Artificial Intelligence
  and Statistics}, pages 1341--1350. PMLR, 2019.

\bibitem[Razin and Cohen(2020)]{razin2020implicit}
Noam Razin and Nadav Cohen.
\newblock Implicit regularization in deep learning may not be explainable by
  norms.
\newblock \emph{Advances in neural information processing systems},
  33:\penalty0 21174--21187, 2020.

\bibitem[Safran et~al.(2022)Safran, Vardi, and Lee]{safran2022effective}
Itay Safran, Gal Vardi, and Jason~D Lee.
\newblock On the effective number of linear regions in shallow univariate relu
  networks: Convergence guarantees and implicit bias.
\newblock \emph{Advances in Neural Information Processing Systems},
  35:\penalty0 32667--32679, 2022.

\bibitem[Sanford et~al.(2022)Sanford, Ardeshir, and Hsu]{sanford2022intrinsic}
Clayton Sanford, Navid Ardeshir, and Daniel Hsu.
\newblock Intrinsic dimensionality and generalization properties of the
  $\mathcal{R}$-norm inductive bias.
\newblock \emph{arXiv preprint arXiv:2206.05317}, 2022.

\bibitem[Savarese et~al.(2019)Savarese, Evron, Soudry, and
  Srebro]{savarese2019infinite}
Pedro Savarese, Itay Evron, Daniel Soudry, and Nathan Srebro.
\newblock How do infinite width bounded norm networks look in function space?
\newblock In \emph{Conference on Learning Theory}, pages 2667--2690. PMLR,
  2019.

\bibitem[Shevchenko et~al.(2022)Shevchenko, Kungurtsev, and
  Mondelli]{shevchenko2022mean}
Alexander Shevchenko, Vyacheslav Kungurtsev, and Marco Mondelli.
\newblock Mean-field analysis of piecewise linear solutions for wide relu
  networks.
\newblock \emph{The Journal of Machine Learning Research}, 23\penalty0
  (1):\penalty0 5660--5714, 2022.

\bibitem[Soudry et~al.(2018)Soudry, Hoffer, Nacson, Gunasekar, and
  Srebro]{soudry2018implicit}
Daniel Soudry, Elad Hoffer, Mor~Shpigel Nacson, Suriya Gunasekar, and Nathan
  Srebro.
\newblock The implicit bias of gradient descent on separable data.
\newblock \emph{The Journal of Machine Learning Research}, 19\penalty0
  (1):\penalty0 2822--2878, 2018.

\bibitem[Stewart et~al.(2022)Stewart, Bach, Berthet, and
  Vert]{stewart2022regression}
Lawrence Stewart, Francis Bach, Quentin Berthet, and Jean-Philippe Vert.
\newblock Regression as classification: Influence of task formulation on neural
  network features.
\newblock \emph{arXiv preprint arXiv:2211.05641}, 2022.

\bibitem[Vardi and Shamir(2021)]{vardi2021implicit}
Gal Vardi and Ohad Shamir.
\newblock Implicit regularization in relu networks with the square loss.
\newblock In \emph{Conference on Learning Theory}, pages 4224--4258. PMLR,
  2021.

\bibitem[Wang et~al.(2021)Wang, Lacotte, and Pilanci]{wang2021hidden}
Yifei Wang, Jonathan Lacotte, and Mert Pilanci.
\newblock The hidden convex optimization landscape of regularized two-layer
  relu networks: an exact characterization of optimal solutions.
\newblock In \emph{International Conference on Learning Representations}, 2021.

\bibitem[Zhang et~al.(2021)Zhang, Bengio, Hardt, Recht, and
  Vinyals]{zhang2021understanding}
Chiyuan Zhang, Samy Bengio, Moritz Hardt, Benjamin Recht, and Oriol Vinyals.
\newblock Understanding deep learning (still) requires rethinking
  generalization.
\newblock \emph{Communications of the ACM}, 64\penalty0 (3):\penalty0 107--115,
  2021.

\end{thebibliography}

\pagebreak
\appendix

\section{Discussing \cref{ass:slopes}}\label{app:assumption}

\cref{thm:recovery} requires \cref{ass:slopes}, which assumes that there are no convex (or concave) regions of $f_{\lin}$ with at least $6$ data points. Actually, when there is a convex (or concave) region with exactly $6$ data points, i.e. $n_{k+1}=n_k+4$, \cref{thm:recovery} holds (for this region) if and only if for $i=n_k+1$:
\begin{gather}\label{eq:sparsecondition1}
\frac{\langle u_i,w_{i-1}\rangle\langle u_{i+1},w_{i+1}\rangle}{\|w_{i-1}\|\ \|w_{i+1}\|} - \langle u_{i}, u_{i+1} \rangle\leq \sqrt{\|u_i\|^2-\frac{\langle u_i,w_{i-1}\rangle^2}{\|w_{i-1}\|^2}}\sqrt{\|u_{i+1}\|^2-\frac{\langle u_{i+1},w_{i+1}\rangle^2}{\|w_{i+1}\|^2}}
\end{gather}
\vspace{0.5em}
\begin{gather}
\text{where }\quad u_i=(x_i,1); \quad w_{i-1} = \frac{\delta_{i}-\delta_{i-1}}{\delta_{i}-\delta_{i-2}}(x_{i},1)+\frac{\delta_{i-1}-\delta_{i-2}}{\delta_{i}-\delta_{i-2}}(x_{i-1},1); \notag\\
\text{and }\quad  u_{i+1}=(x_{i+1},1); \quad w_{i+1} = \frac{\delta_{i+2}-\delta_{i+1}}{\delta_{i+2}-\delta_{i}}(x_{i+2},1)+\frac{\delta_{i+1}-\delta_{i}}{\delta_{i+2}-\delta_{i}}(x_{i+1},1).\notag
\end{gather}
The proof of this result (omitted here) shows that the problem 
\begin{equation*}
\min_{(s_i,s_{i+1})\in[\delta_{i-1},\delta_i]\times[\delta_i,\delta_{i+1}]} g_i(s_i,s^*_{i-1})+g_{i+1}(s_{i+1,s_i})+g_{i+2}(s^*_{i+2},s_{i+1})
\end{equation*}
is minimised for $(s_i,s_{i+1}) = (\delta_i,\delta_i)$ if and only if \cref{eq:sparsecondition1} holds, which corresponds to the (unique) sparsest way to interpolate the data on this convex region. To show that the minimum is reached at that point, we can first notice that $s^*_{i-1}=\delta_{i-2}$ and $s^*_{i+2}=\delta_{i+2}$. Then, it requires a meticulous study of the directional derivatives of the (convex but non-differentiable) objective function at the point $(\delta_i,\delta_i)$.

\medskip

\cref{fig:nosparse} below illustrates a case of $6$ data points, where the condition of \cref{eq:sparsecondition1} does not hold. Clearly, the minimal norm interpolator differs from the (unique) sparsest interpolator in that case. 

    \begin{figure}[htbp]
        \centering
        \includegraphics[trim=0pt 0pt 0pt 0pt, clip, width=0.8\textwidth]{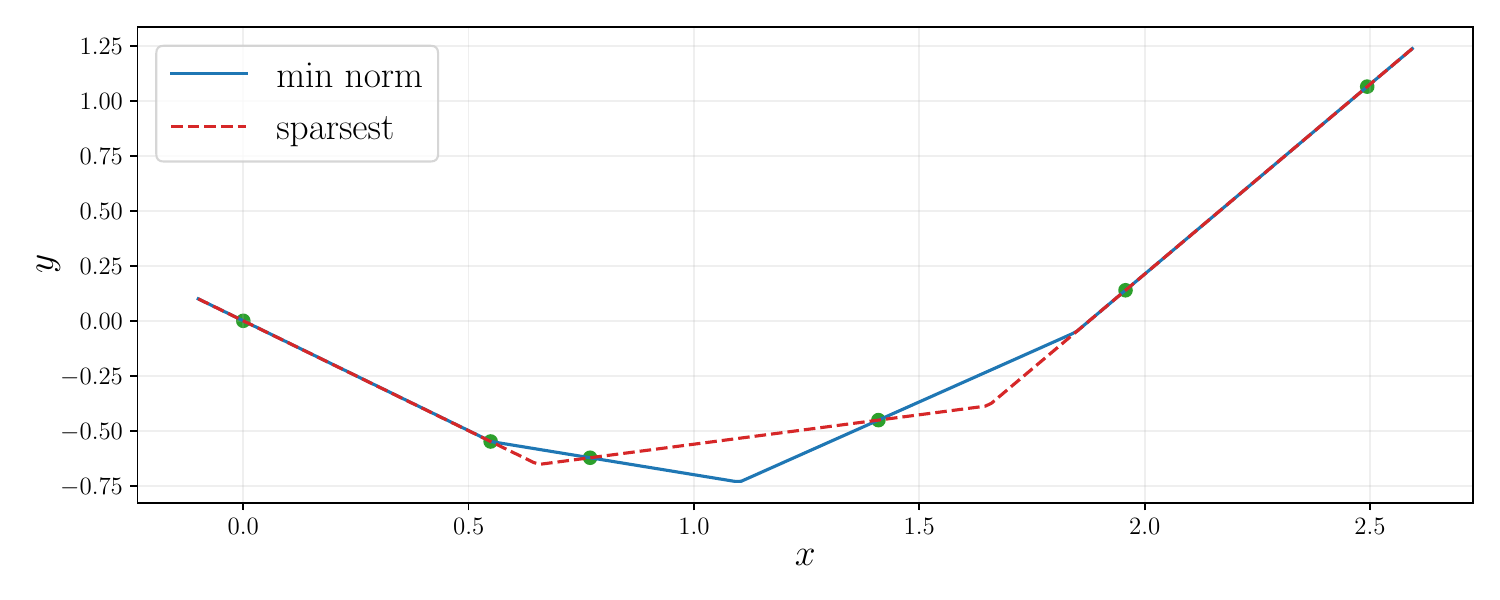}
        \caption{Case of difference between minimal norm interpolator and sparsest interpolator.}
        \label{fig:nosparse}
    \end{figure}

When considering more than $6$ points, studying the minimisation problem becomes cumbersome and no simple condition of sparse recovery can be derived. When generating random data with large convex regions, e.g. $35$ points, the minimal norm interpolator is rarely among the sparsest interpolators. Moreover, it seems that its number of kinks could be arbitrarily close to $34$, which is the trivial upper bound of the number of kinks given by \cref{lemma:sparse1}; while the sparsest interpolators only have $17$ kinks.

\section{Additional experiments}
\label{app:expe}

\cref{fig:minnorm_interpolator} shows the minimiser of \cref{eq:mininterpolator1} on the toy example of \cref{fig:partition}. The minimising function is computed thanks to the dynamic program given by \cref{lemma:DP}. Although the variables of this dynamic program are continuous, we can efficiently solve it by approximating the constraint space of each slope $s_i$ as a discrete grid of $[\delta_{i-1},\delta_i]$ thanks to \cref{lemma:sparse1}. 
For the data used in \cref{fig:minnorm_interpolator}, \cref{ass:slopes} holds. It is clear that the minimiser is very sparse, counting only $4$ kinks. The partition given by \cref{fig:partition} then shows that this is indeed the smaller possible number of kinks, thanks to \cref{lemma:sparsest}. On the other hand, the canonical piecewise linear interpolator $f_\lin$ is is not as sparse and counts $7$ kinks here.

    \begin{figure}[htbp]
        \centering
        \includegraphics[trim=55pt 5pt 70pt 30pt, clip, width=0.75\textwidth]{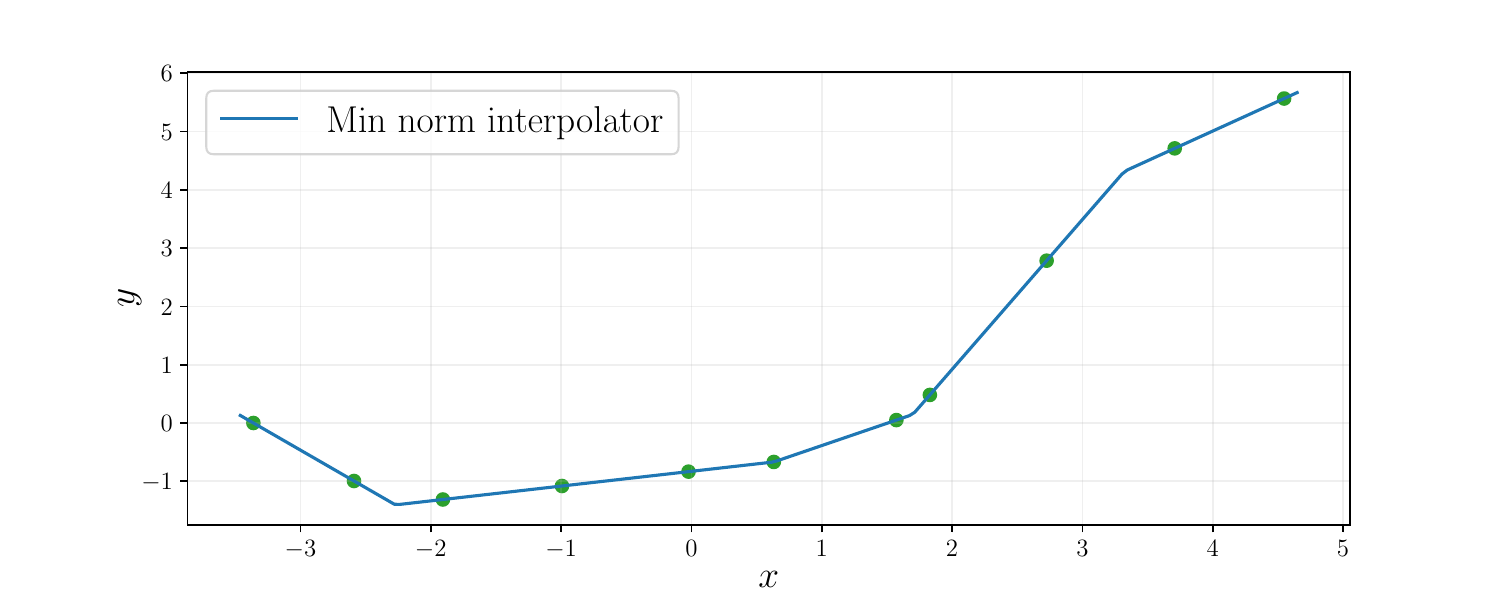}
        \caption{Minimiser of \cref{eq:mininterpolator1} on a toy data example.}
        \label{fig:minnorm_interpolator}
    \end{figure}

\cref{fig:weight_decayskip} considers the exact same setting as \cref{fig:weight_decay} in \cref{sec:discussion}. The only difference is that we here allow a free skip connection in the neural network architecture, which represents the setting exactly described by \cref{thm:recovery}.

   \begin{figure}[htbp]
        \centering
        \begin{subfigure}[b]{0.45\textwidth}
            \centering
            \includegraphics[trim=10pt 0pt 10pt 30pt, clip, width=\textwidth]{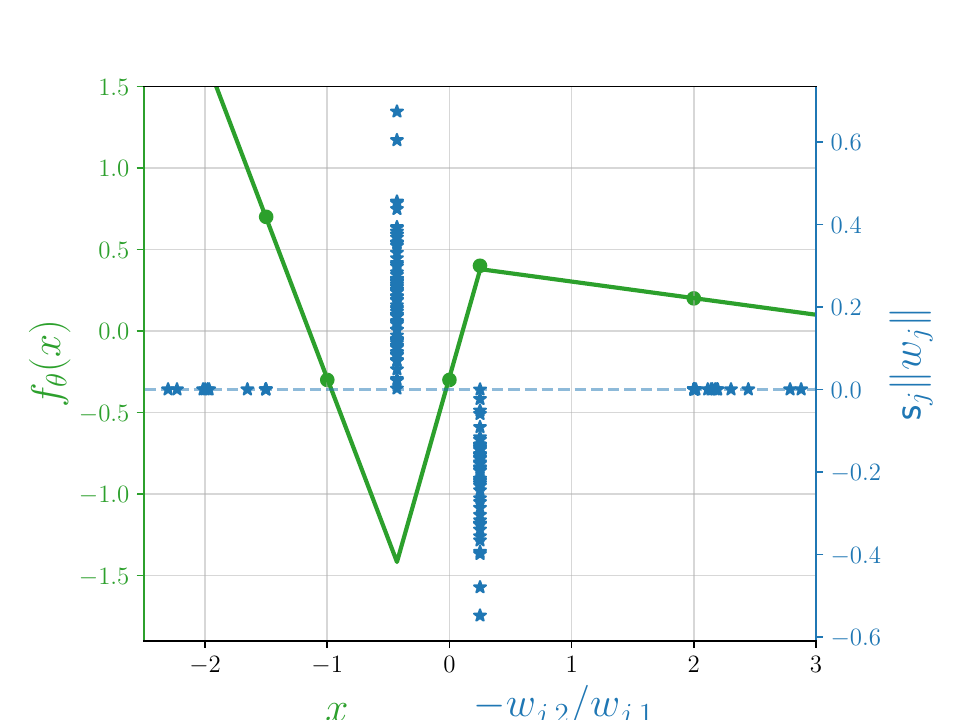}
            \caption{\label{fig:fullregskip}{\small Final estimator when penalising bias terms in the $\ell_2$ regularisation.}}  
        \end{subfigure}
        \hfill
        \begin{subfigure}[b]{0.45\textwidth}  
            \centering 
            \includegraphics[trim=10pt 0pt 10pt 30pt, clip, width=\textwidth]{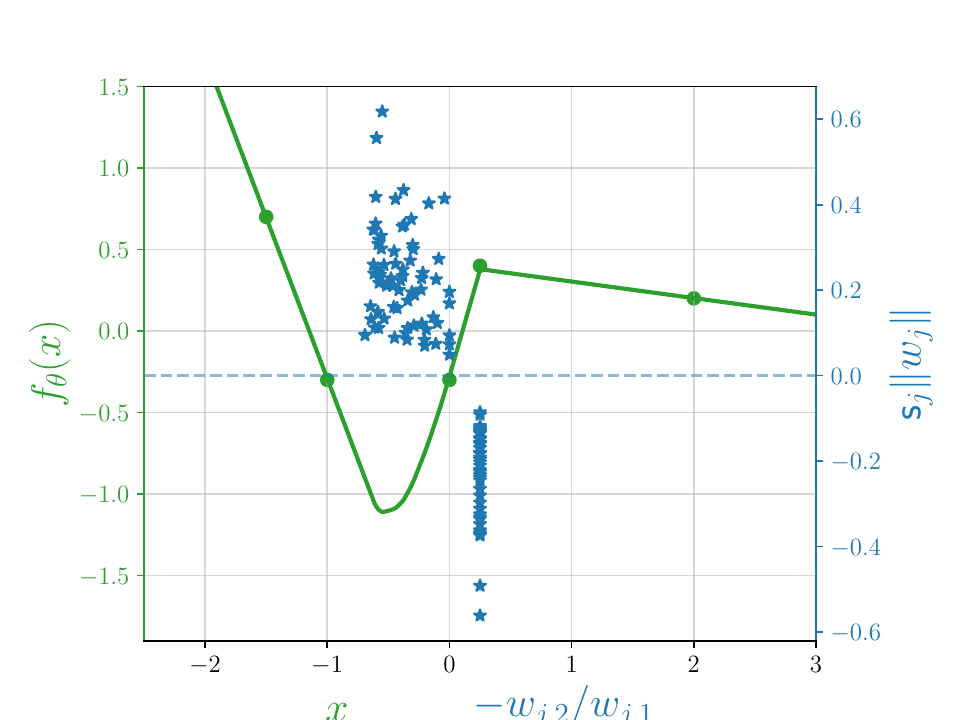}
            \caption{\label{fig:nobiasskip}{\small Final estimators when training one-hidden ReLU neural networks with $\ell_2$ regularisation \textbf{and a free skip connection}.}}    
        \end{subfigure}
        \caption{\label{fig:weight_decayskip}  Final estimators when training one-hidden ReLU neural networks with $\ell_2$ regularisation. The green dots correspond to the data, while the green line is the estimated function. Each blue star represents a hidden neuron $(w_j,b_j)$ of the network: its $x$-axis value is given by $-b_j/w_{j}$, which coincides with the position of the kink of its associated ReLU; its $y$-axis value is given by the output layer weight $a_j$.} 
    \end{figure}
%
%
    Similar observations can be made: the obtained estimator when counting the bias terms in regularisation only has $2$ kinks, while the estimator obtained by omitting the biases in the regularisation is much smoother (and thus much less sparse in the number of kinks). The only difference is that the latter estimator here does not have a clear kink at $x=0$, but is instead even smoother on the interval $[-0.5, 0]$. This is explained by the presence of more scattered kinks in this interval. Despite this slight difference, the main observation remains unchanged: the estimator is a sparsest one when counting the bias terms, while it counts a lot of kinks (and is even smooth) when omitting the biases.

\section{Proofs of \cref{sec:norm}}\label{app:proofnorm}

\cref{thm:normapp} below extends the characterisation of the representational cost $\bar{R}_1(f)$ of \cref{thm:norm}.

\begin{thm}\label{thm:normapp}
For any Lipschitz function $f:\R\to\R$,
\begin{align*}
&\bar{R}_1(f) = \TV{\sqrt{1+x^2}f''} = \int_{\R}\sqrt{1+x^2} \ \df|f''|(x)\\
\text{and} \quad &\bar{R}(f) = \TV{\sqrt{1+x^2}f''} + \inf_{x\in \cC_f} h(x-x_f),
\end{align*}
where \begin{align*}
x_f &= \left(f'(+\infty)+f'(-\infty),2f(0)-\int_{\R}|x|\df f''(x)\right)\\
\cC_f &= \left\{ \int_{\R}\varphi(x)\df f''(x), -\int_{\R}x \varphi(x)\df f''(x) \mid \|\varphi\|_{\infty} \leq 1 \right\},
\end{align*}
and $h:\R^2\to \R_+$ is defined as
\begin{equation*}
h(z_1,z_2) =  \begin{cases}\sqrt{z_1^2+z_2^2} \text{ if } |z_2|\leq \frac{|z_1|}{\sqrt{3}}\\
\frac{\sqrt{3}}{2}|z_1| + \frac{1}{2}|z_2| \text{ if }  |z_2|>\frac{|z_1|}{\sqrt{3}}\end{cases}.
\end{equation*}
For any non-Lipschitz function, $\bar{R}_1(f)=\bar{R}(f)=+\infty$.
\end{thm}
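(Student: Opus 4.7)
My plan is to work from the measure-on-the-sphere reformulation recalled in the paper: $\bar{R}_1(f) = \inf_{\mu,a_0,b_0}\TV{\mu}$ subject to $f=f_{\mu,a_0,b_0}$, and $\bar{R}(f) = \inf_\mu\TV{\mu}$ subject to $f=f_\mu$. The central ingredient is the distributional identity $\tfrac{d^2}{dx^2}\sigma(wx+b)=|w|\,\delta_{-b/w}$ for $w\neq 0$, which gives $f_\mu''=T_*(|w|\mu)$ with $T(w,b)=-b/w$. On the upper hemisphere $\bS_1^+=\{w>0\}$, the map $T:\bS_1^+\to\R$ is a bijection with $T^{-1}(x)=(1,-x)/\sqrt{1+x^2}$, and one has the pointwise identity $\sqrt{1+T^2}\cdot|w|\equiv 1$ on $\bS_1\setminus\{(0,\pm1)\}$. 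A preliminary reduction using $\sigma(z)-\sigma(-z)=z$ reflects any mass supported on $\{w<0\}$ to $\{w>0\}$ at the sole cost of an added affine term (absorbed freely in $\bar{R}_1$, but tracked in $\bar{R}$), while mass at $(0,\pm 1)$ contributes only constants.

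For $\bar{R}_1(f)$ the lower bound follows from $|f_\mu''|\leq T_*(|w||\mu|)$ as measures, combined with the pointwise identity: $\int\sqrt{1+x^2}\,d|f''|\leq\int\sqrt{1+T^2}|w|\,d|\mu|\leq\TV{\mu}$. For the matching upper bound I would construct $\mu^\star=(T^{-1})_*(\sqrt{1+x^2}f'')$ supported on $\bS_1^+$; a change-of-variables computation yields $\TV{\mu^\star}=\int\sqrt{1+x^2}\,d|f''|$ and $f_{\mu^\star}(x)=\int(x-t)_+\,df''(t)$, which equals $f(x)$ up to an affine term via integration by parts using the Green's kernel of the second derivative. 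The non-Lipschitz statement is immediate: if $\bar{R}_1(f)<\infty$ then $|f'(x)|\leq\int|w|\,d|\mu|\leq\TV{\mu}<\infty$, forcing $f$ Lipschitz.

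For $\bar{R}(f)$ the affine part must be produced by $\mu$ itself, so I would parameterise any admissible $\mu$ by the pair $\nu^\pm=T_*(|w|\mu|_{\bS_1^\pm})$ and the scalar $c_0=\mu(\{(0,1)\})$ (mass at $(0,-1)$ is strictly wasteful). The constraint $f_\mu''=f''$ reads $\nu^++\nu^-=f''$; writing $\nu^\pm=\tfrac12(f''\pm \varphi\cdot \mathrm{sgn}(f'')\cdot|f''|)$ with $\varphi\in L^\infty(|f''|)$ (any singular part in the splitting strictly inflates the cost), a direct computation gives $\TV{\mu}=\int\sqrt{1+x^2}\max(1,|\varphi|)\,d|f''|+|c_0|$, and the three scalar constraints $f_\mu(0)=f(0)$ and $f_\mu'(\pm\infty)=f'(\pm\infty)$ collapse into two linear equations matching $\bigl(\int\varphi\,df'',\,-\int x\varphi\,df''\bigr)+c_0\cdot(0,1)$ against $x_f$. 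On $\|\varphi\|_\infty\le 1$ the base TV cost is exactly $\TV{\sqrt{1+x^2}f''}$; any remaining mismatch in $\R^2$ must be absorbed by additional atoms, and the key geometric fact is that a pair of opposite-sign atoms placed symmetrically at $(w,b)$ and $(-w,-b)$ on $\bS_1$ adds a contribution proportional to $(1,-x_1)/\sqrt{1+x_1^2}$ per unit TV. As $x_1$ ranges over $\R$ this sweeps every unit direction in $\R^2$ (together with the atom at $(0,1)$ covering the direction $(0,1)$), so the extra cost is exactly the Euclidean distance $D(x_f,\cC_f)$.

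The main obstacle will be the careful bookkeeping of $f_\mu'(\pm\infty)$ and $f_\mu(0)$ in terms of the hemisphere decomposition: one gets $f_\mu'(+\infty)=\nu^+(\R)$, $f_\mu'(-\infty)=-\nu^-(\R)$ and a delicate identity for $f_\mu(0)$ mixing $(-x)_+$ and $x_+$ contributions. A secondary subtlety is ruling out that strategies with $|\varphi|>1$ on parts of the support of $f''$ can beat a $\|\varphi\|_\infty\le 1$ baseline plus Euclidean-cost corrections; this follows by decomposing any such $\varphi$ into a bounded part plus an atomic correction whose TV cost is already at least the Euclidean cost of the same adjustment of $x_f$.
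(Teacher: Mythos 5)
Your proposal is correct and follows essentially the same route as the paper's proof: under the change of variables between kink locations $x=-b/w$ and the angular parameterisation of $\bS_1$, your hemisphere decomposition $\nu^{\pm}$ with $\nu^++\nu^-$ pinned to $f''$ and $\nu^+-\nu^-$ free modulo two moment constraints is exactly the paper's $\nu_+/\nu_\perp$ splitting, and your closing step (extra TV needed to realise a residual moment vector equals its Euclidean norm, via antipodal atom pairs sweeping all directions for the upper bound and a Cauchy--Schwarz-type bound for the lower) is the content of \cref{lemma:auxiliaryrepresentational}. The only caveat is that the delicate bookkeeping you flag for $f_\mu(0)$ and $f_\mu'(\pm\infty)$ is precisely where the paper spends its effort (\cref{eq:nuperp}), so that step should not be left as a remark in a full write-up.
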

\begin{proof}
We only prove the equality on $\bar{R}(f)$ here. The other part of \cref{thm:normapp} can be directly deduced from this proof. First assume that $\bar{R}(f)$ is finite. We can then consider some $\mu\in\cM(\bS_1)$ such that for any $x\in\R$, $f(x)=\int_{\bS_1}\sigma(wx+b)\df \mu(w,b)$. Note that $f$ is necessarily $\TV{\mu}$-Lipschitz, which proves the second part of \cref{thm:normapp}.
Without loss of generality, we can parameterise $\bS_1$ on $\theta\in [-\frac{\pi}{2},\frac{3\pi}{2})$ with $T^{-1}(\theta) = (\cos\theta,\sin\theta)$. If we note $\nu=T_{\#}\mu$ the pushforward measure of $\mu$ by~$T$, we then have 
\begin{equation}\label{eq:variablechange}
f(x) = \int_{-\frac{\pi}{2}}^{\frac{3\pi}{2}} \sigma(x\cos\theta+\sin\theta)\df\nu(\theta).
\end{equation}
Since the total variation of $\mu$ and thus $\nu$ is bounded, we can derive under the integral sign:
\begin{equation*}
f'(x) = \int_{-\frac{\pi}{2}}^{\frac{3\pi}{2}} \cos \theta \iind{x\cos\theta+\sin\theta\geq 0}\df\nu(\theta).
\end{equation*}
Now note that $x\cos\theta+\sin\theta\geq 0$ if and only if $\theta \in [-\arctan(x), \pi-\arctan(x)]$ since $\theta \in [-\frac{\pi}{2},\frac{3\pi}{2})$, i.e.
\begin{equation*}
f'(x) = \int_{-\arctan x}^{\pi - \arctan x} \cos \theta \df\nu(\theta).
\end{equation*}
When deriving this expression over $x$, we finally get for the distribution $f''$
\begin{align*}
\df f''(x) &= -\frac{\cos\left(\pi-\arctan(x)\right)\df\nu\left(\pi-\arctan(x)\right) - \cos\left(-\arctan(x)\right)\df\nu\left(-\arctan(x)\right)}{1+x^2} \\
& = \frac{\cos(\arctan(x))(\df\nu\left(\pi-\arctan(x)\right)+\df\nu\left(-\arctan(x)\right))}{1+x^2}.
\end{align*}
This equality is straightforward for continuous distributions $\nu$. Extending it to any distribution $\nu$ requires some extra work but can be obtained following the typical definition of distributional derivative.

Defining $\nu_+(\theta) = \nu(\theta)+\nu(\pi+\theta)$ for any $\theta\in(-\frac{\pi}{2},\frac{\pi}{2})$ and noting that $\cos(\arctan x) = \frac{1}{\sqrt{1+x^2}}$,
\begin{equation*}
\forall x \in \R, \sqrt{1+x^2} \df f''(x) = \frac{\df \nu_+(-\arctan(x))}{1+x^2}.
\end{equation*}
Or equivalently, for any $\theta\in(-\frac{\pi}{2},\frac{\pi}{2})$
\begin{equation}\label{eq:nu+}
\frac{\df f''(-\tan\theta)}{\cos\theta} = \cos^2(\theta)\ \df\nu_+(\theta).
\end{equation}
Similarly to the proof of \citet{savarese2019infinite}, $f''$ fixes $\nu_+$ and the only degree of freedom is on $\nu_{\perp}(\theta) \coloneqq \nu(\theta)-\nu(\pi+\theta)$ and\footnote{$\nu(-\pi/2)$ is also a degree of freedom, but it is obviously optimal to set it as $0$, since it has no impact on the function $f$. We thus assume w.l.o.g. that $\nu(-\pi/2)$ in the whole proof.} $\nu(\pi/2)$. The proof now determines which valid $\nu_{\perp}$ and $\nu(\pi/2)$ minimise $\TV{\mu}=\TV{\nu}$.
\cref{eq:variablechange} implies the following condition on both
\begin{align*}
f(x) & = \frac{1}{2} \int_{-\frac{\pi}{2}}^{\frac{\pi}{2}}\sigma(-x\cos\theta-\sin\theta)\df(\nu_++\nu_{\perp})(\theta) +\frac{1}{2} \int_{-\frac{\pi}{2}}^{\frac{\pi}{2}}\sigma(x\cos\theta+\sin\theta)\df(\nu_+-\nu_{\perp})(\theta) + \nu(\pi/2)\\
& = \frac{1}{2}\int_{-\frac{\pi}{2}}^{\frac{\pi}{2}} |x\cos\theta+\sin\theta|\df\nu_+(\theta) + \frac{x}{2} \int_{-\frac{\pi}{2}}^{\frac{\pi}{2}} \cos\theta\df\nu_{\perp}(\theta) +  \frac{1}{2}\int_{-\frac{\pi}{2}}^{\frac{\pi}{2}} \sin\theta\df\nu_{\perp}(\theta) + \nu(\pi/2).
\end{align*}
While $\nu_+$ is given by $f''$, $\nu_{\perp}$ and $ \nu(\pi/2)$ hold for the affine part of $f$. The above equality directly leads to the following condition on $\nu_{\perp}$
\begin{equation}\label{eq:nuperp}
\begin{cases}\int_{-\frac{\pi}{2}}^{\frac{\pi}{2}} \cos\theta\ \df\nu_{\perp}(\theta) = f'(+\infty) + f'(\infty)\\
\int_{-\frac{\pi}{2}}^{\frac{\pi}{2}} \sin\theta\ \df\nu_{\perp}(\theta) = 2f(0) - 2\nu(\pi/2) - \int_{-\frac{\pi}{2}}^{\frac{\pi}{2}} |\sin\theta|\df\nu_+(\theta)
\end{cases}.
\end{equation}
Now note that $2\TV{\mu} = \TV{\nu_+ + \nu_{\perp}} + \TV{\nu_+-\nu_{\perp}} + 2|\nu(\pi/2)|$, so that $\bar{R}(f)$ is given by
\begin{equation}\label{eq:barRf}
\begin{gathered}
2\bar{R}(f) = \min_{\nu_{\perp},\nu(\pi/2)}  \TV{\nu_+ + \nu_{\perp}} + \TV{\nu_+-\nu_{\perp}} + 2|\nu(\pi/2)| \quad
\text{ such that } \nu_{\perp},\nu(\pi/2) \text{ verify Equation~\eqref{eq:nuperp}.} 
\end{gathered}
\end{equation}
\cref{lemma:auxiliaryrepresentational} below then implies -- up to a change of variable in the integral -- that for a fixed value of $\nu(\pi/2)=\beta$, the minimum over $\nu_{\perp}$ of $\TV{\nu_+ + \nu_{\perp}} + \TV{\nu_+-\nu_{\perp}}$ such that \cref{eq:nuperp} holds is equal to $2\TV{\nu_+} + 2 \inf_{x\in\cC_f} \|x - x_f + 2\beta e_2\|_2$, where $e_2$ is the second vector of the canonical basis of $\R^2$. In consequence, \cref{eq:barRf} becomes
\begin{align*}
\bar{R}(f) &=  \inf_{\beta\in\R} \left(\inf_{\nu_\perp}  \frac{1}{2}\TV{\nu_+ + \nu_{\perp}} + \frac{1}{2}\TV{\nu_+-\nu_{\perp}} + |\beta| \quad
\text{ such that } \nu_{\perp} \text{ verifies Equation~\eqref{eq:nuperp} with } \beta=\nu_\perp\right)\\
&=\TV{\nu_+} + \inf_{\beta\in\R} \left(\inf_{x\in\cC_f} \| x - x_f + 2\beta e_2\|_2 \right) + |\beta| \\
&= \TV{\nu_+} + \inf_{x\in\cC_f}\inf_{\beta\in\R} \left( \| x - x_f + 2\beta e_2\|_2  + |\beta| \right),
\end{align*}
where $x_f$ and $\cC_f$ are defined in \cref{thm:normapp}. 
\cref{lemma:hfunction} then yields that
\begin{equation*}
\bar{R}(f) = \TV{\nu_+} + \inf_{x\in\cC_f} h(x-x_f).
\end{equation*}
\cref{eq:nu+} leads with a simple change of variable when $\bar{R}(f)$ is finite to
\begin{equation*}
\bar{R}(f) = \TV{\sqrt{1+x^2}f''} +  \inf_{x\in\cC_f} h(x-x_f).
\end{equation*}
Reciprocally, when $\TV{\sqrt{1+x^2}f''}$ is finite, we can define $\nu_+$ as in \cref{eq:nu+} and $\nu_{\perp}$ as a sum of Diracs in $-\frac{\pi}{2}$ and $0$ verifying \cref{eq:nuperp}. The corresponding $\mu$ is then of finite total variation, implying that $\bar{R}(f)$ is finite. This ends the proof of the first part of \cref{thm:normapp}:
\begin{equation*}
\bar{R}(f) = \TV{\sqrt{1+x^2}f''} +  \inf_{x\in\cC_f} h(x-x_f).
\end{equation*}
For $\bar{R}_1(f)$, the analysis is simpler since there is no constraint on $\nu_{\perp}$ nor $\nu(\pi/2)$, whose optimal choice is then given by $\nu_{\perp}=0$ and $\nu(\pi/2)=0$.
\end{proof}

\begin{lem}\label{lemma:auxiliaryrepresentational}
The minimisation program
\begin{equation}\label{eq:prog1}
\begin{gathered}
\min_{\nu_{\perp}} \int_{-\frac{\pi}{2}}^{\frac{\pi}{2}} \df|\nu_+ + \nu_{\perp} | + \int_{-\frac{\pi}{2}}^{\frac{\pi}{2}} \df|\nu_+ - \nu_{\perp} | \\
\text{such that } \left(\int_{-\pi/2}^{\pi/2} \cos\theta\ \df\nu_{\perp}(\theta), \int_{-\pi/2}^{\pi/2} \sin\theta\ \df\nu_{\perp}(\theta) \right) = (a,b)
\end{gathered}
\end{equation}
is equivalent to
\begin{equation}\label{eq:prog2}
\begin{gathered}
2 \int_{-\frac{\pi}{2}}^{\frac{\pi}{2}} \df|\nu_+| +2\min_{u \in \mathcal{C}}  \|(a,b) - u\|,\\
\text{where }\mathcal{C} = \left\lbrace \left(\int_{-\frac{\pi}{2}}^{\frac{\pi}{2}}\cos(\theta)\varphi(\theta)\df \nu_+(\theta), \int_{-\frac{\pi}{2}}^{\frac{\pi}{2}}\sin(\theta)\varphi(\theta)\df \nu_+(\theta) \right) \mid \|\varphi\|_{\infty} \leq 1 \right\rbrace.
\end{gathered}
\end{equation}
\end{lem}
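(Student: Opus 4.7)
The plan is to reduce the objective to a fixed piece of size $2\TV{\nu_+}$ plus an auxiliary minimum-total-variation problem with prescribed trigonometric moments. I would begin by decomposing $\nu_{\perp}$ with respect to $|\nu_+|$ via Lebesgue's theorem: $\nu_{\perp}=h\,|\nu_+|+\nu_{\perp}^{s}$ with $\nu_{\perp}^{s}\perp|\nu_+|$. Writing $\nu_+=\varepsilon\,|\nu_+|$ with $\varepsilon\in\{-1,+1\}$, the mutual singularity of $\nu_{\perp}^{s}$ with $|\nu_+|$ together with the pointwise identity $|\alpha+\beta|+|\alpha-\beta|=2\max(|\alpha|,|\beta|)$ (applied with $\alpha=\varepsilon$, $\beta=h$) combine to give
\begin{equation*}
\TV{\nu_+ + \nu_{\perp}} + \TV{\nu_+ - \nu_{\perp}} \;=\; 2\int \max(1,|h|)\,\df|\nu_+| + 2\TV{\nu_{\perp}^{s}}.
\end{equation*}

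Next, I would split the density as $h=\psi+\rho$ with $\psi := h\,\iind{|h|\leq 1}+\sign(h)\,\iind{|h|>1}$ (so $|\psi|\leq 1$ pointwise) and $\rho := h-\psi$ (so $|\rho|=(|h|-1)_+$), which makes $\max(1,|h|)=1+|\rho|$. Bundling the excess density with the singular part into $\tilde\nu := \rho\,|\nu_+|+\nu_{\perp}^{s}$, whose total variation equals $\int|\rho|\,\df|\nu_+|+\TV{\nu_{\perp}^{s}}$, rewrites the objective as $2\TV{\nu_+}+2\TV{\tilde\nu}$. Setting $\varphi := \psi\,\varepsilon$ (hence $|\varphi|\leq 1$) and using $\int g\,\psi\,\df|\nu_+|=\int g\,\varphi\,\df\nu_+$ turns the linear constraints on $\nu_{\perp}$ into the coupled condition $\bigl(\int\cos\theta\,\df\tilde\nu,\ \int\sin\theta\,\df\tilde\nu\bigr)=(a,b)-u$ with $u\in\cC$. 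The remaining ingredient is an auxiliary fact on moment-matching measures: for any target $(A,B)\in\R^2$, the minimum of $\TV{\tilde\nu}$ over signed Radon measures on $[-\pi/2,\pi/2]$ with first trigonometric moments equal to $(A,B)$ equals $\sqrt{A^2+B^2}$. The lower bound follows from $c_1A+c_2B=\int(c_1\cos\theta+c_2\sin\theta)\,\df\tilde\nu\leq\TV{\tilde\nu}$ for every unit vector $(c_1,c_2)$, and the upper bound is attained by a single signed Dirac at an angle $\theta_0\in[-\pi/2,\pi/2]$ with $(\cos\theta_0,\sin\theta_0)=\pm(A,B)/\sqrt{A^2+B^2}$. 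Combining these steps yields
\begin{equation*}
\min_{\nu_{\perp}} \TV{\nu_+ + \nu_{\perp}} + \TV{\nu_+ - \nu_{\perp}} \;=\; 2\TV{\nu_+} + 2\min_{u\in\cC}\|(a,b)-u\|,
\end{equation*}
which is precisely the claimed equivalence between \cref{eq:prog1} and \cref{eq:prog2}.

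The main obstacle I anticipate is careful bookkeeping rather than substance: tracking the orthogonality between singular and absolutely-continuous parts while expanding the two total variations simultaneously, handling the degenerate case $(A,B)=(0,0)$ by taking $\tilde\nu=0$, and justifying that all infima are attained (compactness of $\cC$ as the image of the unit ball of $L^{\infty}(|\nu_+|)$ under a bounded linear map into $\R^2$, and existence of the optimal Dirac on the \emph{closed} interval $[-\pi/2,\pi/2]$). Achievability of the equality---as opposed to a one-sided inequality---follows by selecting $\varphi^{*}$ attaining $\min_{u\in\cC}\|(a,b)-u\|$ and combining the density $\psi^{*}|\nu_+|=(\varphi^{*}\varepsilon)|\nu_+|$ with the optimal Dirac into an explicit minimizer $\nu_{\perp}^{*}$.
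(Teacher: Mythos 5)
Your proof is correct and follows essentially the same route as the paper: decompose $\nu_{\perp}$ into a part absolutely continuous with respect to $\nu_+$ with density bounded by $1$ plus a remainder, split the objective as $2\TV{\nu_+}$ plus twice the total variation of that remainder, and solve the residual moment problem exactly via a single signed Dirac. The only (cosmetic) differences are that you use the exact identity $|1+h|+|1-h|=2\max(1,|h|)$ where the paper proves an inequality and then exhibits its equality case, and you lower-bound the moment problem by duality against $c_1\cos\theta+c_2\sin\theta$ where the paper uses Cauchy--Schwarz plus the triangle inequality; both variants are valid.
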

\begin{proof}
For any $\nu_{\perp}$ in the constraint set of \cref{eq:prog1}, we can use a decomposition $\nu_{\perp}=\varphi \nu_+ + \mu_2$ where $\|\varphi\|_{\infty}\leq 1$. It then comes pointwise
\begin{align}
|\nu_+ + \nu_{\perp} | + |\nu_+ - \nu_{\perp} | &\leq 2|\mu_2| + |(1+\varphi)\nu_+| + |(1-\varphi)\nu_+| \label{eq:triangle}\\
& = 2|\mu_2| + 2|\nu_+|.\notag
\end{align}
As a consequence, if we note $v$ the infimum given by \cref{eq:prog1}:
\begin{equation*}
v \leq 2\int_{-\frac{\pi}{2}}^{\frac{\pi}{2}} \df|\nu_+| + 2\min_{(\varphi, \mu_2)\in \Gamma}\int_{-\frac{\pi}{2}}^{\frac{\pi}{2}} \df|\mu_2|,
\end{equation*} 
where 
\begin{equation*}
\Gamma = \bigg\lbrace (\varphi, \mu_2) \ \Big| \ \|\varphi\|_{\infty}\leq 1 \text{ and }
 \int_{-\pi/2}^{\pi/2} (\cos\theta,\sin\theta)\ \left(\varphi(\theta)\df\nu_+(\theta)+\df\mu_2(\theta)\right) = (a,b) \bigg\rbrace.
\end{equation*}
Moreover for a fixed $\nu_{\perp}$, we can choose $(\varphi,\mu_2)$ as:
\begin{equation*}
\begin{cases}
\varphi = \mathrm{sign}(\frac{\df \nu_{\perp}}{\df \nu_+}) \min\left(\left|\frac{\df\nu_{\perp}}{\df\nu_+}\right|, 1\right),\\
\mu_2 = \nu_{\perp} -\varphi \nu_+,
\end{cases}
\end{equation*}
where $\frac{\df\nu_{\perp}}{\df\nu_+}$ denotes by abuse of notation the Radon-Nikodym derivative $\frac{\df\nu_{a}}{\df\nu_+}$, with the Lebesgue decomposition $\nu_{\perp} = \nu_a + \nu_s$ with $\nu_a \ll \nu_+$ and $\nu_s \perp \nu_+$.
For this choice, \cref{eq:triangle} becomes an equality, which directly implies that
\begin{equation*}
v = 2\int_{-\frac{\pi}{2}}^{\frac{\pi}{2}} \df|\nu_+| + 2\min_{(\varphi, \mu_2)\in \Gamma}\int_{-\frac{\pi}{2}}^{\frac{\pi}{2}} \df|\mu_2|.
\end{equation*}
It now remains to prove that $\min\limits_{u \in \cC}  \|(a,b) - u\|^2=\min\limits_{(\varphi, \mu_2)\in \Gamma}\int_{-\frac{\pi}{2}}^{\frac{\pi}{2}} \df|\mu_2|$. 
Fix in the following $\varphi$ such that $\|\varphi\|_{\infty}\leq 1$ and note
\begin{equation*}
\begin{cases}
x = a - \int_{-\pi/2}^{\pi/2} \cos\theta\,\varphi(\theta) \df\nu_+(\theta), \\
y = b - \int_{-\pi/2}^{\pi/2} \sin\theta\,\varphi(\theta) \df\nu_+(\theta).
\end{cases}
\end{equation*}
It now suffices to show that for any fixed $\varphi$: 
\begin{equation*}
\min_{\mu_2 \text{ s.t. }(\varphi,\mu_2) \in \Gamma}\int_{-\frac{\pi}{2}}^{\frac{\pi}{2}} \df|\mu_2| = \left\|(x,y)\right\|.
\end{equation*}
The constraint set is actually $\left\lbrace \mu_2 \mid \int_{-\pi/2}^{\pi/2} (\cos\theta,\sin\theta)\,\df\mu_2(\theta) = (x,y) \right\rbrace$. Now define
\begin{equation*}
\theta^* = \arcsin\left(\frac{\mathrm{sign}(x)y}{\sqrt{x^2+y^2}} \right) \quad \text{and} \quad
\mu_2^* = \mathrm{sign}(x)\sqrt{x^2+y^2}\delta_{\theta^*},
\end{equation*}
where $\delta_{\theta^*}$ is the Dirac distribution located at $\theta^*$. This definition is only valid if $x\neq0$, otherwise we choose $\mu_2^* = -y \delta_{-\frac{\pi}{2}}$. 

Note that $\mu_2^*$ is in the constraint set and $\int_{-\frac{\pi}{2}}^{\frac{\pi}{2}} \df|\mu_2| = \left\|(x,y)\right\|$, i.e.
\begin{equation*}
\min_{\mu_2 \text{ s.t. }(\varphi,\mu_2) \in \Gamma}\int_{-\frac{\pi}{2}}^{\frac{\pi}{2}} \df|\mu_2| \leq \left\|(x,y)\right\|.
\end{equation*}
Now consider any $\mu_2$ in the constraint set and decompose $\mu_2 = \mu_2^+ - \mu_2^-$ with $(\mu_2^+, \mu_2^-) \in \mathcal{M}_+((-\frac{\pi}{2},\frac{\pi}{2}))^2$. Define
\begin{align*}
(x_+,y_+) = \left(\int_{-\frac{\pi}{2}}^{\frac{\pi}{2}} \cos\theta\,\df\mu_2^+,\int_{-\frac{\pi}{2}}^{\frac{\pi}{2}} \sin\theta\,\df\mu_2^+ \right) \\
(x_-,y_-) = \left(\int_{-\frac{\pi}{2}}^{\frac{\pi}{2}} \cos\theta\,\df\mu_2^-,\int_{-\frac{\pi}{2}}^{\frac{\pi}{2}} \sin\theta\,\df\mu_2^- \right) 
\end{align*}
By Cauchy-Schwarz inequality,
\begin{align*}
\int \cos^2(\theta)\,\df\mu_2^+(\theta)\int\df\mu_2^+(\theta) \geq x_+^2,\\
\int\sin^2(\theta)\,\df\mu_2^+(\theta)\int\df\mu_2^+(\theta) \geq y_+^2.
\end{align*}
Summing these two inequalities yields $$\int\df\mu_2^+ \geq \sqrt{x_+^2+y_+^2}.$$
Similarly, we have $$\int\df\mu_2^- \geq \sqrt{x_-^2+y_-^2}.$$
Recall that $\int\df|\mu_2| =\int\df\mu_2^+ + \int\df\mu_2^-$. By triangle inequality, this yields:
\begin{align*}
\int \df|\mu_2| & \geq \|(x_+,y_+) \| + \|(x_-,y_-) \| \\
& \geq \|(x_+,y_+) - (x_-,y_-)  \| = \| (x,y) \|. 
\end{align*}
As a consequence:
\begin{equation*}
\min_{\mu_2 \text{ s.t. }(\varphi,\mu_2) \in \Gamma}\int_{-\frac{\pi}{2}}^{\frac{\pi}{2}} \df|\mu_2| \geq \left\|(x,y)\right\|.
\end{equation*}
We finally showed that 
\begin{equation*}
\min_{\mu_2 \text{ s.t. }(\varphi,\mu_2) \in \Gamma}\int_{-\frac{\pi}{2}}^{\frac{\pi}{2}} \df|\mu_2| = \left\|(a,b)-\left(\int_{-\pi/2}^{\pi/2} \cos\theta\,\varphi(\theta)\df\nu_+(\theta),\int_{-\pi/2}^{\pi/2} \sin\theta\,\varphi(\theta)\df\nu_+(\theta)\right)\right\|.
\end{equation*}
This leads to \cref{lemma:auxiliaryrepresentational} when taking the infimum over $\varphi$.
\end{proof}

\begin{lem}\label{lemma:hfunction}
For any $z=(z_1,z_2) \in\R^2$,
\begin{equation*}
\inf_{\beta\in\R} \| z- 2\beta e_2\|_2  + |\beta| = h(z_1,z_2),
\end{equation*}
where $h$ is defined as in \cref{thm:normapp}, by
\begin{equation*}
h(z_1,z_2) =  \begin{cases}\sqrt{z_1^2+z_2^2} \text{ if } |z_2|\leq \frac{|z_1|}{\sqrt{3}}\\
\frac{\sqrt{3}}{2}|z_1| + \frac{1}{2}|z_2| \text{ if }  |z_2|>\frac{|z_1|}{\sqrt{3}}\end{cases}.
\end{equation*}
\end{lem}
\begin{proof}
Assume w.l.o.g. in the proof that $z_1>0$ and $z_2> 0$ -- the cases $z_2=0$ and $z_1=0$ both being trivial, and the function being invariant by any sign flip of either  $z_1$ or $z_2$ -- and define for this proof $c(\beta)= \| z- 2\beta e_2\|_2  + |\beta|$. From there, deriving $c$ w.r.t. $\beta$ yields for any $\beta\neq 0$:
\begin{equation*}
c'(\beta)  = \frac{2(2\beta-z_2)}{\|z+2\beta e_2\|_2} + \sign(\beta).
\end{equation*}
In particular, the condition $c'(\beta)=0$ yields (for $\beta\neq0$):
\begin{align*}
3(2\beta-z_2)^2 = z_1^2,
\end{align*}
which directly yields $\beta = \frac{z_2}{2} \pm \frac{z_1}{2\sqrt{3}}$. From here, we have three cases.

1) Either $z_2> \frac{z_1}{\sqrt{3}}$, in which case $\beta = \frac{z_2}{2} - \frac{z_1}{2\sqrt{3}}$ is the only non-zero $\beta$ satisfying $c'(\beta) = 0$ is $\beta^\star=\frac{z_2}{2} - \frac{z_1}{2\sqrt{3}}$ (recall that we here assumed $z_1,z_2>0$). This then implies that $c$ is minimal in $\beta^\star$ and computations yield:
\begin{align*}
c(\beta^\star) & = \frac{|z_1|}{\sqrt{3}} + \frac{1}{2}\left|z_2-\frac{z_1}{\sqrt{3}}\right| \\
& = \frac{z_1}{\sqrt{3}} + \frac{1}{2}(z_2-\frac{z_1}{\sqrt{3}})\\
& = \frac{z_1}{2\sqrt{3}} + \frac{z_2}{2}.
\end{align*}
Invariance by sign flip then yields the second expression of $h$ in \cref{lemma:hfunction}.

2) Either $z_2 <\frac{z_1}{\sqrt{3}}$, in which case one can verify that both $c'(\frac{z_2}{2} + \frac{z_1}{2\sqrt{3}})\neq 0$ and $c'(\frac{z_2}{2} - \frac{z_1}{2\sqrt{3}})\neq 0$, i.e., $c'$ never cancels out. In particular, this yields that $c'$ is negative on $(-\infty,0)$ and positive on $(0,+\infty)$ and $c$ is minimised at $\beta=0$ where $c(0)=\|z\|_2$.

3) The case $z_2 =\frac{z_1}{\sqrt{3}}$ yields again that $c'(\frac{z_2}{2} + \frac{z_1}{2\sqrt{3}})\neq 0$ and now that $\frac{z_2}{2} - \frac{z_1}{2\sqrt{3}}=0$, where $c$ is non-differentiable. It still yields that $c'$ is negative on $(-\infty,0)$ and positive on $(0,+\infty)$ and so $c$ is minimised at $\beta=0$ where $c(0)=\|z\|_2$.
\end{proof}

\section{Proof of \cref{sec:compute}}\label{app:computeproof}

\subsection{Proof of \cref{lemma:sparserepresentation}}

We first need to show the existence of a minimum. Using the definition of $\bar{R}_1(f)$ and \cref{thm:norm}, \cref{eq:mininterpolator1} is equivalent to
\begin{equation}\label{eq:mininterpolator2}
\inf_{\mu,a,b} \TV{\mu} \quad \text{such that for any } i\in[n],\ f_{\mu,a,b}(x_i) = y_i.
\end{equation}
Consider a sequence $(\mu_j,a_j,b_j)_j$ such that $f_{\mu_j,a_j,b_j}(x_i) = y_i$ for any $i$ and $j$ and $\TV{\mu_j}$ converges to the infimum of \cref{eq:mininterpolator2}. The sequence $\TV{\mu_j}$ is necessarily bounded. This also implies that both $(a_j)$ and $(b_j)$ are bounded\footnote{To see that, we can first consider the difference $f_{\mu_j,a_j,b_j}(x_1)- f_{\mu_j,a_j,b_j}(x_2)$ to show that $(a_j)_j$ is bounded. This then leads to the boundedness of $(b_j)_j$ when considering $f_{\mu_j,a_j,b_j}(x_1)$.}.
Since the space of finite signed measures on $\bS_1$ is a Banach space, there is a subsequence converging weakly towards some $(\mu,a,b)$. By weak convergence, $(\mu,a,b)$ is in the constraints set of \cref{eq:mininterpolator2} and $\TV{\mu} = \lim_{j} \TV{\mu_j}$. $(\mu,a,b)$ is thus a minimiser of \cref{eq:mininterpolator2}. We thus proved the existence of a minimum for \cref{eq:mininterpolator1}, which is reached for $f_{\mu,a,b}$.

\medskip

Define for the sake of the proof the activation cones $C_i$ as
\begin{equation}\label{eq:cones}
\begin{aligned}
C_0 &=  \left\{ \theta \in \R^2 \mid \forall i =1,\ldots,n, \langle \theta, (x_{i},1) \rangle \geq 0 \right\},\\
C_i &= \left\{ \theta \in \R^2 \mid \langle \theta, (x_{i+1},1) \rangle \geq 0 > \langle \theta, (x_{i},1) \rangle \right\} \quad \text{for any } i =1,\ldots,i_0-2,\\
C_{i_0-1} &= \left\{ \theta \in \R^2 \mid \langle \theta, (x_{i_0},1) \rangle > 0 > \langle \theta, (x_{i_0-1},1) \rangle \right\},\\
C_i &= \left\{ \theta \in \R^2 \mid \langle \theta, (x_{i+1},1) \rangle > 0 \geq \langle \theta, (x_{i},1) \rangle \right\} \quad \text{for any } i =i_0,\ldots,n-1,\\
C_n &= \left\{ \theta \in \R^2 \setminus\{0\}  \mid \forall i =1,\ldots,n,\langle \theta, (x_i,1) \rangle \leq 0 \rangle \right\}.
\end{aligned}\end{equation}
As the $x_i$ are ordered, note that $(C_0, C_1,-C_1,\ldots,C_{n-1},-C_{n-1},C_n)$ forms a partition of $\R^2$. To prove \cref{lemma:sparserepresentation}, it remains to show that any minimiser $(\mu_a,b)$ of \cref{eq:mininterpolator2} has a function $f_{\mu,a,b}$ of the form
\begin{equation*}
f_{\mu,a,b}(x) = \tilde{a}x + \tilde{b}+\sum_{i=1}^{n-1} \tilde{a}_i \sigma(\langle \theta_i,(x,1)\rangle) \quad \text{where } \theta_i \in C_i.
\end{equation*}
Let $f$ be a minimiser of \cref{eq:mininterpolator1}. Let $\mu,a,b$ be a minimiser of \cref{eq:mininterpolator2} such that $f_{\mu,a,b}=f$. Define $\tilde{\mu},\tilde{a},\tilde{b}$ as
\begin{gather*}
\df\tilde{\mu}(\theta) = \begin{cases} \df\mu(\theta) + \df\mu(-\theta) &\quad\text{ for } \theta \in C_i \text{ for any } i=1,\ldots, n-1  \\
0 &\quad\text{ for } \theta \in -C_i  \text{ for any } i=1,\ldots, n-1  \\
\df\mu(\theta) & \quad\text{ otherwise},
\end{cases}\\
\tilde{a} = a -\sum_{j=1}^{n-1} \int_{-C_j} \theta_1\df\mu(\theta),\\
\tilde{b} = b -\sum_{j=1}^{n-1} \int_{-C_j} \theta_2\df\mu(\theta).\\
\end{gather*}
Thanks to the identity $\sigma(u)-u=\sigma(-u)$, $f_{\mu,a,b} = f_{\tilde{\mu},\tilde{a},\tilde{b}}$. Moreover, $\TV{\tilde{\mu}}\leq \TV{\mu}$, so we can assume w.l.o.g. that the support of $\mu$ is included\footnote{We here transform the triple $(\mu,a,b)$, but the corresponding function $f$ remains unchanged.} in $\bigcup_{i=0}^n C_i$. In that case, for any $i=,1\ldots,n$
\begin{align}
f(x_i) & = ax_i + b + \sum_{j=0}^{i-1}\int_{C_j}\langle \theta, (x_i,1) \rangle\df\mu(\theta)\notag\\
& = ax_i + b + \sum_{j=0}^{i-1}\langle\int_{C_j} \theta\df\mu(\theta), (x_i,1) \rangle.\label{eq:lineardecomp}
\end{align}
First, the reduction
\begin{gather*}
\tilde{a} = a + \int_{C_0}\theta_1 \df\mu(\theta) \\
\tilde{b} = b + \int_{C_0}\theta_2 \df\mu(\theta)\\
\tilde{\mu} = \mu_{|\bigcup_{i=1}^{n-1}C_i},
\end{gather*}
does not increase the total variation of $\mu$ and still interpolates the data. As a consequence, the support of $\mu$ is included in $\bigcup_{i=1}^{n-1} C_i$. 
Now let $\mu=\mu_+-\mu_-$ be the Jordan decomposition of $\mu$ and define for any $i\in[n-1]$
\begin{equation*}
\alpha_i = \int_{C_i} \theta \df\mu_+(\theta) \quad \text{and} \quad \beta_i = \int_{C_i} \theta \df\mu_-(\theta).
\end{equation*}
Note that $\alpha_i$ and $\beta_i$ are both in the positive convex cone $C_i$. For $\theta_i \coloneqq \alpha_i-\beta_i$, \cref{eq:lineardecomp} rewrites
\begin{equation*}
f(x_i) = ax_i + b + \sum_{j=1}^{i-1}\langle \theta_i, (x_i,1) \rangle.
\end{equation*}
If $\theta_i \in \bar{C}_i \cup \bar{-C}_i$, we can then define 
\begin{equation*}
\tilde{\mu} = \mu-\mu_{|C_i}+\|\theta_i\|\delta_{\frac{\theta_i}{\|\theta_i\|}}.
\end{equation*}
Thanks to \cref{eq:lineardecomp}, the function $f_{\tilde{\mu},a,b}$ still interpolates the data and
\begin{equation*}
\TV{\tilde{\mu}} \leq \TV{\mu} -\TV{\mu_{|C_i}} + \|\theta_i\|.
\end{equation*}
By minimisation of $\TV{\mu}$, this is an equality. Moreover as $\mu$ is a measure on the sphere,
\begin{align*}
\TV{\mu_{|C_i}} & = \int_{C_i} \|\theta\|\df\mu_+(\theta) + \int_{C_i} \|\theta\|\df\mu_-(\theta)\\
\geq \left\|\int_{C_i} \theta\df\mu_+(\theta)\right\| + \left\|\int_{C_i} \theta\df\mu_-(\theta)\right\| \\
= \|\alpha_i\| + \|\beta_i\| \geq \|\theta_i\|.
\end{align*}
By minimisation, all inequalities are equalities. Jensen's case of equality implies for the first inequality that both $\mu_{+\, |C_i}$ and $\mu_{-\, |C_i}$ are Diracs, while the second inequality implies that either $\alpha_i=0$ or $\beta_i=0$. Overall, $\mu_{|C_i}$ is at most a single Dirac.

\medskip

Now if $\theta_i\not\in \bar{C}_i \cup \bar{-C}_i$, assume first that $\langle \theta_i, (x_{i+1},1) \rangle>0$. This implies $\langle \theta_i, (x_{i},1) \rangle>0$ since $\theta_i\not\in \bar{C}_i \cup \bar{-C}_i$. This then implies that either $\alpha_i \in \overset{\circ}{C}_i$ or $\beta_i \in \overset{\circ}{C}_i$, depending on whether $i\geq i_0$ or $i<i_0$. Assume first that $\beta_i \in \overset{\circ}{C}_i$ ($i\geq i_0$) and define
\begin{equation*}
t = \sup\left\{ t'\in[0,1] \mid t'\alpha_i -\beta_i \in \bar{-C}_i \right\}.
\end{equation*}
By continuity, $t\alpha_i -\beta_i \in \bar{-C}_i$. Moreover $0<t<1$ , since $\beta_i\in\overset{\circ}{C}_i$ and $\theta_i\not\in\bar{-C}_i$. We now define
\begin{equation*}
\tilde{\mu}=\mu-\mu_{|C_i}+(1-t)\|\alpha_i\|\delta_{\frac{\alpha_i}{\|\alpha_i\|}}+\|t\alpha_i-\beta_i\|\delta_{\frac{t\alpha_i-\beta_i}{\|t\alpha_i-\beta_i\|}}.
\end{equation*}
The function $f_{\tilde{\mu},a,b}$ still interpolates the data. Similarly to the case $\theta_i \in \bar{C}_i \cup \bar{-C}_i$, the minimisation of $\TV{\mu}$ implies that $\mu_{|C_i}$ is at most a single Dirac.

If $\alpha_i \in \overset{\circ}{C}_i$ instead, similar arguments follow defining
\begin{equation*}
t = \sup\left\{ t'\in[0,1] \mid \alpha_i -t'\beta_i \in \bar{C}_i \right\}.
\end{equation*}
Symmetric arguments also hold if $\langle \theta_i, (x_{i+1},1) \rangle<0$. In any case,  $\mu_{|C_i}$ is at most a single Dirac. This holds for any $i=1,\ldots,n-1$, which finally leads to \cref{lemma:sparserepresentation}.

\subsection{Proof of \cref{lemma:DP}}\label{eq:DPproof}

Before proving \cref{lemma:DP}, let us show a one to one mapping from the parameterisation given by \cref{lemma:sparserepresentation} to a parameterisation given by the sequences of slopes in the points $x_i$.
Let us define the sets
\begin{equation}\label{eq:cS}
\begin{aligned}
\cS = \bigg\{(s_1, \ldots, s_{n}) \in \R^{n} \mid &\forall i =1,\ldots, i_0-2, s_{i}\in S_{i}(s_{i+1}), & \\
& (s_{i_0-1},s_{i_0})\in\Lambda \quad\text{and}\quad \forall i =i_0,\ldots, n-1, s_{i+1}\in S_{i}(s_i)  &\bigg\}
\end{aligned}
\end{equation}
and
\begin{align*}
\cI = \bigg\{ (a,b,(a_i,\tau_i)_{i=1,\ldots,n-1}) \mid & \forall j = 1,\ldots,n,\ ax_j+b+\sum_{i=1}^{n-1}a_i(x_j-\tau_i)_+ = y_j, \quad \tau_{i_0-1}\in (x_{i_0-1}, x_{i_0}) ,&\\&   \tau_i = \frac{x_i+x_{i+1}}{2} \text{ if } a_i=0 ,\quad \forall i \in \{1,\ldots,i_0-2 \},  \tau_i \in (x_i, x_{i+1}]&\\&\text{and} \quad \forall i \in \{i_0,\ldots,n-1 \},  \tau_i \in [x_i, x_{i+1}) &\bigg\}.
\end{align*}
The condition $\tau_i = \frac{x_i+x_{i+1}}{2}$ if $a_i=0$ in the definition of $\cI$ is just to avoid redundancy, as any arbitrary value of $\tau_i$ would yield the same interpolating function. \cref{lemma:mapping} below gives a one to one mapping between these two sets.

\begin{lem}\label{lemma:mapping}
The function
\begin{equation*}
\psi : \begin{array}{l}
\cI \to \cS\\
(a_0,b_0,(a_i,\tau_i)_{i=1,\ldots,n-1}) \mapsto (\sum_{j=0}^{i-1}a_{j})_{i=1,\ldots,n-1}
\end{array}
\end{equation*}
is a one to one mapping. Its inverse is given by 
\begin{equation*}
\psi^{-1}: \begin{array}{l}\cS \to \cI \\ 
(s_i)_{i\in[n]} \mapsto (a_0,b_0,(a_i,\tau_i)_{i\in[n-1]})\end{array}
\end{equation*}
where
\begin{gather*}
a_0 = s_1 ;\qquad 
b_0 = y_1-s_1x_1; \qquad
a_i = s_{i+1} - s_i \quad\text{for any } i\in[n-1]; \\
\tau_i = \begin{cases} \frac{s_{i+1}-\delta_i}{s_{i+1}-s_i}x_{i+1} + \frac{\delta_i-s_i}{s_{i+1}-s_i}x_{i}\quad \text{ if } s_{i+1}\neq s_i \\ \frac{x_i+x_{i+1}}{2} \quad \text{ otherwise} \end{cases}.
\end{gather*}
\end{lem}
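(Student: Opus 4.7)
The proof is a direct verification of the bijection, organised around the interpretation that $s_i = a_0 + \sum_{j=1}^{i-1} a_j$ represents the slope of $f$ on the interval $(\tau_{i-1}, \tau_i)$ (with the conventions $\tau_0 \coloneqq -\infty$, $\tau_n \coloneqq +\infty$), and that the constraints on the $\tau_j$ in $\cI$ guarantee that this interval always contains $x_i$. The plan is to check (a) that $\psi$ sends $\cI$ into $\cS$, (b) that the given formula for $\psi^{-1}$ sends $\cS$ into $\cI$, and (c) that these maps are mutual inverses.

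For (a), starting from $(a_0, b_0, (a_i, \tau_i)) \in \cI$ and using that $f$ is piecewise linear with a single slope change at $\tau_i \in [x_i, x_{i+1}]$, I would write the interpolation constraint between two consecutive points as
\[
y_{i+1} - y_i \;=\; s_i(\tau_i - x_i) + s_{i+1}(x_{i+1} - \tau_i).
\]
Dividing by $x_{i+1} - x_i$ exhibits $\delta_i$ as a convex combination of $s_i$ and $s_{i+1}$, which lies in the closed interval between them; this is exactly the condition $s_i \in S_i(s_{i+1})$ (equivalently $s_{i+1} \in S_i(s_i)$). The strict versus non-strict character of the endpoints in the definition of $S_i$ matches the strict versus non-strict attainment of $\tau_i$ at $x_i$ or $x_{i+1}$, and the fully open interval $\tau_{i_0-1} \in (x_{i_0-1}, x_{i_0})$ forces both $\delta_{i_0-1} \neq s_{i_0-1}$ and $\delta_{i_0-1} \neq s_{i_0}$ strictly (unless $s_{i_0-1} = s_{i_0} = \delta_{i_0-1}$), producing exactly the three-piece structure of $\Lambda$.

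For (b), given $(s_i) \in \cS$, I would check the constructed $\tau_i$ lies in the prescribed interval. When $s_i \neq s_{i+1}$, the formula rewrites as $\frac{\tau_i - x_i}{x_{i+1} - x_i} = \frac{\delta_i - s_i}{s_{i+1} - s_i}$, which lies in $[0,1]$ because the $\cS$-constraints force $\delta_i$ between $s_i$ and $s_{i+1}$; whether an endpoint is attained is controlled exactly by the strict pieces in $S_i$ and $\Lambda$. When $s_i = s_{i+1}$, the $\cS$-constraints force $\delta_i = s_i$, so $a_i = 0$ makes $\tau_i$ genuinely free (set to the midpoint by convention) and interpolation of that segment holds automatically. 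Full interpolation at each $x_j$ then follows by telescoping $f(x_j) = f(x_1) + \sum_{i=1}^{j-1}(y_{i+1} - y_i)$.

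For (c), $\psi \circ \psi^{-1} = \mathrm{id}_{\cS}$ is immediate from $s_1 + \sum_{j=1}^{i-1}(s_{j+1} - s_j) = s_i$. Conversely, $\psi^{-1} \circ \psi = \mathrm{id}_{\cI}$ because $a_0 = s_1$ and $a_i = s_{i+1} - s_i$ are recovered by definition, while $\tau_i$ is either uniquely recovered by inverting the interpolation equation from step (a) (when $a_i \neq 0$) or pinned to the midpoint by convention (when $a_i = 0$). The only delicate part of the whole argument is bookkeeping the strict versus non-strict inclusions across the regime change at $i = i_0 - 1$; once those are tracked, every individual verification is a one-line computation.
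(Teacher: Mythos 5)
Your proof is correct and follows essentially the same route as the paper's: both hinge on the derivative identity $s_{i+1}=s_i+a_i$ together with the per-segment interpolation identity $y_{i+1}-y_i=s_i(\tau_i-x_i)+s_{i+1}(x_{i+1}-\tau_i)$, from which membership of the images in $\cS$ and $\cI$ and the mutual-inverse property follow by direct computation. (One harmless algebraic slip: solving that identity gives $\frac{\tau_i-x_i}{x_{i+1}-x_i}=\frac{s_{i+1}-\delta_i}{s_{i+1}-s_i}$ rather than $\frac{\delta_i-s_i}{s_{i+1}-s_i}$, which is the complementary weight $\frac{x_{i+1}-\tau_i}{x_{i+1}-x_i}$; since both lie in $[0,1]$ exactly when $\delta_i$ is between $s_i$ and $s_{i+1}$, your conclusion is unaffected, though the identification of which endpoint of $[x_i,x_{i+1}]$ corresponds to which equality $s_i=\delta_i$ or $s_{i+1}=\delta_i$ is swapped.)
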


\begin{proof}
For $(a_0,b_0,(a_i,\tau_i)_{i=1,\ldots,n-1})\in\cI$, let $f$ be the associated interpolator:
\begin{equation*}
f(x) = a_0 x+b_0+\sum_{i=1}^{n-1}a_i(x-\tau_i)_+
\end{equation*}
and let $(s_i)_{i\in[n]} = \psi(a_0,b_0,(a_i,\tau_i)_i)$. Given the definition of $\psi$, it is straightforward to check that $s_i$ corresponds to the left (resp. right) derivative of $f$ at $x_i\geq 0$ (resp. $x_i<0$). We actually have the two following inequalities linking the parameters $(a_0,b_0,(a_i,\tau_i)_i)$ and $(s_i)_i$ for any $i\in[n-1]$:
\begin{align*}
&s_{i}+a_i = s_{i+1}, \\
&y_i + s_i(x_{i+1}-x_i)+a_i(x_{i+1}-\tau_i) = y_{i+1}.
\end{align*}
The first equality comes from the (left or right) derivatives of $f$ in $x_i$, while the second equality is due to the interpolation of the data by $f$. These two equalities imply that an interpolator with ReLU parameters in $\cI$ (i.e., $f$) can be equivalently described by its (left or right) derivatives in each $x_i$. A straightforward computation then allows to show that $\psi$ and $\psi^{-1}$ are well defined and indeed verify $\psi\circ\psi^{-1}=I_{\cS}$ and $\psi^{-1}\circ\psi=I_{\cI}$.
\end{proof}

Using this bijection from $\cI$ to $\cS$, we can now prove \cref{lemma:DP}. 
Note for the remaining of the proof $\alpha = \min_{\substack{f\\\forall i\in[n],f(x_i)=y_i}} \int_{\R}\sqrt{1+x^2}\df|f''(x)|$. Thanks to \cref{lemma:sparserepresentation}, we have the first equivalence:
\begin{equation*}
\alpha = \min_{(a_0,b_0,(a_i,\tau_i)_{i=1,\ldots,n-1})\in\cI} \sum_{i=1}^{n-1} |a_i|\sqrt{1+\tau_{i}^2}.
\end{equation*}
For any $(a_0,b_0,(a_i,\tau_i)_{i=1,\ldots,n-1})\in\cI$, we can define thanks to \cref{lemma:mapping} $(s_i)_i = \psi(a_0,b_0,(a_i,\tau_i)_{i})\in\cS$. We then have $(a_0,b_0,(a_i,\tau_i)_{i})=\psi^{-1}((s_i)_i)$. Moreover, by definition of $\psi^{-1}$, we can easily check that
\begin{align}
|a_i|\sqrt{1+\tau_{i}^2} &= \sqrt{a_i^2+(a_i\tau_{i})^2} \notag\\
& =\sqrt{(s_{i+1}-s_i)^2 + ((s_{i+1}-\delta_i)x_{i+1} + (s_i-\delta_i)x_i)^2}\notag\\
& = g_{i+1}(s_{i+1},s_i).\label{eq:equiv1}
\end{align}
As $\psi$ is a one to one mapping, we have for any function $h$ the equivalence $\min_{u\in\psi^{-1}(\cS)}h(u) = \min_{s\in\cS}h(\psi^{-1}(s))$. In particular, thanks to \cref{eq:equiv1}:
\begin{equation}\label{eq:equiv2}
\min_{(a_0,b_0,(a_i,\tau_i)_{i=1,\ldots,n-1})\in\cI} \sum_{i=1}^{n-1} |a_i|\sqrt{1+\tau_{i}^2}  = \min_{(s_i)_i\in\cS} \sum_{i=1}^{n-1} g_{i+1}(s_{i+1},s_i).
\end{equation}
From there, define for any $i\geq i_0$,
\begin{equation*}
d_i(s_i) = \min_{\substack{(\tilde{s})_j \in \cS\\\text{s.t. }\tilde{s}_i=s_i}} \sum_{j=i}^{n-1} g_{j+1}(\tilde{s}_{j+1},\tilde{s}_j);
\end{equation*}
and for any $i<i_0$
\begin{equation*}
d_{i}(s_{i}) = \min_{\substack{(\tilde{s})_j \in \cS\\\text{s.t. }\tilde{s}_i=s_i}} \sum_{j=1}^{i-1} g_{j+1}(\tilde{s}_{j+1},\tilde{s}_j).
\end{equation*}
Obviously, we have from \cref{eq:equiv2} and the definition of $\cS$ that
\begin{equation}\label{eq:equiv3}
\alpha = \min_{(s_{i_0-1},s_{i_0})\in\Lambda} g_{i_0}(s_{i_0},s_{i_0-1})+d_{i_0-1}(s_{i_0-1})+d_{i_0}(s_{i_0}).
\end{equation}
It now remains to show by induction that for any $i$ that $c_i=d_i$. This is obviously the case for $i=n$. Let us now consider $i\in\{i_0,\ldots,n-1\}$. The definition of $d_{i}$ leads to
\begin{align}
d_{i}(s_{i}) & = \min_{\substack{(\tilde{s})_j \in \cS\\\text{s.t. }\tilde{s}_i=s_i}} \sum_{j=i}^{n-1} g_{j+1}(\tilde{s}_{j+1},\tilde{s}_j)\notag\\
& =  \min_{s_{i+1}\in S_{i}(s_i)}\min_{\substack{(\tilde{s})_j \in \cS\\\text{s.t. }\tilde{s}_i=s_i\\\tilde{s}_{i+1}=s_{i+1}}} g_{i+1}(s_{i+1},s_i)+ \sum_{j=i+1}^{n-1} g_{j+1}(\tilde{s}_{j+1},\tilde{s}_j)\notag\\
& =  \min_{s_{i+1}\in S_{i}(s_i)}g_{i+1}(s_{i+1},s_i)+\min_{\substack{(\tilde{s})_j \in \cS\\\text{s.t. }\tilde{s}_i=s_i\\\tilde{s}_{i+1}=s_{i+1}}}  \sum_{j=i+1}^{n-1} g_{j+1}(\tilde{s}_{j+1},\tilde{s}_j).\label{eq:DPproof1}
\end{align}
Now note that for any $s_{i+1}\in S_{i}(s_i)$, we have the equality of the sets
\begin{equation*}
\left\{ (\tilde{s}_j)_{j\geq i+1} \mid (\tilde{s})_{j\in[n-1]}\in \cS \text{ s.t. } \tilde{s}_i=s_i \text{ and } \tilde{s}_{i+1}=s_{i+1}\right\} = \left\{ (\tilde{s}_j)_{j\geq i+1} \mid (\tilde{s})_{j\in[n-1]}\in \cS \text{ s.t. } \tilde{s}_{i+1}=s_{i+1}\right\} 
\end{equation*}
Since the last term in \cref{eq:DPproof1} only depends on $(\tilde{s}_j)_{j\geq i+1}$, this implies that
\begin{align*}
d_{i}(s_{i}) & =\min_{s_{i+1}\in S_{i}(s_i)}g_{i+1}(s_{i+1},s_i)+\min_{\substack{(\tilde{s})_j \in \cS\\\text{s.t. }\tilde{s}_{i+1}=s_{i+1}}}  \sum_{j=i+1}^{n-1} g_{j+1}(\tilde{s}_{j+1},\tilde{s}_j) \\
& = \min_{s_{i+1}\in S_{i}(s_i)}g_{i+1}(s_{i+1},s_i) + d_{i+1}(s_{i+1}).
\end{align*}
By induction, it naturally comes from the definition of $c_i$ that $c_i=d_i$ for any $i\geq i_0$. Symmetric arguments hold for any $i<i_0$, which finally gives $c_i=d_i$ for any $i\in[n]$. \cref{eq:equiv3} then yields \cref{lemma:DP}.

\section{Proof of \cref{sec:properties}}\label{app:propertiesproof}

The proofs of this section are shown in the case where $x_1<0$ and $x_n\geq 0$. When all the $x$ are positive, i.e., $x_1\geq 0$, the adapted version of \cref{lemma:DP} would yield for $i_0=1$ the equivalence\footnote{Note that in that case $c_1\not\equiv 0$. Instead, $c_1$ is defined through the recursion given in \cref{eq:ci}.}
\begin{equation*}
\min_{\substack{f\\\forall i\in[n],f(x_i)=y_i}} \int_{\R}\sqrt{1+x^2}\df|f''(x)| = \min_{s_{i_0}\in \R} c_{i_0}(s_{i_0}).
\end{equation*}
The proofs of \cref{app:propertiesproof} can then be easily adapted to this case (and similarly if $x_n<0$). \cref{app:positive} at the end of the section more precisely states how to adapt them to this case.

\subsection{Proof of \cref{thm:unique}}\label{app:proofunique}

Before proving \cref{thm:unique}, \cref{lemma:unique} below provides important properties verified by the functions $c_i$ defined in \cref{eq:ci}.

\begin{lem}\label{lemma:unique}
For each $i\in\{i_0,\ldots,n-1\}$, the function $c_i$ is convex, $\sqrt{1+x_{i}^2}$-Lipschitz on $\R$ and minimised for $s_i=\delta_i$. \\
Moreover, on both intervals $(-\infty,\delta_i]$ and $[\delta_i,+\infty)$: 
\begin{enumerate}
\item either $c_i(s_i) = \sqrt{1+x_i^2}|s_i-\delta_i| + c_{i+1}(\delta_i)$ for all $s_i$ in the considered interval, or $c_i$ is strictly convex on the considered interval;
\item $|c_i(s_i)-c_i(s'_i)|\geq \frac{1+x_{i}x_{i+1}}{\sqrt{1+x_{i+1}^2}}|s_i-s'_i|$ for all $s_i,s'_i$ in the considered interval.
\end{enumerate}
Similarly, for each $i\in\{1,\ldots,i_0-2\}$, the function $c_{i+1}$ is convex, $\sqrt{1+x_{i+1}^2}$-Lipschitz on $\R$ and and minimised for $s_{i+1}=\delta_i$. \\
Moreover, on both intervals $(-\infty,\delta_i]$ and $[\delta_i,+\infty)$: 
\begin{enumerate}
\item either $c_{i+1}(s_{i+1}) = \sqrt{1+x_{i+1}^2}|s_{i+1}-\delta_i| + c_{i}(\delta_i)$ for all $s_{i+1}$ in the considered interval, or $c_{i+1}$ is strictly convex on the considered interval;
\item $|c_{i+1}(s_{i+1})-c_{i+1}(s'_{i+1})|\geq \frac{1+x_{i}x_{i+1}}{\sqrt{1+x_{i}^2}}|s_{i+1}-s'_{i+1}|$ for all $s_{i+1},s'_{i+1}$ in the considered interval.
\end{enumerate}
\end{lem}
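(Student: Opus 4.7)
The plan is to prove the statement for $c_i$ with $i\in\{i_0,\ldots,n-1\}$ by backward induction from the trivial base $c_n\equiv 0$; the symmetric statement for $c_{i+1}$ with $i\in\{1,\ldots,i_0-2\}$ follows by an identical forward induction from $c_1\equiv 0$. The key computational identity underlying everything is that, for $s_i\geq\delta_i$ and $s_{i+1}\leq\delta_i$, writing $u=\delta_i-s_{i+1}\geq 0$ and $v=s_i-\delta_i\geq 0$, one has $g_{i+1}(s_{i+1},s_i)^2=(x_{i+1}u+x_iv)^2+(u+v)^2$; crucially $x_i,x_{i+1}\geq 0$ throughout the range $i\geq i_0$, which keeps every cross term nonnegative.

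Assume the properties hold for $c_{i+1}$. Convexity of $c_i$ on each half-line follows from partial minimization of the jointly convex integrand $(s_i,s_{i+1})\mapsto g_{i+1}(s_{i+1},s_i)+c_{i+1}(s_{i+1})$ over the convex set $S_i(s_i)$; the two halves glue at $\delta_i$ with common value $c_{i+1}(\delta_i)$, and convexity on all of $\R$ then follows from the minimum property established next. For the minimum at $\delta_i$, direct expansion gives $g_{i+1}(s_{i+1},s_i)\geq\sqrt{1+x_{i+1}^2}\,u$ (the residual is $v\cdot[2(1+x_ix_{i+1})u+(1+x_i^2)v]\geq 0$), which combined with the inductive Lipschitz bound $c_{i+1}(s_{i+1})\geq c_{i+1}(\delta_i)-\sqrt{1+x_{i+1}^2}\,u$ yields $g_{i+1}+c_{i+1}\geq c_{i+1}(\delta_i)=c_i(\delta_i)$. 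The $\sqrt{1+x_i^2}$-Lipschitz upper bound $c_i(s_i)\leq\sqrt{1+x_i^2}\,|s_i-\delta_i|+c_{i+1}(\delta_i)$ comes from the feasible choice $s_{i+1}=\delta_i$, and Lipschitz-ness on each half-line follows by substituting the optimum of one $s_i$ into the objective at a nearby $s_i'$ and invoking $|\partial_{s_i}g_{i+1}|\leq\sqrt{1+x_i^2}$ (Cauchy--Schwarz on the gradient). For the slope lower bound, pairing the vector whose norm is $g_{i+1}$ with the unit vector $(x_{i+1},1)/\sqrt{1+x_{i+1}^2}$ gives $g_{i+1}\geq\sqrt{1+x_{i+1}^2}\,u+\frac{1+x_ix_{i+1}}{\sqrt{1+x_{i+1}^2}}v$; adding the inductive Lipschitz lower bound on $c_{i+1}$ produces $c_i(s_i)-c_i(\delta_i)\geq\frac{1+x_ix_{i+1}}{\sqrt{1+x_{i+1}^2}}(s_i-\delta_i)$, and convexity of $c_i$ lifts this from $\delta_i$ to arbitrary pairs in $[\delta_i,\infty)$.

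The delicate part, and the main obstacle I expect, is the linear-or-strictly-convex dichotomy. Setting $\phi(s_i):=c_i(s_i)-\sqrt{1+x_i^2}(s_i-\delta_i)-c_{i+1}(\delta_i)$, $\phi$ is convex, nonpositive, and vanishes at $\delta_i$. A rigidity argument handles the linear branch: if $\phi(s_i^*)=0$ at any $s_i^*>\delta_i$, convexity and $\phi\leq 0$ force $\phi\equiv 0$ on $[\delta_i,s_i^*]$; the left-derivative of $c_i$ at $s_i^*$ is then $\sqrt{1+x_i^2}$, and convexity together with the Lipschitz bound propagate this maximal slope to $[s_i^*,\infty)$, giving $\phi\equiv 0$ on $[\delta_i,\infty)$. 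In the opposite case $\phi<0$ on $(\delta_i,\infty)$, the minimizer $s_{i+1}^*(s_i)$ is strictly interior (strictly less than $\delta_i$) for every $s_i>\delta_i$; I would establish strict convexity of $c_i$ there by ruling out equality in Jensen's inequality along any chord, using that equality in the Jensen bound for $g_{i+1}$ forces the two pairs $(s_{i+1}^*(s_i^1),s_i^1)$ and $(s_{i+1}^*(s_i^2),s_i^2)$ to produce the same kink location $\tau$ (parallelism in the affine image), while equality for $c_{i+1}$ requires $c_{i+1}$ to be affine between the two optima. Combining with the inductive dichotomy on $c_{i+1}$, the interior first-order condition $c_{i+1}'(s_{i+1}^*)=(x_{i+1}\tau+1)/\sqrt{\tau^2+1}$ pins $\tau=x_{i+1}$, which in turn forces $s_{i+1}^*=\delta_i$, contradicting interiority. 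The technical subtlety is a careful sub-case analysis on the position of $s_{i+1}^*$ relative to $\delta_{i+1}$ so as to invoke the inductive dichotomy on the correct branch of $c_{i+1}$; this is the step I expect to require the most care.
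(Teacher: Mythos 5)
Your proposal is correct and follows the same overall strategy as the paper: backward induction on the Bellman functions $c_i$, convexity by partial minimisation of the jointly convex $g_{i+1}+c_{i+1}$, the Lipschitz upper bound from the feasible choice $s_{i+1}=\delta_i$, the minimum at $\delta_i$ and the anti-Lipschitz bound both from the expansion $g_{i+1}^2=(1+x_{i+1}^2)u^2+2(1+x_ix_{i+1})uv+(1+x_i^2)v^2$ with nonnegative cross terms (the paper gets point 2 slightly differently, by bounding the derivative of $s\mapsto g_{i+1}(s_{i+1},s)$ and comparing arbitrary pairs directly rather than anchoring at $\delta_i$ and lifting by convexity; both are fine). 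The one genuinely different piece is the linear-or-strictly-convex dichotomy. The paper resolves it constructively: if $\delta_{i+1}\le\delta_i$ the inner minimiser sits at the boundary $\delta_i$ and $c_i$ is exactly the linear branch; if $\delta_{i+1}>\delta_i$ and $c_{i+1}$ is linear on $[\delta_i,\delta_{i+1}]$ the minimiser jumps to $\delta_{i+1}$ and $c_i=g_{i+1}(\delta_{i+1},\cdot)+c_{i+2}(\delta_{i+1})$ is strictly convex; otherwise a Jensen-equality argument applies. Your route instead splits on whether the Lipschitz upper bound is attained and, in the strictly sub-linear case, rules out affine pieces via the interior first-order condition combined with the inductive dichotomy on $c_{i+1}$. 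This works, but note two small corrections. First, the rigidity step needs $\phi\le 0$ on the \emph{whole} half-line (which you do have from the Lipschitz bound): convexity plus $\phi(\delta_i)=\phi(s_i^*)=0$ alone does not force $\phi\equiv 0$ on $[\delta_i,s_i^*]$ (a convex function can dip below a chord joining two zeros); the correct argument first shows $\phi\equiv 0$ on $[s_i^*,\infty)$ and then fills in the middle. Second, in the first-order-condition step, $\tau=x_{i+1}$ forces $s_i=\delta_i$ (the weight on $x_i$ in the convex combination defining $\tau$ vanishes), not $s_{i+1}^*=\delta_i$; the contradiction is with $s_i>\delta_i$ on the open branch, not with interiority of $s_{i+1}^*$. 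Neither slip is fatal. The paper's version has the side benefit of locating the inner minimiser explicitly, which is reused in the proof of \cref{thm:recovery}; yours is a cleaner pure rigidity argument but yields less. Also, you should treat $i=n-1$ as an explicit base case (as the paper does, computing $c_{n-1}(s)=\sqrt{1+x_{n-1}^2}\,|s-\delta_{n-1}|$), since $c_n\equiv 0$ does not literally satisfy the dichotomy in the form needed to launch the induction.
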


\begin{proof}
For any $i\in\{1,\ldots,i_0-2\}$, we prove the result by (backward) induction.
Since $c_n=0$, a straightforward calculation gives\footnote{This calculation uses the fact that both $x_{n-1}$ and $x_n$ are positive, which implies that the minimal $s_n$ in the definition of $c_{n-1}$ is $\delta_{n-1}$}
\begin{equation*}
c_{n-1}(s_{n-1}) = \sqrt{1+x_{n-1}^2}|s_{n-1}-\delta_{n-1}|,
\end{equation*}
which gives the wanted properties for $i=n-1$.

\medskip

Now consider $i\in\{i_0,\ldots,n-2\}$ such that $c_{i+1}$ verifies all the properties in the first part of \cref{lemma:unique}.
We first show the Lipschitz property of $c_i$. Let $s_i,s'_i <\delta_i$ first. By inductive assumption, the function $s_{i+1}\mapsto g_{i+1}(s_{i+1},s_i) + c_{i+1}(s_{i+1})$ reaches a minimum on $[\delta_i,+\infty)$. Consider $s_{i+1}\geq \delta_i$ such that
\begin{equation*}
c_i(s_i) = g_{i+1}(s_{i+1},s_i) + c_{i+1}(s_{i+1}).
\end{equation*}
Also by minimisation, $c_i(s'_i) \leq g_{i+1}(s_{i+1},s'_i) + c_{i+1}(s_{i+1})$. For the vectors $u=(x_{i+1},1)$ and $v=(x_i,1)$, it then holds:
\begin{align*}
c_i(s'_i) - c_i(s_i) & \leq g_{i+1}(s_{i+1},s'_i) - g_{i+1}(s_{i+1},s_i)\\
& = \left\| (s_{i+1}-\delta_i)u-(s'_i-\delta_i)v \right\|  - \left\| (s_{i+1}-\delta_i)u-(s_i-\delta_i)v \right\|\\
& \leq \left\| (s_i-s'_i)v \right\| = \sqrt{1+x_i^2} |s_i-s'_i|.
\end{align*}
The first equality comes from the definition of $g_{i+1}$ as a norm and the second inequality comes from the triangle inequality. By symmetry, we showed $|c_i(s'_i) - c_i(s_i)|\leq \sqrt{1+x_i^2} |s_i-s'_i|$ for $s_i,s'_i <\delta_i$. 

Note that if $s_i=\delta_i$, then $s_{i+1}=\delta_i$ and we show similarly that $c_i(s'_i) - c_i(\delta_i)\leq \sqrt{1+x_i^2} |\delta_i-s'_i|$. Moreover,
\begin{align*}
c_i(s'_i) - c_i(\delta_i) & = \min_{s'_{i+1}\geq \delta_i} \|(s'_{i+1}-\delta_i) u - (s'_i-\delta_i)v\|+ c_{i+1}(s'_{i+1}) -  c_{i+1}(\delta_i) \\
& \geq \min_{s'_{i+1}\geq \delta_i} \|(s'_{i+1}-\delta_i) u - (s'_i-\delta_i)v\|- \|(s'_{i+1}-\delta_i)u\| \\
& \geq 0.
\end{align*}
The first inequality comes from the Lipschitz property of $c_{i+1}$. The second from the fact that $(s'_{i+1}-\delta_i) u$ and $(s'_i-\delta_i)v$ are negatively correlated, since $x_i$ and $x_{i+1}$ are both positive.
As a consequence, $c_i$ is $\sqrt{1+x_i^2}$-Lipschitz on $(-\infty,\delta_i]$. Symmetrically, it is also $\sqrt{1+x_i^2}$-Lipschitz on $[\delta_i,+\infty)$, which finally implies it is $\sqrt{1+x_i^2}$-Lipschitz on $\R$.  Moreover, the last calculation also shows that $c_i$ is minimised for $s_i=\delta_i$.

Let us now show that $c_i$ verifies the first point on $(-\infty,\delta_i]$. By continuity, we only have to show it on $(-\infty,\delta_i)$. 
Let $s_i\in(-\infty,\delta_i)$, we then have by definition
\begin{equation*}
c_i(s_i) = \min_{s_{i+1}\geq\delta_i}g_{i+1}(s_{i+1},s_i) + c_{i+1}(s_{i+1}).
\end{equation*}
If $\delta_{i+1}\leq\delta_i$, note that both functions $g_{i+1}(\cdot,s_i)$ and $c_{i+1}$ are increasing on $[\delta_i,+\infty)$\footnote{Here again, we use the fact that $x_i$ and $x_{i+1}$ are positive.}. The minimum is thus reached for $s_{i+1}=\delta_i$ and
\begin{equation*}
c_i(s_i) = \sqrt{1+x_i^2}|s_i-\delta_i| + c_{i+1}(\delta_i).
\end{equation*}
If $\delta_{i+1}>\delta_i$, both functions $g_{i+1}(\cdot,s_i)$ and $c_{i+1}$ are increasing on  $[\delta_{i+1},+\infty)$. As a consequence, we can then rewrite
\begin{equation}\label{eq:ci2}
c_i(s_i) = \min_{s_{i+1}\in[\delta_i,\delta_{i+1}]}g_{i+1}(s_{i+1},s_i) + c_{i+1}(s_{i+1}).
\end{equation}
Assume first that $c_{i+1}(s_{i+1}) = \sqrt{1+x_{i+1}^2}|s_{i+1}-\delta_{i+1}|+c_{i+2}(\delta_{i+1})$ on $[\delta_i,\delta_{i+1}]$. By triangle inequality, we actually have
\begin{equation*}
g_{i+1}(s_{i+1},s_i) \geq g_{i+1}(\delta_{i+1},s_i) -\sqrt{1+x_{i+1}^2}|\delta_{i+1}-s_{i+1}|.
\end{equation*}
This leads for $s_{i+1}\in[\delta_i,\delta_{i+1}]$ to
\begin{align*}
g_{i+1}(s_{i+1},s_i) + c_{i+1}(s_{i+1}) & \geq g_{i+1}(\delta_{i+1},s_i)+c_{i+2}(\delta_{i+1}).
\end{align*}
The minimum in \cref{eq:ci2} is thus reached for $s_{i+1}=\delta_{i+1}$, which finally gives for any $s_i\leq\delta_i$
\begin{align*}
c_i(s_i) =  g_{i+1}(\delta_{i+1},s_i)+c_{i+2}(\delta_{i+1}).
\end{align*}
Since $\delta_{i+1}>\delta_i$, it is easy to check that $g_{i+1}(\delta_{i+1},\cdot)$ is strictly convex on $(-\infty,\delta_i)$ and so is $c_i$.

Let us now assume the last case, where  $c_{i+1}$ is strictly convex on $[\delta_i,\delta_{i+1}]$. By contradiction, assume that the first point on $(-\infty,\delta_i]$ does not hold.
Note in the following $h(s_{i+1},s_i) = g_{i+1}(s_{i+1},s_i) + c_{i+1}(s_{i+1})$.
For $s_i, s'_i <\delta_i$, by continuity of $h$, let $s_{i+1}, s'_{i+1}\in[\delta_i,\delta_{i+1}]$ be such that
\begin{equation*}
c_i(s_i) = h(s_{i+1},s_i) \quad \text{and} \quad c_i(s'_i) = h(s'_{i+1},s'_i).
\end{equation*}
For any $t\in(0,1)$, by convexity of $h$:
\begin{align*}
c_i(ts_i + (1-t)s'_i) & \leq h(t (s_{i+1},s_{i})+(1-t)(s_{i+1},s_{i})) \\
& \leq th(s_{i+1},s_{i}) + (1-t)h(s'_{i+1},s'_{i})\\
& = tc_i(s_i) + (1-t)c_i(s'_i).
\end{align*}
$c_i$ is thus convex on $(-\infty,\delta_i]$. Moreover, the case of equality corresponds to the case of equality for both $g_{i+1}$ and $c_{i+1}$: 
\begin{gather*}
g_{i+1}(t (s_{i+1},s_{i})+(1-t)(s_{i+1},s_{i})) = tg_{i+1}(s_{i+1},s_i) + (1-t)g_{i+1}(s'_{i+1},s'_i)\\
c_{i+1}(t s_{i+1}+(1-t)s'_{i+1}) = tc_{i+1}(s_{i+1}) +(1-t)c_{i+1}(s'_{i+1}).
\end{gather*}
The former leads to the colinearity of the vectors $(s_{i+1}-\delta_i, s_i-\delta_i)$ and $(s'_{i+1}-\delta_i, s'_i-\delta_i)$; the latter gives $s_{i+1}=s'_{i+1}$ by strict convexity of $c_{i+1}$. Two cases are then possible
\begin{equation*}
\begin{cases}
\text{either } s_{i+1}=\delta_i = s'_{i+1}\\
\text{or } s_i = s'_i.
\end{cases}
\end{equation*}
The former case then implies that $c_i(s_i) = \sqrt{1+x_i^2}|s_i-\delta_i|+c_{i+1}(\delta_i)$. Since $c_i(\delta_i) = c_{i+1}(\delta_i)$, $c_i$ is $\sqrt{1+x_i^2}$-Lipschitz and convex on $(-\infty,\delta_i]$, this leads to $c_i(s)= \sqrt{1+x_i^2}|s-\delta_i|+c_{i+1}(\delta_i)$ for any $s\in(-\infty,\delta_i]$.
This contradicts the assumption that the first point does not hold on $(-\infty,\delta_i]$. Necessarily, we have $s_i = s'_i$. So $c_i$ is strictly convex on $(-\infty,\delta_i]$, which leads to another contradiction: the first point does hold on $(-\infty,\delta_i]$.

Finally, we just showed that in any case, $c_{i}$ is either strictly convex or equal to $s_i\mapsto\sqrt{1+x_i^2}|s_i-\delta_i|$ on $(-\infty,\delta_i]$. Symmetric arguments yield the same on $[\delta_i, +\infty)$. $c_i$ is thus minimised in $\delta_i$, $\sqrt{1+x_i^2}$-Lipschitz and verifies the first point on both intervals $(-\infty,\delta_i]$ and $[\delta_i, +\infty)$. This directly implies that $c_i$ is convex on $\R$. 

It now remains to show the second point on the two intervals. Let us show it on $(-\infty,\delta_i]$: on $(-\infty,\delta_i)$ is actually sufficient by continuity. Consider $s_i<s'_i<\delta_i$ and $s_{i+1} \in [\delta_i,+\infty)$ such that 
\begin{equation*}
c_i(s_i) = g_{i+1}(s_{i+1},s_i) + c_{i+1}(s_{i+1}).
\end{equation*} 
By definition of $c_i$,
\begin{align*}
c_i(s_i) - c_i(s'_i) \geq g_{i+1}(s_{i+1},s_i) - g_{i+1}(s_{i+1},s'_i).
\end{align*}
Straightforward computations yield that the function 
\begin{equation*}
h_2:\quad
\!
\begin{aligned}
(-\infty,\delta_i] \to \R_+\\
s \mapsto g_{i+1}(s_{i+1},s)
\end{aligned}
\end{equation*}
is convex and $h_2'(\delta_i) = -\frac{1+x_i x_{i+1}}{\sqrt{1+x_{i+1}^2}}$. Thus, $h_2' \leq -\frac{1+x_i x_{i+1}}{\sqrt{1+x_{i+1}^2}}$, which finally implies
\begin{align*}
c_i(s_i) - c_i(s'_i) &\geq h(s_i) - h(s'_i)\\
&\geq \frac{1+x_i x_{i+1}}{\sqrt{1+x_{i+1}^2}} (s'_i-s_i).
\end{align*}
The second point is thus verified on $(-\infty,\delta_i)$ and on $(-\infty,\delta_i]$ by continuity. Symmetric arguments lead to the same property on $[\delta_i,+\infty)$.

By induction, this implies the first part of \cref{lemma:unique}. Symmetric arguments lead to the second part of \cref{lemma:unique} for $i\leq i_0-2$.
\end{proof}

We can now prove \cref{thm:unique}. Following the proof of \cref{lemma:DP}, there is a unique minimiser of \cref{eq:mininterpolator1} if and only if the following problem admits a unique minimiser:
\begin{equation}\label{eq:slopecost}
\min_{\bs\in\cS} \sum_{i=1}^{n-1}g_{i+1}(s_{i+1},s_i).
\end{equation}
We already know that the minimum is attained thanks to \cref{lemma:sparserepresentation}. By construction of the functions $c_i$, any minimum $\tilde{\bs}$ of \cref{eq:slopecost} verifies
\begin{gather}
\tilde{s}_i \in \argmin_{s_i\in S_i(\tilde{s}_{i+1})} g_{i+1}(\tilde{s}_{i+1},s_i) + c_i(s_i)\quad \text{for any } i \in [i_0-2]\label{eq:ci3}\\
(\tilde{s}_{i_0-1},\tilde{s}_{i_0}) \in \argmin_{(s_{i_0-1},s_{i_0})\in \Lambda}g_{i_0}(s_{i_0},s_{i_0-1})+c_{i_0-1}(s_{i_0-1})+c_{i_0}(s_{i_0}) \label{eq:junction}\\
\tilde{s}_{i+1} \in \argmin_{s_{i+1}\in S_{i}(\tilde{s}_{i})} g_{i+1}(s_{i+1},\tilde{s}_i) + c_{i+1}(s_{i+1}) \quad \text{for any } i \in \{i_0, \ldots, n-1\} \notag
\end{gather}
It now remains to show that all these problems admit unique minimisers. First assume \cref{eq:junction} admits different minimisers $(s_{i_0-1},s_{i_0})$ and $(s'_{i_0-1},s'_{i_0})$. Note in the following $h_{i_0-1} : (s,s') \mapsto g_{i_0}(s,s') + c_{i_0-1}(s') + c_{i_0}(s)$.
By minimisation and convexity of the three functions $g_{i_0},c_{i_0-1},c_{i_0}$, for any $t\in(0,1)$:
\begin{align}
h_{i_0-1}(t(s_{i_0-1},s_{i_0})+(1-t)(s'_{i_0-1},s'_{i_0})) & \leq th_{i_0-1}(s_{i_0-1},s_{i_0}) + (1-t)h_{i_0-1}(s'_{i_0-1},s'_{i_0}) \\
= h_{i_0-1}(s_{i_0-1},s_{i_0})\label{eq:allmin}.
\end{align}
The whole segment joining $(s_{i_0-1},s_{i_0})$ and $(s'_{i_0-1},s'_{i_0})$ is then a minimiser. Without loss of generality, we can thus assume that both $s_{i_0}$ and $s'_{i_0-1}$ are on the same side of $\delta_{i_0-1}$ (e.g. smaller than $\delta_{i_0-1}$) and both $s_{i_0}$ and $s'_{i_0}$ are on the same side of $\delta_{i_0}$.

Moreover, \cref{eq:allmin} implies an equality on $g_{i_0}$ that leads to the colinearity of the vectors $(s_{i_0-1}-\delta_{i_0-1},s_{i_0}-\delta_{i_0-1})$ and $(s'_{i_0-1}-\delta_{i_0-1},s'_{i_0}-\delta_{i_0-1})$. In particular, both $s_{i_0}\neq s'_{i_0}$ and $s_{i_0-1}\neq s'_{i_0-1}$. Moreover, we have equality cases on both $c_{i_0-1}$ and $c_{i_0}$ implying, thanks to the first point of \cref{lemma:unique}
\begin{equation}\label{eq:affinecase}
\begin{gathered}
|c_{i_0-1}(s_{i_0-1})-c_{i_0-1}(s'_{i_0-1})| = \sqrt{1+x_{i_0-1}^2}|s_{i_0-1}-s_{i_0-1}|\\
|c_{i_0}(s_{i_0})-c_{i_0}(s'_{i_0})| = \sqrt{1+x_{i_0}^2}|s_{i_0}-s_{i_0}|.
\end{gathered}
\end{equation}
For $u=(x_{i_0-1},1)$ and $v=(x_{i_0},1)$, we have by (positive) colinearity of $(s_{i_0-1}-\delta_{i_0-1},s_{i_0}-\delta_{i_0-1})$ and $(s'_{i_0-1}-\delta_{i_0-1},s'_{i_0}-\delta_{i_0-1})$:
\begin{align*}
|g_{i_0}(s_{i_0},s_{i_0-1})-g_{i_0}(s'_{i_0},s'_{i_0-1})| & = \left|\|(s_{i_0}-\delta_{i_0-1})v-(s_{i_0-1}-\delta_{i_0-1})u\| -  \|(s'_{i_0}-\delta_{i_0-1})v-(s'_{i_0-1}-\delta_{i_0-1})u\|\right|\\
& = \|(s_{i_0}-s'_{i_0})v - (s_{i_0-1}-s'_{i_0-1})u\|.
\end{align*}
Since $s_{i_0}\neq s'_{i_0}$ and $s_{i_0-1}\neq s'_{i_0-1}$, the triangle inequality gives both strict inequalities
\begin{gather*}
\left| \sqrt{1+x_{i_0}^2}|s_{i_0}-s'_{i_0}| -\sqrt{1+x_{i_0-1}^2}|s_{i_0-1}-s'_{i_0-1}|\right|< |g_{i_0}(s_{i_0},s_{i_0-1})-g_{i_0}(s'_{i_0},s'_{i_0-1})|,  \\
 \sqrt{1+x_{i_0}^2}|s_{i_0}-s'_{i_0}| +\sqrt{1+x_{i_0-1}^2}|s_{i_0-1}-s'_{i_0-1}|> |g_{i_0}(s_{i_0},s_{i_0-1})-g_{i_0}(s'_{i_0},s'_{i_0-1})|.
\end{gather*}
Using this with \cref{eq:affinecase}, this yields
\begin{equation*}
g_{i_0}(s_{i_0},s_{i_0-1})-g_{i_0}(s'_{i_0},s'_{i_0-1}) \neq c_{i_0}(s_{i_0}) - c_{i_0}(s'_{i_0}) + c_{i_0-1}(s_{i_0-1}) - c_{i_0-1}(s'_{i_0-1}).
\end{equation*}
This contradicts the fact that $(s_{i_0-1},s_{i_0})$ and $(s'_{i_0-1},s'_{i_0})$ both minimise \cref{eq:junction}. Hence, \cref{eq:junction} admits a unique minimiser.

Also the minimisation problem
\begin{equation*}
\min_{s_{i+1}\in S_{i}(\tilde{s}_{i})} g_{i+1}(s_{i+1},\tilde{s}_i) + c_{i+1}(s_{i+1})
\end{equation*}
admits a unique minimiser for any $i \in \{i_0, \ldots, n-1\}$. Indeed, either $\tilde{s}_i=\delta_i$ in which case the constraint set is a singleton, or the function $s_{i+1} \mapsto g_{i+1}(s_{i+1},\tilde{s}_i)$ is strictly convex for $\tilde{s}_i\neq \delta_i$. A symmetric argument exists for the minimisation problem of \cref{eq:ci2}. It thus concludes the proof of \cref{thm:unique}.

\subsection{Proof of \cref{lemma:sparse1}}\label{app:sparse1proof}

Let $i\in\{i_0,\ldots,n-1\}$. Recall that
\begin{equation*}
s^*_{i+1} =\argmin_{s_{i+1}\in S_{i}(s^*_i)}g_{i+1}(s_{i+1},s^*_i)+c_{i+1}(s_{i+1}).
\end{equation*}
If $i=n-1$, the objective is obviously minimised for $s^*_n = \delta_{n-1}$ as both $x_{n-1}$ and $x_n$ are positive. Otherwise, assume for example that $s^*_i>\delta_i$. Thanks to \cref{lemma:unique}, the objective is decreasing on $(-\infty,\min(\delta_i, \delta_{i+1})]$ and $S_{i}(s^*_i)=[\delta_i,+\infty)$ which yields that $s^*_{i+1}\in[\min(\delta_i, \delta_{i+1}),\delta_i]\subset [\min(\delta_i, \delta_{i+1}),\max(\delta_i, \delta_{i+1})]$. The case $s^*_i=\delta_i$ is trivial and similar arguments hold for $s^*_i<\delta_i$.

Now consider $i=i_0-1$. Assume first that $s^*_{i_0-1}>\delta_{i_0-1}$, then
\begin{equation*}
s^*_{i_0} = \argmin_{s_{i_0}<\delta_{i_0-1}} g_{i_0}(s_{i_0},s^*_{i_0-1}) + c_{i_0}(s_{i_0}).
\end{equation*}
Thanks to the last point of \cref{lemma:unique}
\begin{equation}\label{eq:antiLipschitz}
c_{i_0}(s_{i_0})-c_{i_0}(s'_{i_0}) \geq \frac{1+x_{i_0}x_{i_0+1}}{\sqrt{1+x_{i_0+1}^2}}(s_{i_0}-s'_{i_0}) \qquad \text{for any } s_{i_0}<s'_{i_0}\leq\delta_{i_0}.
\end{equation}
Note that the function
\begin{equation*}
h:\quad
\!
\begin{aligned}
(-\infty,\delta_{i_0-1}] \to \R_+\\
s \mapsto g_{i_0}(s,s_{i_0-1}^*)
\end{aligned}
\end{equation*}
is convex and verifies $h'(\delta_{i_0-1}) = -\frac{1+x_{i_0-1} x_{i_0}}{\sqrt{1+x_{i_0-1}^2}}$. Since $x_{i_0-1}<0\leq x_{i_0+1}$, it comes
\begin{equation*}
-\frac{1+x_{i_0-1} x_{i_0}}{\sqrt{1+x_{i_0-1}^2}} \leq x_{i_0} \leq \frac{1+x_{i_0}x_{i_0+1}}{\sqrt{1+x_{i_0+1}^2}}.
\end{equation*}
Thanks to \cref{eq:antiLipschitz}, the function $s_{i_0}\mapsto g_{i_0}(s_{i_0},s^*_{i_0-1}) + c_{i_0}(s_{i_0})$ is thus decreasing on $(-\infty,\min(\delta_{i_0-1},\delta_{i_0})]$. As above, this implies that $s^*_{i_0}\in[\min(\delta_{i_0-1}, \delta_{i_0}),\max(\delta_{i_0-1}, \delta_{i_0})]$. The case $s^*_{i_0-1}=\delta_{i_0-1}$ is trivial and similar arguments hold if $s^*_{i_0-1}<\delta_{i_0-1}$.

We showed $s^*_{i+1}\in [\min(\delta_i, \delta_{i+1}),\max(\delta_i, \delta_{i+1})]$ for any $i\in\{i_0,\ldots,n\}$. Symmetric arguments hold for $i\in[i_0-1]$. This concludes the proof of \cref{lemma:sparse1}.

\subsection{Proof of \cref{lemma:sparsest}}\label{app:sparsestproof}

Let us first prove that any sparsest interpolator $f$ has at least a number of kinks given by the right sum. 
For that, we actually show that for $k\geq1$, on any interval $(x_{n_k-1},x_{n_{k+1}})$ with $\delta_{n_k-1}\neq\delta_{n_k}$, $f$ has at least $\left\lceil \frac{n_{k+1}-n_{k}}{2}\right\rceil$ kinks, whose signs are given by  $\sign(\delta_{n_k}-\delta_{n_k-1})$. Consider any $k\geq 1$  such that $\delta_{n_k-1}\neq\delta_{n_k}$. Assume w.l.o.g. that $\delta_{n_k-1}<\delta_{n_k}$. By the definition of \cref{eq:partition}:
\begin{equation*}
\delta_j > \delta_{j-1} \quad \text{for any } j \in \{n_k, \ldots, n_{k+1}-1\}.
\end{equation*}
Obviously, $f$ must count at least one positive kink on each interval of the form\footnote{Otherwise, the derivative would be weakly decreasing on the interval, contradicting interpolation.} $(x_{j-1},x_{j+1})$ for any $n_k\leq j \leq n_{k+1}-1$. Note that we can build $\left\lceil \frac{n_{k+1}-n_{k}}{2}\right\rceil$ disjoint such intervals. Thus, $f$ has at least $\left\lceil \frac{n_{k+1}-n_{k}}{2}\right\rceil$ positive kinks on $(x_{n_k-1},x_{n_{k+1}})$. 

The intervals of the form $(x_{n_k-1},x_{n_{k+1}})$ with $\delta_{n_k-1}<\delta_{n_k}$ are disjoint by definition. As a consequence, $f$ has a total number of positive kinks at least
\begin{equation*}
 \sum_{k\geq 1}\left\lceil \frac{n_{k+1}-n_{k}}{2}\right\rceil \iind{\delta_{n_k-1}<\delta_{n_k}}.
\end{equation*}
Similarly, $f$ has a total number of negative kinks at least
\begin{equation*}
 \sum_{k\geq 1}\left\lceil \frac{n_{k+1}-n_{k}}{2}\right\rceil \iind{\delta_{n_k-1}>\delta_{n_k}},
\end{equation*}
which leads to the first part of \cref{lemma:sparsest}
\begin{equation*}
\min_{\substack{f\\\forall i, f(x_i)=y_i}} \|f''\|_{0} \geq \sum_{k\geq 1}\left\lceil \frac{n_{k+1}-n_{k}}{2}\right\rceil \iind{\delta_{n_k-1}\neq\delta_{n_k}}.
\end{equation*}

\medskip

We now construct an interpolating function that has exactly the desired number of kinks. Note that the problem considered in \cref{lemma:sparsest} is shift invariant (which is not the case of \cref{eq:mininterpolator1}). As a consequence, we can assume without loss of generality that $x_1\geq 0$. This simplifies the definition of the following sequence of slopes $\mathbf{s}\in S$:
\begin{gather*}
s_1 = \delta_1 \\
\text{and for any } i\in\{2,\ldots,n\}, \quad s_i = \begin{cases} \delta_{i-1} \text{ if } \left(s_{i-1} = \delta_{i-1} \text{ or } i=n_k \text{ for some }k\geq 1\right)\\
s_i = \delta_{i} \text{ otherwise}.
\end{cases}
\end{gather*}
It is easy to check that $\mathbf{s}\in S$. We now consider the function $f$ associated to the sequence of slopes by the mapping of \cref{lemma:mapping} and an interval $[x_{n_k-1},x_{n_{k+1}})$ with $\delta_{n_k-1}\neq\delta_{n_k}$. 
By definition, $s_{n_k+1+2p}=\delta_{n_k+1+2p}$ for any $p$ such that $n_k+1 \leq n_k+1+2p < n_{k+1}$. This implies that $f$ has no kink in the interval $[x_{n_k+1+2p}, x_{n_k+2+2p})$. From there, a simple calculation shows that $f$ has at most $\left\lceil \frac{n_{k+1}-n_{k}}{2}\right\rceil$ kinks on $[x_{n_k},x_{n_{k+1}})$. Moreover, as $s_{i} = \delta_{i-1}$ if $i=n_k$, $f$ has no kink on intervals $[x_{n_k},x_{n_{k+1}})$ when $\delta_{n_k-1}=\delta_{n_k}$. $f$ is thus an interpolating function with at most 
\begin{equation*}
\sum_{k\geq 1}\left\lceil \frac{n_{k+1}-n_{k}}{2}\right\rceil \iind{\delta_{n_k-1}\neq\delta_{n_k}},
\end{equation*}
kinks, which concludes the proof of \cref{lemma:sparsest}.

\subsection{Proof of \cref{thm:recovery}}\label{app:recoveryproof}

Let $f$ be the minimiser of \cref{eq:mininterpolator1}. The proof of \cref{thm:recovery} separately shows that $f$ has exactly $\left\lceil \frac{n_{k+1}-n_{k}}{2}\right\rceil \iind{\delta_{n_k-1}\neq\delta_{n_k}}$ kinks on each $(x_{n_k-1},x_{n_{k+1}})$. Fix in the following $k \geq 0$. 

\medskip

Assume first that $\delta_{n_k-1}=\delta_{n_k}$. Then \cref{lemma:sparse1} along with the definitions of $n_k$ and $n_{k+1}$ directly imply that $s_{i}^* = \delta_{n_k-1}$ for any $i = n_{k},\ldots,n_{k+1}-1$. This then implies that the associated interpolator, i.e. $f$ has no kink on $(x_{n_k-1},x_{n_{k+1}})$.

\medskip

Now assume that $\delta_{n_k-1}\neq\delta_{n_k}$. Without loss of generality, assume $\delta_{n_k-1}<\delta_{n_k}$. By the definition of \cref{eq:partition}:
\begin{equation*}
\delta_j > \delta_{j-1} \quad \text{for any } j \in \{n_k, \ldots, n_{k+1}-1\}.
\end{equation*}
Moreover, by definition of $n_k$, we have
\begin{equation*}
\begin{cases}\text{either } n_k=1 \\\text{or }  \delta_{n_k-1} \leq \delta_{n_k-2}\end{cases} \qquad \text{and}\qquad
\begin{cases}\text{either } n_{k+1}=n \\\text{or }  \delta_{n_{k+1}} \leq \delta_{n_{k+1}-1}\end{cases}
\end{equation*}
Since $n_{k+1} \leq n_k+3$ by \cref{ass:slopes}, \cref{lemma:case1} below states that for all the cases, $f$ has exactly  $\left\lceil \frac{n_{k+1}-n_{k}}{2}\right\rceil$ kinks on $(x_{n_k-1},x_{n_{k+1}})$. 

Symmetric arguments hold if  $\delta_{n_k-1}>\delta_{n_k}$. In conclusion, $f$ has exactly
$\left\lceil \frac{n_{k+1}-n_{k}}{2}\right\rceil \iind{\delta_{n_k-1}\neq\delta_{n_k}}$ kinks on each $(x_{n_k-1},x_{n_{k+1}})$. This implies that $f$ has at most
\begin{equation*}
\sum_{k\geq 1}\left\lceil \frac{n_{k+1}-n_{k}}{2}\right\rceil \iind{\delta_{n_k-1}\neq\delta_{n_k}}
\end{equation*}
kinks in total. This concludes the proof of \cref{thm:recovery}, thanks to \cref{lemma:sparsest}.

\begin{lem}\label{lemma:case1}
For any $k\geq 0$, if $\delta_{n_k-1}<\delta_{n_k}$, then the minimiser of \cref{eq:mininterpolator1} $f$ has
\begin{enumerate}
\item $1$ kink on $(x_{n_k-1},x_{n_{k+1}})$ if $n_{k+1}=n_k+1$;
\item $1$ kink on $(x_{n_k-1},x_{n_{k+1}})$ if $n_{k+1}=n_k+2$;
\item $2$ kinks on $(x_{n_k-1},x_{n_{k+1}})$ if $n_{k+1}=n_k+3$.
\end{enumerate}
\end{lem}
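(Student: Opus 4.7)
The plan is to translate the kink-counting problem into a slope-counting one using \cref{lemma:DP}, and then analyze each case separately. A kink in $(x_i,x_{i+1})$ exists precisely when $s_i^\ast \neq s_{i+1}^\ast$; moreover, \cref{lemma:sparserepresentation} forces the breakpoints $\tau_i$ to lie in disjoint sub-intervals once we are on a single side of the junction index $i_0$, so two adjacent nonzero kinks cannot merge, and the number of kinks in $(x_{n_k-1},x_{n_{k+1}})$ equals the number of strict jumps in $(s_i^\ast)_{i=n_k-1}^{n_{k+1}}$. Combining \cref{lemma:sparse1} with the chain of strict inequalities $\delta_{n_k-1} < \delta_{n_k} < \cdots < \delta_{n_{k+1}-1}$ inside the convex region shows that this slope sequence is weakly increasing, and $\delta_{n_k-1} < \delta_{n_k}$ forces at least one strict jump.

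For each of the three cases I would exhibit an explicit candidate sparse sequence $\tilde{\mathbf{s}}$ which (i) lies in $\cS$, (ii) agrees with $\mathbf{s}^\ast$ at the boundary indices $n_k-1$ and $n_{k+1}$ (so that the substitution does not disturb the remainder of the dynamic program), and (iii) exhibits exactly the claimed number of strict jumps on $\{n_k-1,\dots,n_{k+1}\}$. I would compare $\sum_{i=n_k-1}^{n_{k+1}-1} g_{i+1}(\tilde s_{i+1},\tilde s_i)$ with $\sum_{i=n_k-1}^{n_{k+1}-1} g_{i+1}(s_{i+1}^\ast,s_i^\ast)$ via the triangle inequality---viewing $g_{i+1}$ as the Euclidean norm of $(s_{i+1}-\delta_i)(x_{i+1},1)-(s_i-\delta_i)(x_i,1)$---to show that $\tilde{\mathbf{s}}$ achieves a cost no larger than $\mathbf{s}^\ast$. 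Uniqueness of the minimizer of \eqref{eq:slopecost2} (\cref{thm:unique}) then forces $\mathbf{s}^\ast = \tilde{\mathbf{s}}$, producing the claimed kink count.

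In \emph{Case 1} the candidate takes $\tilde s_{n_k}$ equal to either $s_{n_k-1}^\ast$ or $s_{n_k+1}^\ast$, whichever choice places the single resulting kink inside the $\tau$-interval allowed by \cref{lemma:sparserepresentation}. In \emph{Case 2} one merges the three potential jumps into one by prescribing equal consecutive interior slopes, with feasibility checked by a direct interpolation computation on the four-point convex window. In \emph{Case 3} one distributes exactly two jumps, using as intermediate slopes the data-induced values $\delta_{n_k}$ and $\delta_{n_k+1}$, which by \cref{lemma:unique} fall in the affine regions of $c_{n_k-1}$ and $c_{n_{k+1}}$.

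The main obstacle will be Case 3: verifying feasibility and cost optimality of the two-jump candidate requires a careful use of the affine-versus-strictly-convex dichotomy of \cref{lemma:unique}, which pins the boundary values $s_{n_k-1}^\ast$ and $s_{n_{k+1}}^\ast$ into regions where the neighboring cost functions are affine. Only there does the triangle-inequality comparison between the candidate and $\mathbf{s}^\ast$ tighten enough to close, and without this structural input one cannot rule out a genuinely three-kink optimizer---consistently with the counterexample of \cref{app:assumption} which shows that the statement fails once $n_{k+1} - n_k \geq 4$.
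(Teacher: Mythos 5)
Your general framing---pass to slope space via \cref{lemma:DP}, localise the slopes with \cref{lemma:sparse1}, and identify $\mathbf{s}^*$ with an explicit candidate by invoking uniqueness---is the same skeleton as the paper's proof, but two of your concrete steps do not survive contact with the constraint set $\cS$. First, the reduction ``number of kinks on the open interval $=$ number of strict jumps of $(s_i^*)$'' is not correct: a jump between $s_{n_k-1}^*$ and $s_{n_k}^*$ produces a kink at $\tau_{n_k-1}$, and when $s^*_{n_k}=\delta_{n_k-1}$ the formula of \cref{lemma:mapping} places $\tau_{n_k-1}$ exactly at the data point $x_{n_k-1}$, i.e.\ outside the open interval. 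The paper's count relies on precisely this boundary effect (``either $a_{n_k-1}=0$ or $\tau_{n_k-1}=x_{n_k-1}$''), so you cannot count jumps naively.

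Second, and more seriously, your Case 2 candidate is the wrong one. Prescribing equal consecutive interior slopes $s_{n_k}=s_{n_k+1}$ is feasible only if both equal $\delta_{n_k}$ (this is what the constraint $s_{n_k+1}\in S_{n_k}(s_{n_k})$ forces), and that configuration has one kink in $(x_{n_k-1},x_{n_k})$ and another in $(x_{n_k+1},x_{n_k+2})$---it is exactly the two-kink degenerate alternative that the paper must explicitly rule out, by a direct cost computation, and which only arises when the window straddles $x=0$. The true one-kink optimum is $s^*_{n_k}=\delta_{n_k-1}$, $s^*_{n_k+1}=\delta_{n_k+1}$: the interior slopes are unequal and the unique kink sits strictly inside $(x_{n_k},x_{n_k+1})$. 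Relatedly, a bare triangle-inequality comparison of $\sum_i g_{i+1}$ cannot decide between these configurations; the paper needs the exact Lipschitz constant of $g_{i+1}$ in its first argument together with the identity $g_{i+2}(\delta_{i+1},s)=\sqrt{1+x_{i+1}^2}\,|s-\delta_{i+1}|$, and a separate sign analysis around the junction index $i_0$. Your proposal omits both the sign case distinction and the elimination of the degenerate configuration, which is where the actual work of the proof lies.
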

\cref{lemma:case1} is written in this non-compact way since its proof shows separately (with similar arguments) the three cases.

\begin{proof}
1) Consider $n_{k+1}=n_k+1$. First assume that $x_{n_k}\geq 0$. \cref{lemma:sparse1} implies that $s^*_{n_{k}+1} \in [\delta_{n_k+1},\delta_{n_k}]$ and $s^*_{n_{k}} \in [\delta_{n_k-1},\delta_{n_k}]$. In particular, $s^*_{n_{k}} \leq\delta_{n_k}$, which implies that $s^*_{n_{k}+1}=\delta_{n_k}$. Similarly, $s^*_{n_{k}-1} \geq\delta_{n_k-1}$, which implies that $s^*_{n_{k}}=\delta_{n_k-1}$.
Using the mapping from \cref{lemma:mapping}, both values $s^*_{n_{k}}$ and $s^*_{n_{k}+1}$ yield that the associated function $f$ has exactly one kink on $(x_{n_k-1},x_{n_k+1})$, which is located at $x_{n_k}$. Similar arguments hold if $x_{n_k}< 0$. 

\bigskip

\noindent 2) Consider now $n_{k+1}=n_k+2$. 

First assume that $x_{n_{k}+2}< 0$. Thanks to \cref{lemma:sparse1}, we can show similarly to the case 1) that $s^*_{n_k+1}=\delta_{n_k+1}$.

Now assume that $x_{n_{k}+2}\geq 0$. Similarly to the case 1), $s^*_{n_{k}+2}=\delta_{n_k+1}$.
The minimisation problem of the slopes becomes on $s^*_{i+1}$ for $i=n_{k}$:
\begin{equation*}
s^*_{i+1} = \argmin_{s \in \tilde{S}} g_{i+1}(s,s^*_{i}) + g_{i+2}(\delta_{i+1},s),
\end{equation*}
where $\tilde{S}=S_{i}(s^*_{i})$ if $x_{i+1}\geq 0$, and $\tilde{S} = \{\delta_{i+1}\}$ otherwise.
Note that $g_{i+1}(s,s^*_{i})$ is $\sqrt{1+x_{i+1}^2}$-Lipschitz in its first argument, while $g_{i+2}(\delta_{i+1},s) = \sqrt{1+x_{i+1}^2}|s-\delta_{i+1}|$. Moreover, $s^*_{n_k}\in[\delta_{n_k-1},\delta_{n_k}]$.
As a consequence, either $x_{n_k+1}\geq 0$ and $s^*_{n_k}=\delta_{n_k}=s^*_{n_k+1}$; or $s^*_{n_k+1}=\delta_{n_k+1}$.

Symmetrically, when reasoning on the points $x_{n_{k}-1},x_{n_k}$:
\begin{itemize}
\item either $s^*_{n_k}=\delta_{n_k-1}$;
\item or ($x_{n_k}<0$ and $s^*_{n_k}=\delta_{n_k}=s^*_{n_k+1}$).
\end{itemize}

\medskip

There are thus two possible cases in the end:
\begin{itemize}
\item either ($s^*_{n_k}=\delta_{n_k-1}$ and $s^*_{n_k+1}=\delta_{n_k+1}$);
\item or ($s^*_{n_k}=\delta_{n_k}=s^*_{n_k+1}$ and $x_{n_k}<0\leq x_{n_k+1}$).
\end{itemize}

In the case where $x_{n_k}<0\leq x_{n_k+1}$, we also have $s^*_{n_k-1}=\delta_{n_k-1}$ and $s^*_{n_k+2}=\delta_{n_k+1}$. A straightforward computation then yields a smaller cost on the functions $g_i$ for the choice of slopes $s^*_{n_k}=\delta_{n_k-1}$ and $s^*_{n_k+1}=\delta_{n_k+1}$. 

As a consequence, $s^*_{n_k}=\delta_{n_k-1}$ and $s^*_{n_k+1}=\delta_{n_k+1}$ in any case. The mapping of \cref{lemma:mapping} then yields that $f$ has exactly one kink on $(x_{n_k-1},x_{n_k+2})$, which is located in $(x_{n_k}, x_{n_k+1})$. Indeed, we either have $a_{n_k-1}=0$ or $\tau_{n_k-1}=x_{n_k-1}$; similarly either $a_{n_k+1}=0$ or $\tau_{n_k+1}=x_{n_k+2}$.

\bigskip

\noindent 3) Consider now $n_{k+1}=n_k+3$. Similarly to the case 2), we have both
\begin{align*}
&\begin{cases}
\text{either }s^*_{n_k+2}=\delta_{n_k+2}\\
\text{or } (s^*_{n_k+1}=\delta_{n_k+1}=s^*_{n_k+2} \text{ and }x^*_{n_k+2} \geq 0)
\end{cases}
\\ \\ \text{and} \qquad
&\begin{cases}
\text{either }s^*_{n_k}=\delta_{n_k-1}\\
\text{or } (s^*_{n_k}=\delta_{n_k}=s^*_{n_k+1} \text{ and } x_{n_k}<0).
\end{cases}
\end{align*}
When considering all the possible cases, the mapping of \cref{lemma:mapping} implies that $f$ has exactly two kinks on $(x_{n_k-1},x_{n_k+3})$, which are located in $[x_{n_k}, x_{n_k+2}]$.
\end{proof}

\subsection{Adapted analysis for the case $x_1\geq 0$}\label{app:positive}

This section explains how to adapt the analysis of this section to the easier case where all $x$ are positive. \cref{lemma:unique} holds under the exact same terms (but its second part is useless) in that case. From there, the proof of \cref{thm:unique} consists in just showing the uniqueness of the minimisation problems for any $\tilde{\bs}\in\cS$:
\begin{gather*}
\min_{s_{i_0}\in \R}c_{i_0}(s_{i_0})\\
\min_{s_{i+1}\in S_{i}(\tilde{s}_{i})} g_{i+1}(s_{i+1},\tilde{s}_i) + c_{i+1}(s_{i+1}) \quad \text{for any } i \in \{i_0, \ldots, n-1\}.
\end{gather*}
The unique solution of the first problem is $\delta_{i_0}$ thanks to \cref{lemma:unique}, while same arguments as in \cref{app:proofunique} hold for the second problem.

For the proof of \cref{lemma:sparse1}, the exact same arguments as in \cref{app:sparse1proof} hold for any $i\geq i_0+1$. For $i=i_0=1$, it is obvious in that case that $s^*_1 = \delta_1$, leading to \cref{lemma:sparse1}.

Finally, the proof of \cref{thm:recovery} follows the same lines when $x_1\geq 0$.

\subsection{Proof of \cref{coro:margin}}
%

\begin{proof}[Proof of \cref{coro:margin}]
For classification, the natural partition to define is the following:
\begin{gather}
n_1 = 1 \text{ and for any } k\geq 0 \text{ such that } n_{k}<n+1, \notag\\
n_{k+1} = \min \left\{ j \in \{n_{k}+1,\ldots,n\} \mid y_{n_k} \neq y_{j} \right\} \cup \{n+1\}\label{eq:partitionclassif}.
\end{gather}
This partition splits the data so that for any $k$, $y_i$ has a the same value for $i\in\{n_k,n_{k+1}-1\}$.
Denote $K$ the number of $n_k$ defined in \cref{eq:partitionclassif}, i.e., $n_{K}=n+1$.
From there, by simply noting that any margin classifier has at least a kink in $[x_{n_k},x_{n_{k+1}})$ for $k\in[K-2]$:
\begin{equation*}
\min_{\substack{f\\\forall i\in[n], y_i f(x_i)\geq 1}} \|f''\|_0 = K-2.
\end{equation*}

\medskip

Similarly to the proof of \cref{lemma:sparserepresentation}, we can first show the existence of a minimum.\footnote{Proving that the sequence $(a_j,b_j)$ is bounded is here a bit more tricky. Either the data is linearly separable, in which case the minimum is $0$, or the data is not linearly separable. When the data is not linearly separable, then $(a_j,b_j)$ is necessarily bounded, since $(\mu_j,a_j,b_j)$ would behave as a linear classifier for arbitrarily large $(a_j,b_j)$.}
Let us now consider $f$ a minimiser of 
\begin{equation}\label{eq:margin2}
\min_{\substack{f\\\forall i\in[n], y_i f(x_i)\geq 1}} \TV{\sqrt{1+x^2}f''}.
\end{equation}
Define the set $$S=\big\{n_k\mid k\in\{2,\ldots,K-1\}\}\cup\{n_{k}-1\mid k\in\{2,\ldots,K-1\}\big\}.$$
By continuity of $f$, we can choose an alternative training set $(\tilde{x}_i,\tilde{y}_i)$ satisfying:
\begin{gather*}
\tilde{x}_{i} \in [x_{n_k-1}, x_{n_k}] \quad \text{for any }i\in\{n_k-1,n_k\},\\
y_i = f(\tilde{x}_i) \quad\text{for any }i\in S.
\end{gather*}
Then, a direct application of \cref{thm:unique} yields that the minimisation problem
\begin{equation}\label{eq:regclassif}
\min_{\substack{\tilde{f}\\\forall i\in S, y_i= \tilde{f}(\tilde{x}_i)}} \TV{\sqrt{1+x^2}\tilde{f}''},
\end{equation}
admits a unique minimiser, that we denote $f_{\mathrm{reg}}$. But also note that this unique minimiser is also in the constraint set of \cref{eq:margin2} thanks to \cref{lemma:sparse1}, so that 
\begin{equation*}
\TV{\sqrt{1+x^2}f_{\mathrm{reg}}''}\geq \TV{\sqrt{1+x^2}f''}.
\end{equation*}
However, since $f$ is in the constraint set of \cref{eq:regclassif}, we actually have an equality, and by unicity of the minimiser of \cref{eq:regclassif},
\begin{equation*}
f_{\mathrm{reg}}=f.
\end{equation*}
Moreover, it is easy to check that \cref{ass:slopes} holds for the data $(\tilde{x}_i,y_i)_{i\in S}$, with $n_{k+1}=n_k+2$. As a consequence, \cref{thm:recovery} implies that the minimiser of \cref{eq:regclassif} is among the sparsest interpolators for the set $(\tilde{x}_i,y_i)_{i\in S}$, i.e. it exactly counts $K-2$ kinks.
This then implies that $\|f''\|_0=K-2$, so that \begin{equation}\label{eq:classifinclusion}
\argmin_{\substack{f\\\forall i\in[n], y_i f(x_i)\geq 1}} \bar{R}_1(f) \subset \argmin_{\substack{f\\\forall i\in[n], y_i f(x_i)\geq 1}} \|f''\|_0.
\end{equation}
%
\end{proof}
\end{document}